\documentclass{article}

\usepackage{microtype}
\usepackage{graphicx}
\usepackage{subcaption}
\usepackage{booktabs} 

\usepackage{hyperref}

\usepackage[preprint]{icml2026}

\usepackage{amsmath}
\usepackage{amsthm}
\usepackage{graphicx}
\usepackage[utf8]{inputenc}
\usepackage{amssymb}
\usepackage{amsfonts}
\usepackage{stmaryrd}
\usepackage{ragged2e}
\usepackage{amsmath}
\usepackage{epsfig}
\usepackage{color}
\usepackage{dsfont}
\usepackage{tikz}
\usepackage{bbold}
\usepackage{ytableau}
\usepackage{algorithm}
\usepackage{algorithmic}
\usepackage{mathtools}
\usetikzlibrary{positioning}

\DeclareMathOperator*{\argmin}{argmin}

\newcommand{\n}{\llbracket n \rrbracket}

\newcommand{\Sn}{\mathfrak{S}_n}

\newcommand{\crd}{\textsc{CRD}}

\usepackage[capitalize,noabbrev]{cleveref}
\usepackage[english]{babel}
\theoremstyle{plain}

\newtheorem{proposition}{Proposition}
\newtheorem{lemma}{Lemma}

\theoremstyle{definition}
\newtheorem{definition}{Definition}

\theoremstyle{remark}
\newtheorem{remark}{Remark}

\begin{document}

\twocolumn[\icmltitle{Beyond Kemeny Medians: Consensus Ranking Distributions\\ Definition, Properties and Statistical Learning}
\icmlsetsymbol{equal}{*}
\begin{icmlauthorlist}
\icmlauthor{Stephan Cl\'emen\c{c}on}{yyy}
\icmlauthor{Ekhine Irurozki}{yyy}
\end{icmlauthorlist}
\icmlaffiliation{yyy}{LTCI, T\'el\'ecom Paris, Institut Polytechnique de Paris}
\icmlcorrespondingauthor{Stephan Cl\'emen\c{c}on}{stephan.clemencon@telecom-paris.fr}
\icmlcorrespondingauthor{Ekhine Irurozki}{irurozki@telecom-paris.fr}
\icmlkeywords{Machine Learning, ICML}
\vskip 0.3in
]


\printAffiliationsAndNotice{}  



\begin{abstract}
  In this article we develop a new method for summarizing a ranking distribution, \textit{i.e.} a probability distribution on the symmetric group $\mathfrak{S}_n$, beyond the classical theory of consensus and Kemeny medians. Based on the notion of \textit{local ranking median}, we introduce the concept of \textit{consensus ranking distribution} ($\crd$), a sparse mixture model of Dirac masses on $\mathfrak{S}_n$, in order to approximate a ranking distribution with small distortion from a mass transportation perspective.  We prove that by choosing the popular Kendall $\tau$ distance as the cost function, the optimal distortion can be expressed as a function of pairwise probabilities, paving the way for the development of efficient learning methods that do not suffer from the lack of vector space structure on $\mathfrak{S}_n$.  In particular, we propose a top-down tree-structured statistical algorithm that allows for the progressive refinement of a CRD based on ranking data, from the Dirac mass at a Kemeny median at the root of the tree to the empirical ranking data distribution itself at the end of the tree's exhaustive growth. In addition to the theoretical arguments developed, the relevance of the algorithm is empirically supported by various numerical experiments.      
\end{abstract}

\section{Introduction}

 Designing machine learning algorithms tailored to preference data is key to optimizing the performance of recommendation systems and search engines. These tools, which are indispensable in the digital age, continuously use and generate ever-increasing volumes of data.
The nature of the data feeding into or produced by these algorithms, which mainly consists of (partial) rankings expressing observed or predicted preferences, continues to pose a methodological challenge. Given that the number $n!$ of possible rankings explodes with the number $n\geq 1$ of instances to be ordered,  it is crucial to develop dedicated dimensionality reduction methods in order to represent/analyze ranking data in an efficient manner. Regardless of the type of task being considered (supervised, unsupervised), the vast majority of machine learning algorithms based on multivariate data rely on the calculation of statistical quantities such as (pairwise) averages or linear combinations of features, effectively representing (location and dispersion) properties of the data distribution. Summarizing ranking distributions, \textit{i.e.} probability laws on the group of permutations $\mathfrak{S}_n$, is anything but simple. Given the absence of a vector space structure on $\mathfrak{S}_n$, extending basic concepts such as mean/median or variance, ubiquitous in multivariate data analysis, to preference data poses mathematical and computational difficulties. Permutations can be embedded in the Birkhoff polytope (\textit{i.e.} the convex hull of the set of permutation matrices), see \cite{clemenccon2010kantorovich} and \cite{LindermanMena2017reparameterizing}, or in a feature space by means of the kernel trick, see \cite{KGD18}. However, the representation of the original ranking distribution thus provided cannot be easily interpreted, the coordinates of such embeddings being highly correlated in general. Although it is obviously possible, based on the prior choice of a metric on $\mathfrak{S}_n$ (most often the Kendall $\tau$ distance, \textit{i.e.} the number of discording pairs), to define a barycentric permutation $\sigma^*$ in $\mathfrak{S}_n$, \textit{i.e.} a ranking median, and a related dispersion measure (the average distance to the median namely) from a collection of rankings, its calculation can prove difficult. This problem, known as \textit{consensus ranking} or \textit{ranking aggregation} and referring to the pioneering work of Condorcet in social choice theory, continues to be the subject of intense interest. References are too numerous to be listed exhaustively, one may refer to \textit{e.g.}  \cite{dwork2001rank}, \cite{procaccia2016optimal}, \cite{JKSO16} or \cite{JKS16} among others. In practice,  ranking medians provide a summary of a statistical population of rankings that is far too crude in many cases, and rigid statistical modeling using classical probability laws on $\mathfrak{S}_n$ (\textit{e.g.} Mallows, Placket-Luce) reaches its limits as soon as ranking data exhibit significant heterogeneity.

In this article, we develop a new framework for approximating/estimating a ranking distribution $P$, \textit{i.e.} the law of a generic r.v. $\Sigma=(\Sigma(1),\; \ldots,\; \Sigma(n))$ defined on a probability space $(\Omega,\; \mathcal{F},\; \mathbb{P})$ and valued in $\mathfrak{S}_n$, extending the concept of Kemeny medians, \textit{i.e.} ranking medians when the metric chosen on $\mathfrak{S}_n$ is the Kendall $\tau$ distance. Our approach is inspired by \textit{local averaging methods}, which are widely used in multivariate data analysis and underlie many popular methods and algorithms such as $K$-means or histogram estimation.  It is based on the concept of \textit{local median} which, combined with a partition $\mathcal{P}$ of the symmetric group and associated class probabilities $\alpha_{\mathcal{C}}=\mathbb{P}\{\Sigma \in \mathcal{C}\}$ with $\mathcal{C}\in \mathcal{P}$, allows us to define what we call a \textit{consensus ranking distribution} ($\crd$). Namely, a (parsimonious) model $P^*_{\mathcal{P}}=\sum_{\mathcal{C}\in \mathcal{P}}\alpha_{\mathcal{C}}\delta_{\sigma^*_{\mathcal{C}}}$ of a mixture of Dirac measures centred on local medians $\sigma^*_{\mathcal{C}}$ (\textit{i.e.} Kemeny medians of $\Sigma$'s conditional distribution given $\Sigma\in \mathcal{C}$ with $\mathcal{C}\in \mathcal{P}$) that approximates the ranking distribution $P$ of interest better than a simple point mass $\delta_{\sigma^*}$ at a global median $\sigma^*$ and with much less uncertainty than the raw empirical version $\widehat{P}_N=(1/N)\sum_{k\leq N}\delta_{\Sigma_k}$ of $P$ based on $N<<n!$ independent copies $\Sigma_1,\; \ldots,\; \Sigma_N$ of the r.v. $\Sigma$ would. More specifically, we demonstrate that the CRD $P^*_{\mathcal{P}}$ has attractive properties  w.r.t. approximation of the distribution $P$, provided that $\mathcal{P}$ is chosen appropriately, when distortion is measured by the optimal transport metric whose cost function is the Kendall $\tau$ distance and prove a bound for the distortion that can be expressed using the local pairwise probabilities $\mathbb{P}\{ \Sigma(i)<\Sigma(j)\mid \Sigma\in \mathcal{C}\}$ with $\mathcal{C}\in \mathcal{P}$ and $1\leq i <j\leq n$ only (just like the local Kemeny medians in certain situations). Since these quantities can be accurately estimated when $\mathcal{P}$'s cells $\mathcal{C}$ are large enough, \textit{i.e.} when the $\alpha_{\mathcal{C}}$'s are large enough, this allows us to design a a statistical learning algorithm to recover, from ranking data, a $\crd$ estimate balancing  distortion bias against uncertainty/variability.
Precisely, a recursive partioning technique based on binary decision trees operating on $\mathfrak{S}_n$ (the root node) with splitting rules of the form '$\sigma(i)<\sigma(j)$' defined by pairs $(i,j)$ of instances, we call \textsc{COAST} (COnsensus rAnking deciSion Trees). Beyond theoretical arguments, numerical experiments are presented, providing empirical evidence of the relevance of the \textsc{COAST} algorithm. 

The article is structured as follows. In section \ref{sec:background}, basic concepts and results pertaining to (statistical) consensus ranking theory are briefly recalled. The rigorous definition of consensus ranking distribution is given in section \ref{sec:crd}, together with the main (statistical) properties, while the \textsc{COAST} algorithm is introduced and (empirically) analyzed in section \ref{sec:algorithm}. Some concluding remarks are eventually collected in section \ref{sec:conclusion}. Technical details are deferred to the Appendix.
\section{Background and Preliminaries}\label{sec:background}

We introduce the main notations and recall basic notions in consensus ranking theory for clarity's sake. The framework developed in the next section relies on these concepts and can be seen as a natural extension. Throughout the paper, the indicator function of any event $\mathcal{E}$ is denoted by $\mathbb{I}\{\mathcal{E}  \}$, the Dirac mass at any point $a$ by $\delta_a$, the cardinality of any finite subset $A$ by $\#A$, the support of a ranking distribution $P$ by $\text{Supp}(P)$.
Preferences expressed among a set of $n\geq 1$ items, indexed by $i\in \n:=\{1,\; \ldots,\; n\}$ say, can be viewed (in the absence of ties for simplicity) as a permutation (full ranking) $\sigma\in \Sn$ that maps any item $i$ to its rank $\sigma(i)$. Certain marginals of the law $P$ of a random permutation $\Sigma$ on $\Sn$ are of specific interest. In particular, for a pair of items $(i,j) \in \n^2$, the quantity $p_{i,j}=\mathbb{P}\{\Sigma(i)<\Sigma(j)\}$ for $\Sigma \sim P$ is referred to as a pairwise marginal of $P$ and indicates the probability that item $i$ is preferred to (ranked lower than) item $j$. Note that $p_{i,j}+p_{j,i}=1$.

\subsection{Consensus Ranking and Kemeny Medians}

\textit{Consensus ranking}, also referred to as \textit{ranking aggregation},is a problem than can be easily stated in an informal manner. Let $\sigma_{1},\; \ldots,\; \sigma_{N}$ be a collection of $N\geq 1$ (full) rankings on $\n$. The goal is to find a ranking $\sigma^*\in \Sn$ that summarizes it best. A rigorous and quantitative version of it can be formulated using a metric $d(.\,,\,.)$ on $\Sn$ as the minimization problem:
\begin{equation}	\label{eq:ranking_aggregation}
	\min_{\sigma\in \mathfrak{S}_n}\sum_{k=1}^N d(\sigma,\sigma_{k}),
\end{equation}
Solutions to \eqref{eq:ranking_aggregation} always exist of course (since $\#\Sn<+\infty$) but are not necessarily unique. Such barycentric permutations are called \textit{consensus/median ranking}. Various distances have been considered in the consensus ranking literature, \textit{e.g.} Spearman $\rho$, Spearman footrule, Hamming, see Chapter 11 in \cite{Deza}. The most widely documented version of the problem is Kemeny ranking aggregation \cite{Kemeny59}, that corresponds to the case where the metric chosen is the Kendall $\tau$ distance, \textit{i.e.} the number of pairwise disagreements:
\begin{equation*}\label{eq:Kendall_tau}
d_{\tau}(\sigma,\sigma')=\sum_{i<j}\mathbb{I}\{(\sigma(i)-\sigma(j)) (\sigma'(i)-\sigma'(j))<0\}
\end{equation*}
for all $(\sigma,\sigma')\in \mathfrak{S}_n^2$. From a statistical learning perspective, \eqref{eq:ranking_aggregation} can be viewed as a $M$-estimation problem if one assumes that the rankings to be aggregated/summarized is composed of $N\geq 1$ independent copies $\Sigma_1,\; \ldots,\; \Sigma_N$ of a generic r.v. $\Sigma$, defined on a probability space $(\Omega,\; \mathcal{F},\; \mathbb{P})$ and distributed according to an unknown probability distribution $P$ on $\mathfrak{S}_n$ (\textit{i.e.} $P(\sigma)=\mathbb{P}\{ \Sigma=\sigma \}$ for all $\sigma\in \mathfrak{S}_n$). Similar to a median of a real valued r.v. $Z$, which is any scalar closest to $Z$ in the $L_1$ sense, a (true) ranking median of distribution $P$ w.r.t. a certain metric $d$ on $\mathfrak{S}_n$ is any solution of
\begin{equation}\label{eq:median_pb}
\min_{\sigma \in \mathfrak{S}_n}L_P(\sigma),
\end{equation}
where the \textit{ranking risk} $L_P(\sigma)=\mathbb{E}_{\Sigma \sim P}[d(\Sigma,\sigma)  ]
$ is the expected distance between any permutation $\sigma$ and the r.v. $\Sigma$. Statistical ranking aggregation consists in recovering a solution $\sigma^*$ of this minimization problem, plus an estimate of this minimum $L^*_P=L_P(\sigma^*)$, as accurately as possible, from the training examples $\Sigma_1,\; \ldots,\; \Sigma_N$. A ranking median $\sigma^*$ can be viewed as a location parameter, a central value, for the ranking distribution $P$, and the quantity $L^*_P$ as a dispersion measure.
Like \eqref{eq:ranking_aggregation}, the minimization problem \eqref{eq:median_pb} has always a solution but can be multimodal.
A major obstacle is of course that the ranking risk $L_P(.)$ is unknown, as the ranking distribution $P$. In absence of specific model assumptions about $P$ (such as \textit{e.g.} the Mallows model), when the ranking aggregation task is based on the $\Sigma_k$'s only, the Empirical Risk Minimization (ERM) paradigm, see \cite{Vapnik}, encourages us to solve the statistical version of the problem obtained by replacing $P$ in \eqref{eq:median_pb} with
$\widehat{P}_N$: $\min_{\sigma\in \mathfrak{S}_n}\widehat{L}_N(\sigma)$ with
\begin{equation}\label{eq:emp_risk}
\widehat{L}_N(\sigma)=(1/N)\sum_{k=1}^N d(\Sigma_k,\sigma)=L_{\widehat{P}_N}(\sigma).
\end{equation}

The performance of empirical minimizers $\widehat{\sigma}_N$
 is studied in \cite{CKS17}: nonasymptotic bounds of order $O_{\mathbb{P}}(1/\sqrt{N})$ for the excess of risk $L_P(\widehat{\sigma}_N)-L^*_P$ in probability/expectation have been proved and shown to be sharp in the minimax sense, when $d=d_{\tau}$. If ranking aggregation \eqref{eq:ranking_aggregation} is NP-hard in general, see \textit{e.g.} \cite{Hudry08}, exact solutions, called \textit{Kemeny medians}, can be explicited  in the Kendall $\tau$ case, when the pairwise probabilities $p_{i,j}$, $1\leq i\neq j\leq n$, fulfill the \textit{stochastic transitivity} property.
 \begin{definition}\label{def:stoch_trans} Let $P$ be a probability law on $\mathfrak{S}_n$.
 It is said to be (weakly) stochastically transitive iff
$\forall (i,j,k)\in \n^3$,   $$p_{i,j}\geq 1/2 \text{ and } p_{j,k}\geq 1/2 \; \Rightarrow\; p_{i,k}\geq 1/2.
 $$
 If, in addition, $p_{i,j}\neq 1/2$ for all $i<j$, $P$ is said to be strictly stochastically transitive (SST).
 \end{definition}
Originally introduced in Psychology, see \cite{davidson1959experimental} and \cite{fishburn1973binary}, these conditions were exploited more recently for ranking based on pairwise comparisons, refer to \cite{shah2015stochastically} or \cite{shah2015simple}). They are satisfied by the vast majority of popular parametric models such as Mallows \cite{Mallows57} or Bradley-Terry-Luce-Plackett \cite{Plackett75} models for instance. It is shown in \cite{CKS17} (see Theorem 5 therein) that, when stochastic transitivity is fulfilled, the ensemble of Kemeny medians is
$\{\sigma\in \mathfrak{S}_n:\; (p_{i,j}-1/2)(\sigma(j)-\sigma(i ))>0 \text{ for all } i<j \text{ s.t. } p_{i,j}\neq 1/2  \}$, and
 the related dispersion measure, denoted by $V_P:=L^*_P$ for clarity, is given by
 \begin{equation*}\label{eq:inf}
 V_P=\sum_{i<j}\min\{p_{i,j},1-p_{i,j}  \}=\sum_{i<j}\{1/2-\vert  p_{i,j}-1/2\vert\}.
 \end{equation*}
  \begin{remark}{\sc (Ranking variability)} Except in the situation recalled above, calculating $L^*_P$ is difficult. Since an expectation is generally easier to calculate/approximate than a minimum, an alternative dispersion measure is given by:
  \begin{equation}
V'_P=\mathbb{E}[d(\Sigma,\Sigma')]/2,
  \end{equation}
  where $\Sigma'$ means an independent copy of $\Sigma$. Note that $V'_P\leq V_P \leq 2V'_P$ and, when $d=d_{\tau}$, $V'_P=\sum_{i<j}p_{i,j}(1-p_{i,j})$.
  \end{remark}
 When $P$ is SST, the Kemeny median $\sigma^*_P$ is unique and matches the Copeland ranking:
 \begin{equation}\label{eq:sol_SST}
\forall i\in \n,\;\;  \sigma^*_P(i)=1+\sum_{j\neq i}\mathbb{I}\{p_{i,j}<1/2  \}.
 \end{equation}
If $P$ additionally fulfills the low-noise condition {\bf NA}$(h)$, defined for $h\in (0,1/2)$ by
\begin{equation}\label{eq:hyp_margin0}
 \min_{i<j}\left\vert p_{i,j}-1/2 \right\vert \ge h,
\end{equation}
statistical Kemeny ranking aggregation is a learning task that can be completed with a fast rate of order $O_{\mathbb{P}}(1/N)$, see Proposition 14 in \cite{CKS17}. Condition 
{\bf NA}$(h)$ can be considered analogous to low-noise conditions in binary classification, see \cite{KB05}, and provides a control of ranking variability, insofar as it implies: $V_P\leq \binom{n}{2}(1/2-h)$.

\subsection{Beyond Ranking Medians}\label{subsec:beyond}

Although very popular, once resolved, the consensus problem offers only a very crude summary of a ranking distribution. Other quantities have recently been defined in an attempt to describe certain properties of a ranking distribution more exhaustively, and issues relating to their statistical estimation have been explored. Because they easily relate to the concept we develop in the following section as will be seen later, we mention two such notions here.

\noindent {\bf Ranking depth.} Ranking medians should be naturally considered as measures of central tendency in the ranking context. Following on from this idea, the concept of \textit{ranking depth} has been introduced in \cite{pmlr-v151-goibert22a}:
\begin{equation*}\label{eq:ranking_depth}
\forall \sigma \in \mathfrak{S}_n,\;\; D_P(\sigma)=\max_{(\sigma_1,\sigma_2)\in \mathfrak{S}_n^2}d(\sigma_1,\sigma_2)-L_P(\sigma).
\end{equation*}
Equipped with this notion for quantifying the centrality of any $\sigma\in \mathfrak{S}_n$, ranking medians are the deepest permutations and a center-outward ordering on the symmetric group is defined, which allows the concepts of quantiles, ranks and the powerful statistical methods in multivariate data analysis based on them to be directly extended to ranking data analysis. Refer also to \cite{goibert2023robust} for applications to robust ranking.

\noindent {\bf Bucket ranking distribution.} As noted in \cite{achab2019bucket}, $L_P(\sigma)$ can be seen as a distance between the ranking distributions $P$ and $\delta_{\sigma}$ when considering the Wasserstein metric defined below.

\begin{definition}
Let $d:\Sn^2\rightarrow \mathbb{R}_+$ be a distance on $\Sn$. The Wasserstein metric with $d$ as cost function between two probability distributions $P$ and $P'$ on $\mathfrak{S}_n$ is given by:
\begin{equation} \label{eq:metric}
W_{d}\left(P,P'  \right)=\inf_{\Sigma\sim P,\; \Sigma' \sim P' }\mathbb{E}\left[ d(\Sigma,\Sigma') \right],
\end{equation}
where the infimum is taken over all possible couplings\footnote{By coupling of two probability distributions $Q$ and $Q'$ is meant a random pair $(U,U')$ s.t. the marginal distributions of $U$ and $U'$ are $Q$ and $Q'$.} $(\Sigma,\Sigma')$ of $(P,P')$.
\end{definition}
Consensus ranking can be thus viewed as a radical \textit{dimensionality reduction} procedure, summarizing $P$ by its closest Dirac measure $\delta_{\sigma^*_P}$ w.r.t. to the probability metric above leading to the distortion $L^*_P=W_d(P,\delta_{\sigma^*_P})$. In \cite{achab2019bucket}, a more general framework for dimensionality reduction has been developed, which can be seen as an extension of consensus ranking. Under the initial hypothesis that the set $\n$ of instances to be ranked can be partitioned into \textit{buckets} such that items in a certain bucket are either all ranked higher or else all ranked lower than items in another bucket with high probability, it permits to define a specific low dimensional approximation $\tilde{P}$ of a ranking distribution $P$, referred to as a \textit{bucket ranking distribution}. Special attention is paid to the situation where the distortion is measured by the Kendall Wasserstein metric $W_{d_{\tau}}(\tilde{P},P)$, since the minimum distortion can be then expressed as a function of the pairwise probabilities $p_{i,j}$ only (see Proposition 5 in \cite{achab2019bucket}) and bucket ranking distributions with small distortion can be directly built by 'randomizing' empirical Kemeny medians, see section 3 in \cite{achab2019bucket} for further details.

The concepts mentioned above are relevant in the unimodal case especially. The definitions and algorithms presented and analyzed in the next sections are part of this line of research, with the notable exception that they relate to multimodal and/or spread-off ranking distributions $P$ and crucially require the introduction of a notion of localization.

\section{Consensus Ranking Distributions} \label{sec:crd}

In this section we develop a framework to approximate a possibly multimodal ranking distribution $P$, whose variability $L^*_P$ is too large for an approximation by a simple Dirac mass $\delta_{\sigma^*_P}$ at a median to be satisfactory. After introducing the concept of consensus (ranking) distribution (CRD in abbreviated form) and describing the inherent trade-off between dimension reduction and distortion minimization, we then study the performance of statistical versions of CRD's from the perspective of empirical risk minimization. 

\subsection{Local Ranking Medians and Partitions}\label{subsec:crd}
The statistical recovery of a ranking distribution $P$ based on an i.i.d. sample of $N\geq 1$ observations $\Sigma_1,\; \ldots,\; \Sigma_N$ is challenging because its dimensionality is $n!-1$ in absence of restrictive parametric model assumptions:
$P=\sum_{\sigma\in\mathfrak{S}_n}P(\sigma)\delta_{\sigma}$ with $\sum_{\sigma\in\mathfrak{S}_n}P(\sigma)=+1$. Since $n!$ is generally very large compared to the ranking dataset size $N$ in practice, the raw empirical distribution $\widehat{P}_N=\sum_{\sigma\in\mathfrak{S}_n}\widehat{P}_N(\sigma)\delta_{\sigma}$ does not provide an accurate estimation of $P$. To avoid the rigidity of the approach consisting of specifying a  statistical model with a low dimension relative to $n!-1$ (typically of order $n$ such as the Mallows or Plackett-Luce models, see \cite{Marden96}), 
 we develop an alternative inspired by \textit{local averaging} techniques used for nonparametric statistics of multivariate data. The approximation/estimation methodology we propose is based on the localization of the ranking median concept defined below.

\begin{definition} {\sc (Local consensus)} Let $\Sigma$ be a random permutation of $\n$ with law $P$ and $\mathcal{C}\subset \mathfrak{S}_n$ s.t. $P(\mathcal{C})>0$. Denote $\Sigma$'s conditional distribution given $\Sigma\in \mathcal{C}$ by $P_{\mathcal{C}}$. By local median $\sigma^*_{\mathcal{C}}\in \mathfrak{S}_n$ of the distribution $P$ on region $\mathcal{C}$ is meant any ranking median of $P_{\mathcal{C}}$: $\mathbb{E}[d(\Sigma, \sigma^*_{\mathcal{C}})\mid \Sigma\in \mathcal{C}]=V(\mathcal{C})$, denoting $P$'s local variability on $\mathcal{C}$ by
$$
V(\mathcal{C}):=\min_{\sigma\in \mathfrak{S}_n}L_{P_{\mathcal{C}}}(\sigma)=V_{P_{\mathcal{C}}}.
$$
\end{definition}
Note that a local median $\sigma^*_{\mathcal{C}}$ of $P$ on a region $\mathcal{C}\subset \mathfrak{S}_n$ does not necessarily belong to $\mathcal{C}$ (see the example in the Supplementary Material). We therefore propose defining a probabilistic notion of consensus, we call consensus ranking distribution, as a convex combination of Dirac masses at local medians on regions forming a partition of $\mathfrak{S}_n$.
\begin{definition} {\sc (Consensus distribution)} Let $\mathcal{P}$ be a partition\footnote{Recall that by partition $\mathcal{P}$ of a set $E$ is meant any collection of non empty pairwise disjoint subsets $\mathcal{C}$ of $E$ s.t. $\cup_{\mathcal{C}\in \mathcal{P}}\mathcal{C}=E$.} of $\mathfrak{S}_n$ and $P$ be a ranking distribution. A consensus distribution of $P$ relative to $\mathcal{P}$ is any ranking distribution of the form
\begin{equation}\label{eq:def_crd}
P_{\mathcal{P}}=\sum_{\mathcal{C}\in \mathcal{P}}P(\mathcal{C})\delta_{\sigma^*_{\mathcal{C}}},
\end{equation}
where $\sigma^*_{\mathcal{C}}$ is a local median of $P$ on $\mathcal{C}\in \mathcal{P}$.
\end{definition}
The approximant \eqref{eq:def_crd} is a generalization of the two 'extreme' cases constituted by the Dirac mass $P(\mathfrak{S}_n)\delta_{\sigma^*_P}$ at a median, which is of minimal dimension but has maximum distortion, and by the discrete law $P=\sum_{\sigma\in \mathfrak{S}_n}P(\sigma)\delta_{\sigma}$ itself which has zero distortion but is of dimension $n!-1$. The choice of a consensus distribution is therefore a trade-off: the partition $\mathcal{P}$ must be fine enough so that the distortion of $P_{\mathcal{P}}$ is sufficiently small, while at the same time its cells $\mathcal{C}$ must be large enough for the statistics $\widehat{P}_N(\mathcal{C})$ based on $N\geq 1$ observations to be accurate estimates of $P(\mathcal{C})$, resulting in a reduced dimensionality of \eqref{eq:def_crd}. The result below illustrates the balance, by providing a control of the distortion of \eqref{eq:def_crd} by the intra-cell ranking variability.

\begin{proposition}\label{prop:bound}{\sc (Distortion bound)} Let $\mathcal{P}$ be any partition of $\mathfrak{S}_n$ s.t. $P(\mathcal{C})>0$ for all $\mathcal{C}\in \mathcal{P}$.
We have:
\begin{eqnarray}
W_{d}(P,P_{\mathcal{P}})&\leq& \mathcal{E}_P(\mathcal{P}):=\sum_{\mathcal{C}\in \mathcal{P}}P(\mathcal{C})V(\mathcal{C})\label{eq:bound1}\\ \notag
&\leq& 2\mathcal{E}'_P(\mathcal{P}):=2\sum_{\mathcal{C}\in \mathcal{P}}P(\mathcal{C})V'(\mathcal{C}),
\end{eqnarray}
where we set $V'(\mathcal{C})=V'_{P_{\mathcal{C}}}$.
\end{proposition}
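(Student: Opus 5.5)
The first inequality follows from exhibiting one explicit coupling of $P$ and $P_{\mathcal{P}}$, since $W_d$ is an infimum over couplings. Let $\Sigma\sim P$ and set
\[
\Sigma'=\sum_{\mathcal{C}\in\mathcal{P}}\mathbb{I}\{\Sigma\in\mathcal{C}\}\,\sigma^*_{\mathcal{C}},
\]
so that $\Sigma'$ is the local median of the (unique, because $\mathcal{P}$ is a partition) cell containing $\Sigma$. Then $\mathbb{P}\{\Sigma'=\sigma\}=\sum_{\mathcal{C}:\sigma^*_{\mathcal{C}}=\sigma}P(\mathcal{C})$, which is exactly the mass that $P_{\mathcal{P}}$ puts on $\sigma$. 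This holds even if several cells share the same local median, because the Dirac masses in \eqref{eq:def_crd} simply add up. Hence $(\Sigma,\Sigma')$ is a coupling of $(P,P_{\mathcal{P}})$, and
\[
W_d(P,P_{\mathcal{P}})\leq \mathbb{E}[d(\Sigma,\Sigma')]=\sum_{\mathcal{C}\in\mathcal{P}}P(\mathcal{C})\,\mathbb{E}[d(\Sigma,\sigma^*_{\mathcal{C}})\mid \Sigma\in\mathcal{C}]=\sum_{\mathcal{C}\in\mathcal{P}}P(\mathcal{C})V(\mathcal{C}).
\]
The middle equality conditions on the cell, which is allowed since $P(\mathcal{C})>0$ for every $\mathcal{C}\in\mathcal{P}$. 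The last equality is the defining property of a local median in the local consensus definition.

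For the second inequality it suffices to prove $V(\mathcal{C})\leq 2V'(\mathcal{C})$ cell by cell, i.e. the inequality $V_Q\leq 2V'_Q$ from the ranking variability remark applied to $Q=P_{\mathcal{C}}$. I will prove it directly for any distribution $Q$ with $\Sigma,\Sigma'$ i.i.d. $\sim Q$. Since $\min_\sigma L_Q(\sigma)\leq L_Q(\sigma)$ for every $\sigma$, the minimum is at most any average over $\sigma$:
\[
V_Q\leq \mathbb{E}[L_Q(\Sigma')]=\mathbb{E}[d(\Sigma,\Sigma')]=2V'_Q.
\]
The first equality uses the independence of $\Sigma$ and $\Sigma'$, with $\Sigma'$ playing the role of $\sigma$. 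Multiplying by $P(\mathcal{C})$ and summing over $\mathcal{C}\in\mathcal{P}$ gives $\mathcal{E}_P(\mathcal{P})\leq 2\mathcal{E}'_P(\mathcal{P})$.

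There is no real obstacle here. The only point needing care is checking that the law of $\Sigma'$ is exactly $P_{\mathcal{P}}$ when local medians coincide across cells, which the computation of $\mathbb{P}\{\Sigma'=\sigma\}$ above handles. Nothing specific to $d_\tau$ is used, so the bound holds for any metric $d$, as the statement requires.
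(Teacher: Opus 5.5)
Your proof is correct and follows the paper's argument: the same deterministic coupling $\Sigma'=\sigma^*_{\mathcal{C}}$ on $\{\Sigma\in\mathcal{C}\}$, combined with the total probability rule, gives the first inequality, and the second reduces to comparing $V(\mathcal{C})$ with $V'(\mathcal{C})$ cell by cell. Your direct proof of $V(\mathcal{C})\leq 2V'(\mathcal{C})$ is the correct justification, whereas the paper's proof cites $V(\mathcal{C})\leq V'(\mathcal{C})$, which is a slip since the Remark gives $V'\leq V\leq 2V'$.
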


\begin{proof}
Consider the coupling $(\Sigma,\Sigma_{\mathcal{P}})$ of $(P,P_{\mathcal{P}})$ defined by $\mathbb{P}\{\Sigma_{\mathcal{P}}=\sigma^*_{\mathcal{C}}\mid \Sigma\in \mathcal{C}\}=1$ for all $\mathcal{C}\in \mathcal{P}$.
The distortion can be then bounded as follows
\begin{multline*}\label{eq:bound1}
W_{\tau}(P_{\mathcal{P}},P)\leq \mathbb{E}[d(\Sigma, \Sigma_{\mathcal{P}})]= \sum_{\mathcal{C}\in \mathcal{P}}P(\mathcal{C})V(\mathcal{C}),
\end{multline*}
using the total probability rule and the second bound is simply due to the fact that $V(\mathcal{C})\leq V'(\mathcal{C})$ for all $\mathcal{C}\in \mathcal{P}$.
\end{proof}
The coupling used in the argument above is not always optimal, meaning that \eqref{eq:bound1} may be strict, as shown in the Supplementary Material (see the counterexample therein). However, there is equality in the case of the least fine partition $\mathcal{P}_1:=\{\mathfrak{S}_n\}$ and in the case of the finest partition $\mathcal{P}_{n!}:=\cup_{\sigma\in \text{Supp}(P)}\{\sigma\}$. 
The same argument can be used to show that the difference between $P$'s variability and that of its CRD $P_{\mathcal{P}}$ is controlled by the distortion, namely $\vert V_{\mathcal{P}}-V_{P_{\mathcal{P}}}\vert \leq W_d(P,P_{\mathcal{P}})$, and that the depth of any ranking median $\sigma^*_{P_{\mathcal{P}}}$ w.r.t. $P$ is close to the maximum depth when the distortion is small: $ D_P(\sigma^*_{P_{\mathcal{P}}})\geq \max_{\sigma\in \mathfrak{S}_n}D_P(\sigma)-W_d(P,P_{\mathcal{P}})$.
Observe also that
\begin{equation}\label{eq:bound2}
\mathcal{E}_P(\mathcal{P})\leq \min_{\sigma\in \mathfrak{S}_n}\left\{ \sum_{\mathcal{C}\in \mathcal{P}}P(\mathcal{C})\mathbb{E}_{P_{\mathcal{C}}}[d(\sigma,\Sigma)] \right\}= V_{P},
\end{equation}
and, similarly, $\mathcal{E}'_P(\mathcal{P})\leq V'_{P}$ for any partition $\mathcal{P}$. More generally, the finer the partition, the smaller the intra-cell variability, we have: $\mathcal{E}_P(\mathcal{P}')\leq \mathcal{E}_P(\mathcal{P})$ (respectively, $\mathcal{E}'_P(\mathcal{P}')\leq \mathcal{E}'_P(\mathcal{P})$ ) as soon as $\mathcal{P}'$ is a subpartition of $\mathcal{P}$, \textit{i.e.} $\mathcal{P}'$ is a partition of $\mathfrak{S}_n$ such that, for all $\mathcal{C}'\in \mathcal{P}'$, there exists $\mathcal{C}\in \mathcal{P}$ s.t. $\mathcal{C}'\subset \mathcal{C}$. Of course, to adapt to the properties of the ranking distribution $P$, the partition $\mathcal{P}$ defining the CRD $P_{\mathcal{P}}$ must be learned from the data $\Sigma_1,\; \ldots,\; \Sigma_N$.
Because it is easy to compute a statistical counterpart of $\mathcal{E}'(\mathcal{P})$ based on the $\Sigma_k$'s, the bound in Proposition \ref{prop:bound} permit to choose empirically a partition of controlled cardinality yielding a CRD with guarantees regarding distortion in the form of confidence bounds. It is the purpose of the subsection below to establish such generalization bounds.

\begin{remark}{\sc (Nonparametric inspiration)} While the framework we develop here is based on partioning techniques just like histogram methods in multivariate data analysis, other approaches to statistical analysis of ranking data have been directly inspired by nonparametric techniques tailored to observations in $\mathbb{R}^q$ with $q\geq 1$. One may refer to \textit{e.g.} \cite{Mao2008} for the development of a method analogous to kernel smoothing for ranking distribution inference and to \cite{Hudry08}, \cite{kondor2010ranking} or \cite{pmlr-v37-sibony15} for “projection-like” approaches.
\end{remark}

\subsection{Learning $\crd$'s - Statistical Guarantees}

Based on Proposition \ref{prop:bound}, we now study the principle of choosing the partition characterizing the CRD \eqref{eq:def_crd} by minimizing a statistical version of the quantity $\mathcal{E}'(\mathcal{P})$ over a finite collection $\Pi_K$ of partitions $\mathcal{P}$ of $\mathfrak{S}_n$ with $K\leq n!$ cells $\mathcal{C}$ s.t. $P(\mathcal{C})>0$ from the perspective of generalization. Observe first that, based on the observations $\Sigma_1,\; \ldots,\; \Sigma_N$ supposedly available, a natural empirical counterpart of $\mathcal{E}'(\mathcal{P})$ is obtained in a plug-in fashion by replacing $P$ with $\widehat{P}_N$ and is given by:
\begin{equation}\label{eq:emp_crit}
\widehat{\mathcal{E}}_N'(\mathcal{P})=\sum_{\mathcal{C}\in \mathcal{P}}\widehat{P}_N(\mathcal{C})\widehat{V}'_N(\mathcal{C}),
\end{equation}
where the local ranking variability estimates are 
\begin{equation*}
\widehat{V}'_N(\mathcal{C})=\frac{2}{N_{\mathcal{C}}(N_{\mathcal{C}}-1)}\sum_{i<j}\mathbb{I}\{(\Sigma_i,\Sigma_j)\in \mathcal{C}^2\}d(\Sigma_i,\Sigma_j),
\end{equation*}
with $N_{\mathcal{C}}=\#\{i\in\{1,\; \ldots,\; N\}:\; \Sigma_i \in \mathcal{C} \}$ and the convention that $\widehat{V}'_N(\mathcal{C})=0$ when $N_{\mathcal{C}}\leq 1$, where $\mathcal{C}\in \mathcal{P}$. The fluctuations of the stochastic process $\{\widehat{\mathcal{E}}_N'(\mathcal{P})-\mathcal{E}(\mathcal{P}):\; \mathcal{P}\in \Pi_K\}$ can be controlled by observing that: $\forall \mathcal{P}\in \Pi_K$,
\begin{multline}\label{eq:decomp}
\widehat{\mathcal{E}}_N'(\mathcal{P})-\mathcal{E}(\mathcal{P})=\left(\widehat{\mathcal{E}}_N'(\mathcal{P})-\widetilde{\mathcal{E}}_N'(\mathcal{P})\right)\\
+\left(\widetilde{\mathcal{E}}_N'(\mathcal{P})-\mathcal{E}(\mathcal{P})\right),
\end{multline}
where we set
\begin{equation}\label{eq:U_stat}
\widetilde{\mathcal{E}}_N'(\mathcal{P})=\frac{2}{N(N-1)}\sum_{i<j}\Phi_{\mathcal{P}}(\Sigma_i,\Sigma_j),
\end{equation}
with $$
\Phi_{\mathcal{P}}(\sigma,\sigma')=\sum_{\mathcal{C}\in \mathcal{P}}\frac{\mathbb{I}\{(\sigma,\sigma')\in \mathcal{C}^2\}}{P(\mathcal{C})}\cdot d(\sigma,\sigma').$$
Hence, the quantity \eqref{eq:U_stat} is a (non degenerate) $U$-statistic of degree $2$ with symmetric kernel $\Phi_{\mathcal{P}}$, see \cite{Lee90}. The maximal deviations of the second term on the right hand side of \eqref{eq:decomp} can be thus controlled by means of classical extensions of tail inequalities for i.i.d. averages to $U$-statistics. Since those of the first term can be easily shown to be of order $O_{\mathbb{P}}(1/\sqrt{N})$, we have the following result (see technical details in the Appendix).

\begin{proposition}\label{prop:gen_bound}
Let $K\in \{1,\; \ldots,\; n!\}$ and $\Pi_K$ be a collection of partitions of $\mathfrak{S}_n$ counting each $K$ cells $\mathcal{C}$ s.t. $P(\mathcal{C})\geq \chi(\Pi_K)>0$. Consider any minimizer $\widehat{\mathcal{P}}_N$ of the empirical criterion \eqref{eq:emp_crit} over $\Pi_K$. For all $\delta\in (0,1)$, we have with probability at least $1-\delta$:
\begin{multline*}
W_d(P, P_{\widehat{\mathcal{P}}_N})/2\leq \mathcal{E}'(\widehat{\mathcal{P}}_N)\leq \min_{\mathcal{P}\in \Pi_K}\mathcal{E}'(\mathcal{P})+\\
 2\sqrt{\frac{\gamma(\Pi_K)}{\chi(\Pi_K)}\frac{\log(4\# \Pi_K/\delta)}{N}}+ 2C\frac{K\vert\vert d\vert\vert_{\infty}}{\chi^2(\Pi_K)}\sqrt{\frac{\log(8K/\delta)}{2N}},
\end{multline*}
if $N\geq \max\{\log(8K/\delta),(1+\sqrt{\log(4K/\delta)/2})/\chi(\Pi_K)\}$,
where we set
$$\gamma(\Pi_K)=\max \left\{d(\sigma,\sigma'): (\sigma,\sigma')\in \mathcal{C}^2, \mathcal{C}\in \mathcal{P}, \mathcal{P}\in \Pi_K   \right\}.$$
\end{proposition}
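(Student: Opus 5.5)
The first inequality is immediate from Proposition~\ref{prop:bound}: $W_d(P,P_{\mathcal{P}})\leq 2\mathcal{E}'(\mathcal{P})$ holds for every partition $\mathcal{P}$, in particular for $\widehat{\mathcal{P}}_N$. This only needs $P(\mathcal{C})>0$ on every cell, which is guaranteed by $P(\mathcal{C})\geq\chi(\Pi_K)$. For the second inequality I will use the standard ERM argument. Let $\mathcal{P}^*$ be a minimizer of $\mathcal{E}'$ over $\Pi_K$. Since $\widehat{\mathcal{E}}'_N(\widehat{\mathcal{P}}_N)\leq \widehat{\mathcal{E}}'_N(\mathcal{P}^*)$,
\[
\mathcal{E}'(\widehat{\mathcal{P}}_N)-\mathcal{E}'(\mathcal{P}^*)\leq 2\sup_{\mathcal{P}\in\Pi_K}\bigl|\widehat{\mathcal{E}}'_N(\mathcal{P})-\mathcal{E}'(\mathcal{P})\bigr|.
\]
It therefore suffices to show that, with probability at least $1-\delta$, this supremum is at most half the sum of the two remaining terms. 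I bound it through the decomposition \eqref{eq:decomp}, reading $\mathcal{E}$ there as $\mathcal{E}'$, and allocate probability $\delta/2$ to each of the two terms.

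\textbf{U-statistic term.} Note that $\mathbb{E}[\Phi_{\mathcal{P}}(\Sigma,\Sigma')]=\sum_{\mathcal{C}}P(\mathcal{C})^2\,\mathbb{E}[d\mid \Sigma,\Sigma'\in\mathcal{C}]/P(\mathcal{C})=\mathcal{E}'(\mathcal{P})$ under the convention $V'(\mathcal{C})=\mathbb{E}[d(\Sigma,\Sigma')\mid\cdot]$. The factor $1/2$ in $V'$ must be reconciled with the $2/(N(N-1))$ normalisation, and I will fix constants accordingly. To control $\widetilde{\mathcal{E}}'_N(\mathcal{P})-\mathcal{E}'(\mathcal{P})$ for a fixed $\mathcal{P}$, I use Hoeffding's decomposition of $U$-statistics as an average of $\lfloor N/2\rfloor$ i.i.d. blocks, combined with a Bernstein-type inequality. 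The kernel is bounded by $\gamma(\Pi_K)/\chi(\Pi_K)$, because the indicator forces both arguments into the same cell. Its variance is at most $\mathbb{E}[\Phi^2]\leq \frac{\gamma}{\chi}\mathbb{E}[\Phi]\leq \frac{\gamma}{\chi}\cdot\gamma$. I expect the combination $\sqrt{\gamma/\chi}$ to appear either from the variance term of Bernstein or from using the sharper bound $\sum_{\mathcal{C}} P(\mathcal{C})\gamma/P(\mathcal{C})\cdot P(\mathcal{C})$, and I will tune the argument to obtain exactly $\sqrt{\frac{\gamma}{\chi}\frac{\log(4\#\Pi_K/\delta)}{N}}$. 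A union bound over the $\#\Pi_K$ partitions, applied to two-sided deviations with budget $\delta/2$, produces the factor $4\#\Pi_K/\delta$.

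\textbf{Plug-in term.} Write $\widehat{V}'_N(\mathcal{C})$ as the in-cell pair sum divided by $N_{\mathcal{C}}(N_{\mathcal{C}}-1)$. Then
\[
\widehat{\mathcal{E}}'_N-\widetilde{\mathcal{E}}'_N=\sum_{\mathcal{C}} S_{\mathcal{C}}\Bigl(\frac{N_{\mathcal{C}}}{N\,N_{\mathcal{C}}(N_{\mathcal{C}}-1)}-\frac{1}{N(N-1)P(\mathcal{C})}\Bigr),
\]
where $S_{\mathcal{C}}\leq N_{\mathcal{C}}^2\|d\|_\infty$. Each bracket is governed by $|\widehat{P}_N(\mathcal{C})-P(\mathcal{C})|$ divided by $\widehat{P}_N(\mathcal{C})P(\mathcal{C})$. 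Hoeffding's inequality together with a union bound over the $K$ cells gives $|\widehat{P}_N(\mathcal{C})-P(\mathcal{C})|\leq\sqrt{\log(8K/\delta)/(2N)}$ simultaneously for all cells. This must hold for every partition in $\Pi_K$ at once, which I handle by making the bound depend only on $K$ and $\chi$; if that is not possible, I will take the union over the cells appearing in $\Pi_K$. The conditions on $N$ are exactly what is needed so that $\widehat{P}_N(\mathcal{C})\geq\chi/2$ and $N_{\mathcal{C}}\geq 2$, so that the $\widehat{V}'=0$ convention never applies and the $(N_{\mathcal{C}}-1)$ versus $N_{\mathcal{C}}$ discrepancy is absorbed into the constant $C$. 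Summing over the $K$ cells then yields $C K\|d\|_\infty\chi^{-2}\sqrt{\log(8K/\delta)/(2N)}$.

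The main obstacle will be this plug-in term: controlling the random denominators uniformly, and obtaining a bound that does not grow with $\#\Pi_K$ even though the cells vary across partitions. The U-statistic part is routine, apart from tracking constants.
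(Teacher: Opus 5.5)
Your route is the paper's: the first inequality from Proposition~\ref{prop:bound}, the ERM reduction to $2\sup_{\mathcal{P}\in\Pi_K}|\widehat{\mathcal{E}}'_N(\mathcal{P})-\mathcal{E}'(\mathcal{P})|$, and a split into a $U$-statistic term and a plug-in term. The first is handled by Hoeffding's inequality for $U$-statistics plus a union bound over $\Pi_K$, the second by Hoeffding on the cell frequencies. Two of the problems you flag are real defects of the paper's own argument:
\begin{itemize}
\item $\widetilde{\mathcal{E}}'_N$ and $\widehat{\mathcal{E}}'_N$ actually estimate $2\mathcal{E}'$, not $\mathcal{E}'$.
\item Because $\widehat{\mathcal{P}}_N$ is data-dependent, the plug-in term must be controlled over every cell of every partition in $\Pi_K$. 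The paper unions only over the $K$ cells of one fixed $\mathcal{P}$. Your fallback is the right fix and gives $\log(8K\#\Pi_K/\delta)$. The paper itself uses $\log(2K\#\Pi_K/\delta)$ for this same quantity when proving Proposition~\ref{prop:gen_bound2}. A union bound cannot yield a bound depending only on $K$ and $\chi$.
\end{itemize}

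The genuine gap is the $U$-statistic step, where you plan to ``tune the argument'' to obtain exactly $\sqrt{\frac{\gamma}{\chi}\frac{\log(4\#\Pi_K/\delta)}{N}}$. This cannot be done.
\begin{itemize}
\item With kernel range $[0,\gamma/\chi]$, Hoeffding gives $\frac{\gamma}{\chi}\sqrt{\log(4\#\Pi_K/\delta)/N}$.
\item Bernstein with your variance bound $\gamma^2/\chi$ gives a leading term of order $\gamma\chi^{-1/2}\sqrt{\log(4\#\Pi_K/\delta)/N}$.
\end{itemize}
No argument can give a deviation of order $\sqrt{\gamma}$ valid for every metric $d$. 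Replacing $d$ by $cd$ multiplies $\widetilde{\mathcal{E}}'_N(\mathcal{P})-\mathcal{E}'(\mathcal{P})$ by $c$ but $\sqrt{\gamma/\chi}$ only by $\sqrt{c}$. Letting $c\to\infty$ would then force the deviation to vanish with probability $1-\delta$. The paper's own lemma yields $\kappa_{\mathcal{P}}\sqrt{\log(2\#\Pi_K/\delta)/N}$ with $\kappa_{\mathcal{P}}\le\gamma/\chi$, so its displayed $\sqrt{\gamma/\chi}$ is a slip. State and prove the middle term as $2\frac{\gamma}{\chi}\sqrt{\log(4\#\Pi_K/\delta)/N}$ rather than trying to match the printed constant.

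In the plug-in step, your claim that the stated conditions on $N$ ensure $\widehat{P}_N(\mathcal{C})\ge\chi/2$ is wrong. Those conditions are linear in $1/\chi$, whereas Hoeffding needs $N\gtrsim\log(K/\delta)/\chi^2$. The paper's conditions come from a Hoeffding bound on $\{N_{\mathcal{C}}\le 1\}$ whose exponent is missing the factor $1/N$.

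You do not need either event if you keep $S_{\mathcal{C}}$ in the bound. Since $2S_{\mathcal{C}}\le N_{\mathcal{C}}(N_{\mathcal{C}}-1)\|d\|_\infty$, the random denominator $N_{\mathcal{C}}-1$ cancels. The contribution of $\mathcal{C}$ is then at most $\widehat{P}_N(\mathcal{C})\|d\|_\infty\,(N|\widehat{P}_N(\mathcal{C})-P(\mathcal{C})|+1)/((N-1)P(\mathcal{C}))$, and it is zero when $N_{\mathcal{C}}\le 1$. Summing with $\sum_{\mathcal{C}}\widehat{P}_N(\mathcal{C})=1$ gives, deterministically,
$|\widehat{\mathcal{E}}'_N(\mathcal{P})-\widetilde{\mathcal{E}}'_N(\mathcal{P})|\le\frac{N}{N-1}\frac{\|d\|_\infty}{\chi}\left(\max_{\mathcal{C}\in\mathcal{P}}|\widehat{P}_N(\mathcal{C})-P(\mathcal{C})|+\frac{1}{N}\right)$.
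This is better than the paper's $K\|d\|_\infty/\chi^2$ factor, and it removes the ``main obstacle'' you identified.
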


The bound above provides guarantees for the principle consisting in choosing the partition defining the CRD by minimizing \eqref{eq:emp_crit} over all $\mathcal{P}$ in the collection $\Pi_K$ considered. As shown in the Appendix, a similar guarantee holds true, under appropriate conditions, for statistical versions of $P_{\widehat{\mathcal{P}}_N}$, where local medians $\sigma^*_{\mathcal{C}}$ and weights $P(\mathcal{C})$ are replaced with their empirical counterparts, see Proposition \ref{prop:gen_bound2} therein.

{\bf Choosing the number $K$ of local medians.} A crucial model selection issue remains to be solved: determining an appropriate value for the number $K\in \{1,\; \ldots,\; n!\}$ of local medians or cells involved in \eqref{eq:def_crd}, a natural proxy for the 'complexity' of  the resulting $\crd$, so as to achieve a satisfactory 'distortion bias \textit{vs} variance' trade-off. The popular idea of complexity regularization method consists in adding a penalty term $\textsc{pen}(N,m)$ to $\mathcal{E}(\widehat{\mathcal{P}})$. Consider a sequence $\Pi_1,\; \ldots,\; \Pi_{M}$ of $M\geq 1$ collections  of partitions of $\mathfrak{S}_n$, $\Pi_m$ containing only partitions of cardinality $m$ and let $\widehat{\mathcal{P}}_N^{(m)}$ be a minimizer of the empirical criterion \eqref{eq:emp_crit} over $\Pi_m$. One then selects $\widehat{\mathcal{P}}_N^{(\widehat{m})}$ where the index $\widehat{m}$ minimizes $\widehat{\mathcal{E}}'(\widehat{\mathcal{P}}_N^{(m)})+\textsc{pen}(N,m)$ over $\{1,\; \ldots,\; M\}$ to avoid overfitting. A classic approach consists in choosing for $\textsc{pen(N,m)}$ an upper confidence bound for $\sup_{\mathcal{P}\in \Pi_M}\vert \widehat{\mathcal{E}}(\mathcal{P})-\mathcal{E}(\mathcal{P}) \vert$. A (deterministic) bound can be derived from the argument of Proposition \ref{prop:gen_bound} for instance. Following line by line the reasoning in section 8 of \cite{boucheron2005theory}, it can be shown that the partition thus selected defines a $\crd$ that nearly achieves the distortion that would be obtained with the help of an oracle, revealing the value $m^*$ minimizing $\mathbb{E}[\mathcal{E}'(\widehat{\mathcal{P}}_N^{(m)})]$. Due to space limitations, details are left to the reader.


Of course, the number of cells is not the only crucial hyperparameter for the method analyzed above from a statistical perspective to produce an appropriate estimated CRD. The geometry of the cells $\mathcal{C}$ plays a major role.  It is, of course, also related to the (approximate/incremental) resolution of the optimization problem $\min_{\mathcal{P}\in \Pi}\widehat{\mathcal{E}}'_N(\mathcal{P})$, an issue we address in the next section.

\section{Partitioning Trees and CRD's}\label{sec:algorithm}

Here, we describe at length a data-driven strategy to build a nested collection of partitions, defining a sequence of CRD's of increasing complexity (\textit{i.e.} involving an increasing number of local medians) based on decision trees and incremental minimization of the $\widehat{\mathcal{E}}_N'$ criterion, referred to as the COAST (COnsensus rAnking diStribution Tree) algorithm. Next we present numerical experiments revealing its capacity to detect the local modes of ranking distributions.

\subsection{The COAST Algorithm}

Here we focus on CRDs defined by binary trees and the Kendall $\tau$ distance, for reasons that we now explain. First, top-down tree-based techniques are popular for building partitions recursively. To grow the binary tree, one starts with the root node, corresponding to the trivial partition $\mathcal{P}_1=\{\mathfrak{S}_n\}$ and one first splits $\mathfrak{S}_n$ into two disjoint regions. They are next each divided in two and the process continues until a stopping rule is applied. At each iteration, the terminal leaves of the current binary tree are in 1-to-1 correspondence with the cells of a partition of $\mathfrak{S}_n$. The main difficulty lies in choosing an appropriate \textit{binary splitting rule} (to divide a region in two sub-regions) in order to construct a partition that is well suited to the properties of the distribution $P$ of the random ranking $\Sigma$ being analyzed. We propose splitting rules of the form '$\sigma(i)<\sigma(j)$' \textit{vs} '$\sigma(i)>\sigma(j)$' determined by a pair $(i,j)$ s.t. $1\leq i<j\leq n$, referred to as a \textit{splitting pair}. The pair $(i,j)$ is used to split a cell $\mathcal{C}$ of the current partition of $\mathfrak{S}_n$ into $$\mathcal{C}^{(0)}_{i,j}:=\{\sigma\in \mathcal{C}:\; \sigma(i)<\sigma(j)\} \text{ and } \mathcal{C}^{(1)}_{i,j}:=\mathcal{C}\setminus \mathcal{C}^{(0)}_{i,j}.
$$
Each cell $\mathcal{C}$ of a partition $\mathcal{P}$ constructed this way defines a partial order on $\mathfrak{S}_n$ (and a total order when $\mathcal{C}$ reduces to a singleton of course). Hence, once a pair $(i,j)$ is chosen to split a cell $\mathcal{C}$, it will never be used again to split one of its descendants. In particular, the depth of any terminal leaf is necessarily smaller than $n(n-1)/2$, the total number of pairs. In addition to not having been used previously by these ancestors, the splitting pair $(i,j)$ related to an internal node $\mathcal{C}$ must be compatible with the partial order it defines, in the sense that the transitivity property must be satisfied. Otherwise, one of its child node is necessarily empty. Thus, although there are $n(n-1)/2$ possible splits for the root node, the number of candidate pairs decreases very rapidly as the tree is constructed.
Note that any node/cell $\mathcal{C}$ of the tree/partition takes the form
\begin{equation}\label{eq:cell_form}
\mathcal{C}=\left\{\sigma\in \mathfrak{S}_n:\; \sigma(i)<\sigma(j) \text{ for all } (i,j)\in \mathcal{I} \right\},
\end{equation}
where $\mathcal{I} \subset \{(i,j):\; 1\leq i\neq j\leq n\}$ is the set of $2$-tuples used to build $\mathcal{C}$. If the node is at depth $D$, we have $\# \mathcal{I}=D$. The reason for choosing splitting rules based on pairwise comparisons is that, if the pairwise probabilities $p_{i,j}$ with $i<j$ do not characterize the ranking distribution $P$ in general, the local versions
\begin{equation}\label{eq:local_pair}
p_{i,j}(\mathcal{C})=\mathbb{P}\{\Sigma(i)<\Sigma(j) \mid \Sigma\in \mathcal{C}\},
\end{equation}
with $\mathcal{C}$ of the form \eqref{eq:cell_form} and s.t. $P(\mathcal{C})>0$, do (see the Appendix for a rigorous formulation). In addition, when considering the notion of ranking variability based on the Kendall $\tau$ distance, the criterion $\mathcal{E}_P(\mathcal{P})=\sum_{\mathcal{C}\in \mathcal{P}}P(\mathcal{C})V(\mathcal{C})$ (respectively, $\mathcal{E}'_P(\mathcal{P})=\sum_{\mathcal{C}\in \mathcal{P}}P(\mathcal{C})V'(\mathcal{C})$) can be bounded as a function of such quantities. More generally, we have, for any $\mathcal{C}\subset \mathfrak{S}_n$ s.t. $P(\mathcal{C})>0$:
\begin{eqnarray*}
V(\mathcal{C})\leq V''(\mathcal{C})&:=&\sum_{i<j}\min\{p_{i,j}(\mathcal{C}),1-p_{i,j}(\mathcal{C})\},\\
V'(\mathcal{C})&= &\sum_{i<j}p_{i,j}(\mathcal{C})(1-p_{i,j}(\mathcal{C})).
\end{eqnarray*}
By convention, set $V(\mathcal{C})=V'(\mathcal{C})=V''(\mathcal{C})=0$ when $P(\mathcal{C})=0$. 
This allows us to define rules for choosing best the splitting pair to divide a cell $\mathcal{C}$ corresponding to an internal node of the tree: among the set $\mathcal{J}(\mathcal{C})\subset \{(i,j): 1\leq i<j\leq n \}$ of \textit{admissible} pairs, \textit{i.e.} pairs that are compatible with the partial order defining $\mathcal{C}$, choose the splitting pair $(i_{\mathcal{C}},j_{\mathcal{C}})$ to be a pair $(i,j)$ minimizing the local dispersion criterion
\begin{equation}\label{eq:split_crit}
\mathcal{E}'_{(i,j)}(\mathcal{C}):=P(\mathcal{C}_{i,j}^{(0)})V'(\mathcal{C}_{i,j}^{(0)})+P(\mathcal{C}_{i,j}^{(1)})V'(\mathcal{C}_{i,j}^{(1)}).
\end{equation}
An alternative splitting rule would consist in minimizing the version $\mathcal{E}''_{(i,j)}$ of the criterion \eqref{eq:split_crit} above, where $V'$ is replaced with $V''$. As proved in the Appendix, for any partition $\mathcal{P}$ of $\mathfrak{S}_n$, we have $\mathcal{E}''(\mathcal{P}):=\sum_{\mathcal{C}\in \mathcal{P}}P(\mathcal{C})V''(\mathcal{C})\leq V''_{P}$.
Using the same argument as that yielding \eqref{eq:bound2}: $\mathcal{E}'_{(i,j)}(\mathcal{C})\leq P(\mathcal{C})V'(\mathcal{C})$ for all $(i,j)\in \mathcal{J}_{\mathcal{C}}$. We also have $\mathcal{J}(\mathcal{C}_{i_{\mathcal{C}},j_{\mathcal{C}}}^{(m)})\subset \mathcal{J}(\mathcal{C})\setminus\{(i_{\mathcal{C}},j_{\mathcal{C}})\}$ for $m\in\{0,1\}$, as well as
$p_{i_{\mathcal{C}},j_{\mathcal{C}}}(\mathcal{C}_{i_{\mathcal{C}},j_{\mathcal{C}}}^{(0)})=1-p_{i_{\mathcal{C}},j_{\mathcal{C}}}(\mathcal{C}_{i_{\mathcal{C}},j_{\mathcal{C}}}^{(1)})=1$.
In the absence of a stopping rule, if we recursively apply this splitting procedure starting from partition $\mathcal{P}_1$, one obtains a sequence of partitions $\{\mathcal{P}_k:\; k=1,\; \ldots,\; n!\}$ such that $\#\mathcal{P}_k=k$ and $\mathcal{P}_k$ is a subpartition of $\mathcal{P}_{k-1}$ for $k\in \{2,\; \ldots,\; n!\}$. The dispersion criterion 
$\mathcal{E}'(\mathcal{P}_k)=\sum_{\mathcal{C}\in \mathcal{P}_k}P(\mathcal{C})V'(\mathcal{C})$
decreases from $V(P)$ to $0$ as the cardinality $k$ increases from $1$ to $n!$. If we introduce a stopping rule that consists of no longer splitting a node/cell $\mathcal{C}$ once its variability $V'(\mathcal{C})$ falls below a threshold $\epsilon\geq 0$, the partition $\mathcal{P}$ output is s.t. $\mathcal{E}'(\mathcal{P})\leq \epsilon$. Of course, the quantities $P(\mathcal{C})$ and $V'(\mathcal{C})$ are unknown in practice and an empirical version of the splitting criterion must be used, yielding the {\sc COAST} algorithm below.
\begin{figure}[!h]
\begin{center}
\fbox{
\begin{minipage}[t]{7.3cm}
\smallskip

\begin{center}
{\sc COAST Algorithm}
\end{center}

{\small

\begin{enumerate}
\item[]({\sc Input}) Ranking data $\Sigma_1,\; \ldots,\; \Sigma_N$, threshold $\epsilon\geq 0$, ranking aggregation algorithm $\mathcal{A}$
\item ({\sc Initialization.}) Start from the root node identified with the trivial partition $\mathcal{P}=\{\mathfrak{S}_n\}$ and set $\mathcal{J}_{\mathfrak{S}_n}=\{(i,j):1\leq i<j\leq n\}$.


\item ({\sc Iterations.}) While $\mathcal{P}(\epsilon):=\{\mathcal{C}\in \mathcal{P}: \; \widehat{V}'_N(\mathcal{C})> \epsilon\}\neq \emptyset$, do:


\item[(a)] \label{line:binary-splitting} ({\sc Binary splitting.}) For all $\mathcal{C}\in \mathcal{P}(\epsilon)$, find $(i,j)$ in the set of admissible pairs $\mathcal{J}_{\mathcal{C}}$ that minimize 
$$
\widehat{P}_N(\mathcal{C}_{i,j}^{(0)})\widehat{V}_N'(\mathcal{C}_{i,j}^{(0)})+\widehat{P}_N(\mathcal{C}_{i,j}^{(1)})\widehat{V}_N'(\mathcal{C}_{i,j}^{(1)})$$
and split $\mathcal{C}$ into $\mathcal{C}_{i,j}^{(0)}$ and $\mathcal{C}_{i,j}^{(1)}$.


\item[(b)] ({\sc Update.}) Redefine $\mathcal{P}$ as 
$$
\{\mathcal{C}\notin \mathcal{P}(\epsilon)\}\cup \{\mathcal{C}_{i,j}^{(m)}:\; (\mathcal{C},m)\in \mathcal{P}(\epsilon)\times\{0,1\} \}.
$$


\item ({\sc Local aggregation}) For all $\mathcal{C}\in \mathcal{P}$, apply algorithm $\mathcal{A}$ to $\{\Sigma_1,\; \ldots,\; \Sigma_N \}\cap \mathcal{C}$ yielding $\widehat{\sigma}^*_{\mathcal{C}}$.

\item[] ({\sc Output.}) Compute the empirical CRD
$$
\widehat{P}_{\mathcal{P}}=\sum_{\mathcal{C}\in \mathcal{P}}\widehat{P}_N(\mathcal{C})\delta_{\widehat{\sigma}^*_{\mathcal{C}}}.
$$
\end{enumerate}
}
\end{minipage}
}
\label{fig:trpseudo}
\caption{Pseudo-code for the {\sc COAST} algorithm.}
\end{center}
\vspace{-1cm}
\end{figure}

{\bf Pruning and $\crd$ model selection.} The partition $\mathcal{P}$ output by {\sc COAST}  is s.t. $\widehat{\mathcal{E}}'_N(\mathcal{P})\leq \epsilon$. The tuning parameter $\epsilon\geq 0$ governs the tree size and therefore the CRD's complexity. If $\epsilon \geq \widehat{V}_N'$, the tree is reduced to its root node $\mathcal{T}_1=\{\mathfrak{S}_n\}$ and the CRD estimate to a global (empirical) median, whereas, if $\epsilon=0$, the CRD estimate coincides with the raw empirical distribution $\widehat{P}_N$. Since it can be difficult to choose $\epsilon$, one strategy could be to select a 'small' value in order to first grow a 'large' tree, $\mathcal{T}_K$ with $K\leq N$ nodes say, then prune it, as follows. One starts from the 'large' tree $\mathcal{T}_K$ and form a sequence of subtrees  $\mathcal{T}_K \supset \mathcal{T}_{K-1}\supset \ldots \supset \mathcal{T}_1$, defining a sequence of partitions $\mathcal{P}'_K,\;\ldots,\; \mathcal{P}'_1$ (note that $\# \mathcal{P}'_k=k$, $\mathcal{P}'_K=\mathcal{P}_K$ and $\mathcal{P}'_1=\mathcal{P}_1$), by successively merging the internal nodes descending from the same parent, producing the smallest per-node increase in the variability $\widehat{\mathcal{E}}'_N$. One then selects the least complex subtree minimizing an estimate of the $\mathcal{E}'$ criterion. It can also be shown that this boils down to selecting the partition $\mathcal{P}'_k$ that minimizes the penalized criterion $\widehat{\mathcal{E}}'_N(\mathcal{P}'_k)+\lambda\cdot k $, for a specific value of $\lambda\geq 0$, a tuning parameter that governs the trade-off between empirical distortion and the number of cells (or local medians) retained for the approximation. See \cite{cart84} for further details.

\begin{figure*}[t]
    \centering

    \begin{subfigure}{.23\linewidth}
        \centering
        \includegraphics[width=\linewidth]{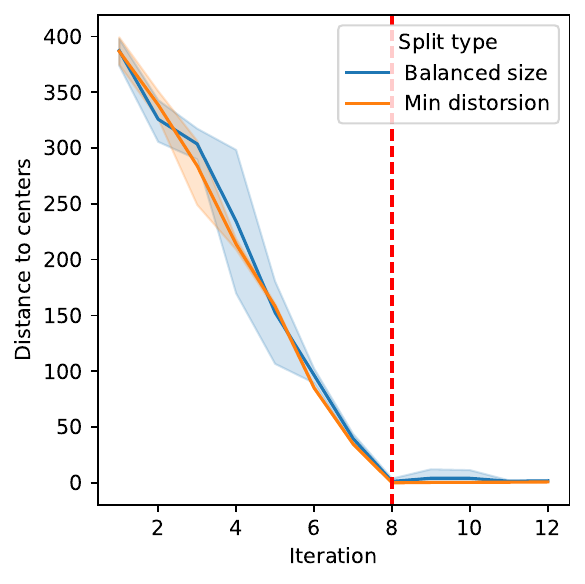}
        \caption{}
        \label{fig:b}
    \end{subfigure}
    \begin{subfigure}{.23\linewidth}
        \centering
        \includegraphics[width=\linewidth]{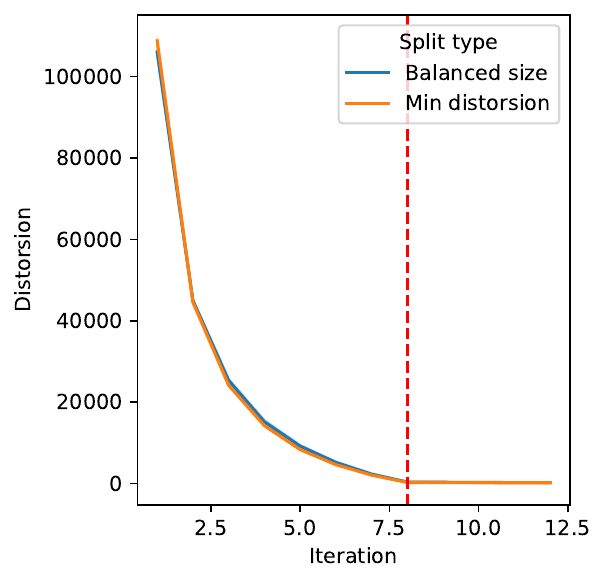}
        \caption{}
        \label{fig:c}
    \end{subfigure}
    \begin{subfigure}{.23\linewidth}
        \centering
        \includegraphics[width=\linewidth]{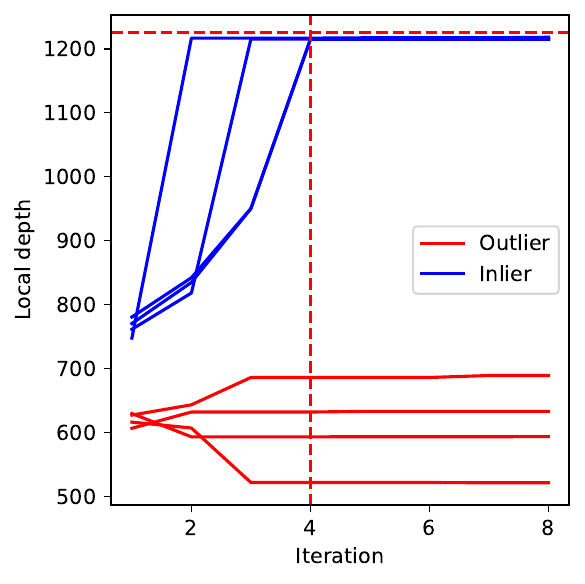}
        \caption{}
        \label{fig:a}
    \end{subfigure}
    \begin{subfigure}{.23\linewidth}
        \centering
        \includegraphics[width=\linewidth]{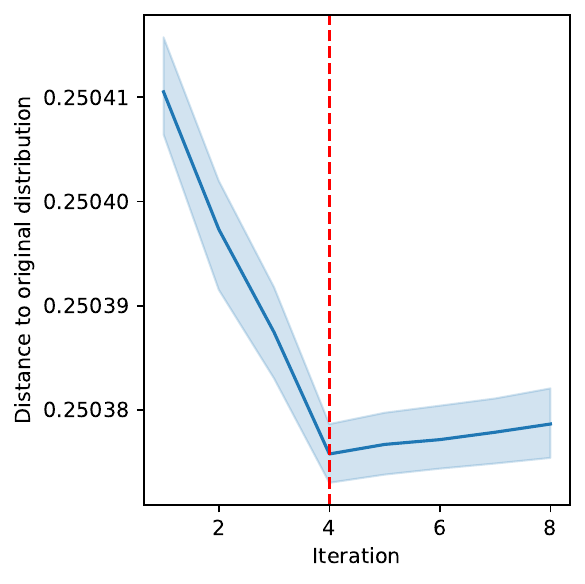}
        \caption{}
        \label{fig:d}
    \end{subfigure}

    \caption{Results for the Local Depth and Anomaly Detection~(\ref{fig:a}) and Mixtures of Mallows models~(\ref{fig:b},~\ref{fig:c},~\ref{fig:d}).}
    \vspace{-0.7cm}
\end{figure*}

\begin{remark}\label{thm:spl_rule}{\sc (Alternative splitting rule)} 
Another option for choosing the splitting pair $(i_{\mathcal{C}}, j_{\mathcal{C}})$ consists in selecting the pair $(i,j)$ s.t. $|p_{i,j}(\mathcal{C}) - 1/2|$ is closest to $0$, which also corresponds to the term which contributes most to the variability $V''(\mathcal{C})$ and leads to the most balanced child nodes possible. Compared to minimizing \eqref{eq:split_crit}, this strategy has computational advantages, the complexity decreasing from $O((\# \mathcal{I}(\mathcal{C}))^4)$ to $O(\# \mathcal{I}(\mathcal{C}))$. Although, as our experiments have shown, it often produces a comparable reduction in distortion/variability, but finding the conditions under which such splits are quasi-optimal remains an open question.  
\end{remark}



\subsection{Numerical Experiments and Applications}


Through several numerical experiments, we now illustrate the ability of the \textsc{COAST} algorithm to recover the modes of a multimodal ranking distribution and to produce an empirical $\crd$ offering a summary with much lower distortion than that provided that by a global ranking median. 
Rankings of $n=50$ items have been generated by means of a mixture of $K=8$ well-concentrated Mallows distributions (see the Appendix for a more detailed description). Figures ~\ref{fig:a} and~\ref{fig:b} show the average distance to the centers (\textit{i.e.} the medians of the Mallows components) of the local consensus rankings at each iteration of \textsc{COAST} and the decrease in the distortion bound $\widehat{\mathcal{E}}'$ for the two splitting rules, that related to the criterion \eqref{eq:split_crit} and that proposed in Remark~\ref{thm:spl_rule}. As empirically confirmed by the additional experiments discussed in the Appendix, The plateau revealed by the \textsc{COAST} algorithm allows the correct number of modes to be recovered. We now show how the output of the COAST algorithm can be used to effectively solve problems related to the statistical analysis of ranking data sampled from a multimodal distribution, such as anomaly detection or inference. 


\noindent {\bf Local depths \& anomaly detection.} 
The concept of \textit{ranking depth} $D_P$ recalled in subsection \ref{subsec:beyond} provides an efficient tool for anomaly detection in a ranking dataset when the inlier's law $P$ is unimodal, \textit{cf} \cite{pmlr-v151-goibert22a}. However, performance deteriorates significantly as soon as the distribution of 'normal' rankings becomes multimodal. In such situations, \textsc{COAST} enables us to rely on the \textit{local ranking depth} $D_{P_{\mathcal{C}}}(\sigma)$ to evaluate the degree of abnormality of a ranking $\sigma\in \mathcal{C}$, $\mathcal{C}\in \mathcal{P}$. As an illustration, consider the sequence of $\crd$'s learnt via \textsc{COAST} in $8$ iterations from a training ranking dataset drawn from a mixture of $K=4$ Mallows laws, as well as a test dataset composed of $4$ inlier rankings and 4 outliers, see the Appendix for more details. 
In Fig.~\ref{fig:d} the local depth of each ranking is plotted over the 8 iterations, clearly showing that the difference in local depth between inliers and outliers increases until the optimal number of partitions is reached. 
Other results leading to similar conclusions are presented in the Appendix. 


\noindent {\bf Ranking Distribution Inference.} As recalled in the Introduction, most ranking distribution inference techniques are based on fitting a rigid statistical model, see \cite{Marden96}, the 'nonparametric alternative' consisting in using the raw empirical distribution $\widehat{P}_N$ being inevitably inaccurate when the sample size $N$ is much smaller than $n!$, see subsection \ref{subsec:crd}. We finally show how to leverage \textsc{COAST}'s output when it comes to estimating a multimodal ranking distribution with accuracy. Note first that, up to an affine transform, the (local) Kendall $\tau$ depth can be naturally used to estimate a supposedly unimodal local distribution $P_{\mathcal{C}}$ via
\begin{equation}
\forall \sigma\in \mathcal{C},\;\;    \widetilde{P}_{\mathcal{C}}(\sigma) = \sum_{i<j} p_{\sigma^{-1}(i),\,\sigma^{-1}(j)}(\mathcal{C})/Z_{\mathcal{C}},
    \label{eq:smooth}
\end{equation}
where the complexity of computing the normalizing constant $Z_{\mathcal{C}}$ is $O(n^2 \#\mathcal{I}(\mathcal{C}))$. 
The approximation \eqref{eq:smooth} preserves certain essential properties.
They imply, for example, that the local median of the ranking distribution \eqref{eq:smooth}  can be recovered. 
See the Appendix for more details.
\textsc{COAST} permits to compute a statistical version of the approximant $\sum_{\mathcal{C}\in \mathcal{P}}P(\mathcal{C})\widetilde{P}_{\mathcal{C}}$. In Fig.~\ref{fig:d}, the $L_2$ distance between the latter and the original mixture is plotted, revealing that the distance reaches its minimimum when the number of mixture components $K=4$ is recovered.

\section{Conclusion and Perspectives}\label{sec:conclusion}

In this article, we introduced the new concept of consensus ranking distribution, thereby extending the notion of ranking median, particularly well suited to multimodal ranking distributions. We showed that the relevance of the statistical summary it provides is a delicate trade-off between dimension reduction and distortion minimization and proposed a tree-based learning algorithm for constructing CRDs from ranking data that achieve a satisfactory balance. Numerical results were presented to illustrate the effectiveness of the method and its usefulness for statistical ranking data analysis is demonstrated by several applications, which will hopefully pave the way for many more.
\newpage
\section*{Impact Statement}

This paper presents work whose goal is to advance the field of Machine Learning. There are many potential societal consequences of our work, none which we feel must be specifically highlighted here.

\bibliographystyle{icml2026}

\bibliography{biblio}

@inproceedings{sushi,
   author = {Toshihiro Kamishima},
   doi = {10.1145/956750.956823},
   booktitle = {Proceedings of the ACM SIGKDD International Conference on Knowledge Discovery and Data Mining},
   pages = {583-588},
   title = {Nantonac collaborative filtering: Recommendation based on order responses},
   year = {2003}
}

@inproceedings{himmi-etal-2024-towards,
    title = "Towards More Robust {NLP} System Evaluation: Handling Missing Scores in Benchmarks",
    author = "Himmi, Anas  and
      Irurozki, Ekhine  and
      Noiry, Nathan  and
      Cl{\'e}men{\c{c}}on, Stephan  and
      Colombo, Pierre",
    editor = "Al-Onaizan, Yaser  and
      Bansal, Mohit  and
      Chen, Yun-Nung",
    booktitle = "Findings of the Association for Computational Linguistics: EMNLP 2024",
    month = nov,
    year = "2024",
    address = "Miami, Florida, USA",
    publisher = "Association for Computational Linguistics",
    url = "https://aclanthology.org/2024.findings-emnlp.688/",
    doi = "10.18653/v1/2024.findings-emnlp.688",
    pages = "11759--11785"
}

@inproceedings{irurozki:hal-03972357,
  Title = {{Universal aggregation of permutations}},
  Author = {Irurozki, Ekhine and Cl{\'e}men{\c c}on, St{\'e}phan},
  url = {https://hal.science/hal-03972357},
  booktitle = {{Proceedings of DA2PL 2022: From Multiple-Criteria Decision Aid to Preference Learning}},
  year = {2022},
}

@article{LindermanMena2017reparameterizing,
	Author = {Linderman, Scott W. and Mena, Gonzalo E. and Cooper, Hal and Paninski, Liam and Cunningham, John P.},
	Journal = {arXiv preprint arXiv:1710.09508},
	Title = {Reparameterizing the {B}irkhoff Polytope for
	Variational Permutation Inference},
	Year = {2017}}

@inproceedings{clemenccon2010kantorovich,
	title={Kantorovich distances between rankings with applications to rank aggregation},
	author={Cl{\'e}men{\c{c}}on, St{\'e}phan and Jakubowicz, J{\'e}r{\'e}mie},
	booktitle={Joint European Conference on Machine Learning and Knowledge Discovery in Databases},
	pages={248--263},
	year={2010},
	organization={Springer}
}

@book{Lee90,author={A. J. Lee},year={1990},title={${U}$-statistics: Theory and practice},publisher={Marcel Dekker, Inc.},address={New York}}

@BOOK{cart84,
  author        = {L. Breiman and J. Friedman and R. Olshen and C. Stone},
  title         = {{Classification and Regression Trees}},
  publisher     = {Wadsworth and Brooks},
  year          = {1984},
}

@article{Liu1999,
  author    = {Regina Y. Liu and Jesse M. Parelius and Kesar Singh},
  title     = {Multivariate analysis by data depth: Descriptive statistics, graphics and inference},
  journal   = {The Annals of Statistics},
  volume    = {27},
  number    = {3},
  pages     = {783--858},
  year      = {1999},
  publisher = {Institute of Mathematical Statistics}
}

@InProceedings{pmlr-v151-goibert22a,
  title = 	 { Statistical Depth Functions for Ranking Distributions: Definitions, Statistical Learning and Applications },
  author =       {Goibert, Morgane and Cl\'emen\c{c}on, Stephan and Irurozki, Ekhine and Mozharovskyi, Pavlo},
  booktitle = 	 {Proceedings of The 25th International Conference on Artificial Intelligence and Statistics},
  pages = 	 {10376--10406},
  year = 	 {2022},
  editor = 	 {Camps-Valls, Gustau and Ruiz, Francisco J. R. and Valera, Isabel},
  volume = 	 {151},
  publisher =    {PMLR}}

@BOOK{Ser80,
  author =       {R.J. Serfling},
  title =        {Approximation theorems of mathematical statistics},
  publisher =    {John Wiley \& Sons},
  year =         {1980},
}

@InProceedings{KGD18,
  author    = {Korba, A. Garcia, F. D'Alch\'e'-Buc},
title={ A Structured Prediction Approach for Label Ranking},
booktitle={In Advances in Neural Information Processing Systems},
year={2018}
}

@InProceedings{CKS17,
  author    = {Korba, A. and Cl\'emen\c{c}on, S. and Sibony, E.},
  title     = {A Learning theory of ranking aggregation},
  booktitle = {Proceeding of AISTATS 2017},
  year      = {2017},
}

@InProceedings{kondor2010ranking,
  author    = {Kondor, R. and Barbosa, M. S.},
  title     = {Ranking with Kernels in {F}ourier space},
  booktitle = {The 23rd Conference on Learning Theory},
  year      = {2010},
  pages     = {451-463},
}

@InProceedings{pmlr-v37-sibony15,
  title = 	 {{MRA}-based Statistical Learning from Incomplete Rankings},
  author = 	 {Sibony, Eric and Cl\'emen\c{c}on, Stephan and Jakubowicz, Jérémie},
  booktitle = 	 {Proceedings of the 32nd International Conference on Machine Learning},
  pages = 	 {1432--1441},
  year = 	 {2015},
  editor = 	 {Bach, Francis and Blei, David},
  volume = 	 {37},
  publisher =    {PMLR}}

@inproceedings{achab2019bucket,
  title     = {Dimensionality Reduction and (Bucket) Ranking: a Mass Transportation Approach},
  author    = {Achab, Mastane and Korba, Anna and Cl{\'e}men\c{c}on, St{\'e}phan},
  booktitle = {Proceedings of the 30th International Conference on Algorithmic Learning Theory},
  volume    = {98},
  pages     = {1--30},
  year      = {2019},
  editor    = {Garivier, Aurélien and Kale, Satyen},
  publisher = {PMLR}
}

@inproceedings{goibert2023robust,
  title     = {Robust Consensus in Ranking Data Analysis: Definitions, Properties and Computational Issues},
  author    = {Goibert, Morgane and Calauz\`enes, Clément and Irurozki, Ekhine and Cl{\'e}men\c{c}on, Stephan},
  booktitle = {Proceedings of the 40th International Conference on Machine Learning},
  volume    = {202},
  pages     = {1--25},
  year      = {2023},
  publisher = {PMLR}
}

@InProceedings{procaccia2016optimal,
  author    = {Procaccia, A.D. and Shah, N.},
  title     = {Optimal Aggregation of Uncertain Preferences.},
  booktitle = {AAAI},
  year      = {2016},
  pages     = {608--614},
}

@Article{shah2015stochastically,
  author  = {Shah, N. B. and Balakrishnan, S. and Guntuboyina, A. and Wainright, M. J.},
  title   = {Stochastically Transitive Models for Pairwise Comparisons: Statistical and Computational Issues},
  journal = {arXiv preprint arXiv:1510.05610},
  year    = {2015},
}

@Article{JKSO16,
  author  = {Jang, M. and Kim, S. and Suh, C. and Oh, S.},
  title   = {Top-$ K $ Ranking from Pairwise Comparisons: When Spectral Ranking is Optimal},
  journal = {arXiv preprint},
  year    = {2016},
}

@InProceedings{dwork2001rank,
  author       = {Dwork, C. and Kumar, R. and Naor, M. and Sivakumar, D.},
  title        = {Rank aggregation methods for the web},
  booktitle    = {Proceedings of the 10th international conference on World Wide Web},
  year         = {2001},
  pages        = {613--622},
  organization = {ACM},
}

@Article{Kemeny59,
  author  = {Kemeny, J. G.},
  title   = {Mathematics without numbers},
  journal = {Daedalus},
  year    = {1959},
  volume  = {88},
  pages   = {571-591},
}

@Article{davidson1959experimental,
  author    = {Davidson, D. and Marschak, J.},
  title     = {Experimental tests of a stochastic decision theory},
  journal   = {Measurement: Definitions and theories},
  year      = {1959},
  volume    = {17},
  pages     = {274},
  publisher = {Citeseer},
}

@Article{boucheron2005theory,
  author    = {Boucheron, S. and Bousquet, O. and Lugosi, G.},
  title     = {Theory of classification: A survey of some recent advances},
  journal   = {ESAIM: probability and statistics},
  year      = {2005},
  volume    = {9},
  pages     = {323--375},
  publisher = {EDP Sciences},
}

@Article{Mallows57,
  author  = {Mallows, C. L.},
  title   = {Non-Null Ranking Models},
  journal = {Biometrika},
  year    = {1957},
  volume  = {44},
  number  = {1-2},
  pages   = {114--130},
}

@InProceedings{JKS16,
  author    = {Jiao, Y. and Korba, A. and Sibony, E.},
  title     = {Controlling the distance to a Kemeny consensus without computing it},
  booktitle = {Proceedings of ICML 2016},
  year      = {2016},
}

@Book{Vapnik,
  title     = {{The Nature of Statistical Learning Theory}},
  publisher = {Springer},
  year      = {2000},
  author    = {Vapnik, V. N.},
  series    = {Lecture Notes in Statistics},
}

@Book{Deza,
  title     = {{Encyclopedia of Distances}},
  publisher = {Springer},
  year      = {2009},
  author    = {M.M. Deza and E. Deza},
}

@article{Mao2008,
   author = {Yi Mao and Guy Lebanon},
   journal = {Journal of Machine Learning Research},
   pages = {2401-2429},
   title = {Non-Parametric Modeling of Partially Ranked Data},
   volume = {9},
   year = {2008},
}

@Article{shah2015simple,
  author  = {Shah, N.B. and Wainwright, M.J.},
  title   = {Simple, robust and optimal ranking from pairwise comparisons},
  journal = {arXiv preprint arXiv:1512.08949},
  year    = {2015},
}

@Article{fishburn1973binary,
  author    = {Fishburn, P. C.},
  title     = {Binary choice probabilities: on the varieties of stochastic transitivity},
  journal   = {Journal of Mathematical psychology},
  year      = {1973},
  volume    = {10},
  number    = {4},
  pages     = {327--352},
  publisher = {Elsevier},
}

@book{Marden96,
	author = {Marden, J. I.},
	title = {Analyzing and Modeling Rank Data},
	publisher = {CRC Press},
	address = {London},
	year = {1996},
}

@Article{Hudry08,
  author  = {Hudry, O.},
  title   = {{NP}-hardness results for the aggregation of linear orders into median orders},
  journal = {Ann. Oper. Res.},
  year    = {2008},
  volume  = {163},
  pages   = {63--88},
}

@Article{Plackett75,
  author  = {Plackett, R. L.},
  title   = {The analysis of permutations},
  journal = {Applied Statistics},
  year    = {1975},
  volume  = {2},
  number  = {24},
  pages   = {193--202},
}

@InProceedings{KB05,
  author    = {Koltchinskii, V. and Beznosova, O.},
  title     = {Exponential Convergence Rates in Classification},
  booktitle = {Proceedings of COLT 2005},
  year      = {2005},
}

\newpage
\appendix
\onecolumn
\section{Technical Proofs}

\subsection*{Proof of Proposition \ref{prop:gen_bound}}
Observe first that, with probability one,
\begin{equation}\label{eq:dec}
W_d(P,P_{\widehat{P}_N})/2\leq \mathcal{E}'(\widehat{\mathcal{P}}_N)\leq \min_{\mathcal{P}\in \Pi_K}\mathcal{E}'(\mathcal{P})+2\max_{\mathcal{P}\in \Pi_K}\left\vert \widetilde{\mathcal{E}}'_N(\mathcal{P})- \mathcal{E}'(\mathcal{P}) \right\vert + 2\max_{\mathcal{P}\in \Pi_K}\left\vert \widetilde{\mathcal{E}}'_N(\mathcal{P})- \widehat{\mathcal{E}}'_N(\mathcal{P}) \right\vert.
\end{equation}

\begin{lemma}
For any partition $\mathcal{P}$ of $\mathfrak{S}_n$ s.t. $P(\mathcal{C})>0$ for all $\mathcal{C}\in \mathcal{P}$, we have: $\forall t>0$:
\begin{equation}\label{eq:bound_Ustat}
\mathbb{P}\left\{ \left\vert \widehat{\mathcal{E}}_N'(\mathcal{P})  - \mathcal{E}'(\mathcal{P}) \right\vert \geq t\right\}\leq 2e^{-Nt^2/\kappa^2_{\mathcal{P}}},
\end{equation}
where $\kappa_{\mathcal{P}}:=\max_{\mathcal{C}\in \mathcal{P}}\left\{\max_{(\sigma,\sigma')\in \mathcal{C}^2}d(\sigma,\sigma')/P(\mathcal{C}) \right\}$.
\end{lemma}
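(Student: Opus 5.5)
The plan is to read the lemma as a concentration statement for the $U$-statistic $\widetilde{\mathcal{E}}'_N(\mathcal{P})$ of \eqref{eq:U_stat}, not for the plug-in quantity $\widehat{\mathcal{E}}'_N(\mathcal{P})$. This is the term that the decomposition \eqref{eq:decomp} and the inequality \eqref{eq:dec} need, and it has a deterministic symmetric kernel $\Phi_{\mathcal{P}}$. The deviation $\widehat{\mathcal{E}}'_N-\widetilde{\mathcal{E}}'_N$, which comes from random denominators, is handled separately. I would then apply Hoeffding's inequality for $U$-statistics of degree $2$ with a bounded kernel, see \cite{Lee90} or \cite{Ser80}.

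\textbf{Step 1 (range of the kernel).} Because $\mathcal{P}$ is a partition, for any $(\sigma,\sigma')$ at most one indicator $\mathbb{I}\{(\sigma,\sigma')\in\mathcal{C}^2\}$ is nonzero. Hence $\Phi_{\mathcal{P}}(\sigma,\sigma')$ is either $0$ or $d(\sigma,\sigma')/P(\mathcal{C})$ for the single cell containing both points. It follows that $0\leq \Phi_{\mathcal{P}}\leq \kappa_{\mathcal{P}}$, so the kernel takes values in an interval of length $\kappa_{\mathcal{P}}$.

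\textbf{Step 2 (the mean).} For $\Sigma,\Sigma'$ i.i.d. with law $P$, the total probability rule gives
\[
\mathbb{E}[\Phi_{\mathcal{P}}(\Sigma,\Sigma')]=\sum_{\mathcal{C}\in\mathcal{P}}\frac{P(\mathcal{C})^2}{P(\mathcal{C})}\,\mathbb{E}[d(\Sigma,\Sigma')\mid \Sigma,\Sigma'\in\mathcal{C}].
\]
This equals $\sum_{\mathcal{C}\in\mathcal{P}}P(\mathcal{C})\cdot 2V'(\mathcal{C})$. So the $U$-statistic is unbiased for $\mathcal{E}'(\mathcal{P})$ up to the factor $2$ coming from the convention $V'_P=\mathbb{E}[d(\Sigma,\Sigma')]/2$. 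I would absorb this factor by normalizing $\Phi_{\mathcal{P}}$ (or $\widetilde{\mathcal{E}}'_N$) by $1/2$, so that its mean is exactly $\mathcal{E}'(\mathcal{P})$. The halved kernel has range at most $\kappa_{\mathcal{P}}/2\leq\kappa_{\mathcal{P}}$, so the constant only improves.

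\textbf{Step 3 (Hoeffding's representation).} Write the $U$-statistic as an average over all permutations $\pi$ of $\{1,\ldots,N\}$ of the i.i.d. block averages
\[
\frac{1}{\lfloor N/2\rfloor}\sum_{k=1}^{\lfloor N/2\rfloor}\Phi_{\mathcal{P}}(\Sigma_{\pi(2k-1)},\Sigma_{\pi(2k)}).
\]
Apply Jensen's inequality to the exponential moment, then Hoeffding's lemma to each block average of $\lfloor N/2\rfloor$ independent terms of range $\kappa_{\mathcal{P}}$. A Chernoff bound on both tails then yields
\[
\mathbb{P}\{|\widetilde{\mathcal{E}}'_N(\mathcal{P})-\mathcal{E}'(\mathcal{P})|\geq t\}\leq 2\exp\!\big(-2\lfloor N/2\rfloor t^2/\kappa_{\mathcal{P}}^2\big).
\]

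\textbf{Main obstacle.} The delicate point is matching the exponent exactly. For $N$ even, $2\lfloor N/2\rfloor=N$ and the claimed bound \eqref{eq:bound_Ustat} follows at once. For $N$ odd one only gets $N-1$. I would recover $N$ using the extra slack from Step 2: after halving, the kernel has range $\kappa_{\mathcal{P}}/2$, which gives exponent $8\lfloor N/2\rfloor t^2/\kappa^2_{\mathcal{P}}\geq Nt^2/\kappa^2_{\mathcal{P}}$ for every $N\geq 2$. If one insists on the unnormalized kernel instead, I would state the bound with $2\lfloor N/2\rfloor$ in place of $N$; this changes nothing of substance in Proposition \ref{prop:gen_bound}.
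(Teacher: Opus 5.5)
Your proposal is correct and is essentially the paper's argument. The paper's proof simply invokes Hoeffding's exponential inequality for degree-$2$ $U$-statistics (Serfling, Theorem A of Section 5.6) using the kernel bound $0\leq\Phi_{\mathcal{P}}\leq\kappa_{\mathcal{P}}$; that bound only makes sense for $\widetilde{\mathcal{E}}'_N(\mathcal{P})$, which is exactly your reading and how the lemma is used afterwards, and your Step 3 just re-derives that theorem. Your two extra observations are genuine slips in the paper's one-line proof, and your version repairs them: the $U$-statistic as defined has mean $2\mathcal{E}'(\mathcal{P})$ because $V'$ carries the factor $1/2$, and Serfling's bound gives the exponent $2\lfloor N/2\rfloor t^2/\kappa_{\mathcal{P}}^2$, which falls short of $Nt^2/\kappa_{\mathcal{P}}^2$ for odd $N$ unless one uses the halved kernel as you do.
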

\begin{proof}
The bound \eqref{eq:bound_Ustat} is obtained directly by applying the version of Hoeffding's exponential deviation inequality for $U$-statistics (see Theorem A of section 5.6 in \cite{Ser80}) to $\widehat{\mathcal{E}}_N'(\mathcal{P})$, a $U$-statistic of degree $2$ whose symmetric kernel satisfies: $\forall (\sigma,\sigma')\in \mathfrak{S}_n^2$, $0\leq \Phi_{\mathcal{P}}(\sigma,\sigma')\leq \kappa_{\mathcal{P}}$.
\end{proof}
As $\kappa_{\mathcal{P}}\leq \gamma(\Pi_K)/\chi(\Pi_K)$ for all $\mathcal{P}\in \Pi_K$, the union bound combined with \eqref{eq:bound_Ustat} implies that, for all $\delta\in (0,1)$, we have:
\begin{equation}\label{eq:term2}
\max_{\mathcal{P}\in \Pi_K}\left\vert \widetilde{\mathcal{E}}'_N(\mathcal{P})- \mathcal{E}'(\mathcal{P}) \right\vert\leq \sqrt{\frac{\gamma(\Pi_K)}{\chi(\Pi_K)}\frac{\log(2\# \Pi_K/\delta)}{N}},
\end{equation}
with probability larger than $1-\delta$.

Next, let us turn to the third term on the right hand side of \eqref{eq:dec}. Combining Hoefdding's inequality with the union bound, we immediately obtain that the event $\cap_{\mathcal{C}\in \mathcal{P}}\{N_{\mathcal{C}}\geq 2\}$ occurs with overwhelming probability:
\begin{equation}\label{eq:event}
\mathbb{P}\left\{ \cap_{\mathcal{C}\in \mathcal{P}}\{N_{\mathcal{C}}\geq 2\} \right\} \geq 1-Ke^{-2(N\chi(\Pi_K)-1)^2},
\end{equation}
as soon as $N\geq 1/\chi(\Pi_K)$.
On this event, notice that, for all $\mathcal{P}\in \Pi_K$, we have:
\begin{multline}\label{eq:bound_dec1}
\left\vert \widetilde{\mathcal{E}}'_N(\mathcal{P})- \widehat{\mathcal{E}}'_N(\mathcal{P})\right\vert =\left\vert \frac{2}{N(N-1)}\sum_{1\leq i<j\leq N}\sum_{\mathcal{C}\in \mathcal{P}}\left(\frac{N-1}{N_{\mathcal{C}}-1}  -\frac{1}{P(\mathcal{C})}\right)\mathbb{I}\{(\Sigma_i,\Sigma_j)\in \mathcal{C}^2\}d(\Sigma_i,\Sigma_j)\right\vert \\
\leq K \vert\vert d\vert\vert_{\infty} \cdot \max_{\mathcal{C}\in \mathcal{P}}\left\vert \frac{N-1}{N_{\mathcal{C}}-1}  -\frac{1}{P(\mathcal{C})} \right\vert .
\end{multline}
Combining Hoefdding's inequality with the union bound, we immediately obtain that, for all $\delta\in (0,1)$, we have with probability at least $1-\delta$:
\begin{equation}\label{eq:Hoeffding}
\max_{\mathcal{C}\in \mathcal{P}}\left\vert \widehat{P}_N(\mathcal{C})-P(\mathcal{C}) \right\vert \leq \sqrt{\frac{\log(2K/\delta)}{2N}},
\end{equation}
where $\widehat{P}_N(\mathcal{C})=N_{\mathcal{C}}/N$ for all $\mathcal{C}\in \mathcal{P}$.
Observing that $\vert(N_{\mathcal{C}}-1)/(N-1)-\widehat{P}_N(\mathcal{C})\vert \leq 2/(N-1)$ with probability one and combining the union bound with \eqref{eq:event}, \eqref{eq:bound_dec1}, \eqref{eq:Hoeffding} and the Taylor expansion
\begin{equation*}
    \frac{1}{x} = \frac{1}{a} - \frac{(x-a)}{a^2} + \frac{(x-a)^2}{xa^2},
\end{equation*}
one obtains that, for all $\delta\in (0,1)$, we have with probability larger than $1-\delta$:
\begin{equation}\label{eq:term3}
\left\vert \widetilde{\mathcal{E}}'_N(\mathcal{P})- \widehat{\mathcal{E}}'_N(\mathcal{P})\right\vert \leq C\frac{K\vert\vert d\vert\vert_{\infty}}{\chi^2(\Pi_K)}\sqrt{\frac{\log(4K/\delta)}{2N}},
\end{equation}
where $C$ is a universal constant, as soon as $N\geq \max\{\log(4K/\delta),\; (1+\sqrt{\log(2K/\delta)/2})/\chi(\Pi_K)\}$.
Finally, combining \eqref{eq:dec}, \eqref{eq:term2} and \eqref{eq:term3} with the union bound yiels the desired result.

\subsection*{Statistical Versions of Optimal CRDs}
Let $1\leq K\leq n!$ and $\Pi_K$ be a collection of partitions of $\mathfrak{S}_n$ counting $K$ cells. Consider now the distribution 
\begin{equation}\label{eq:emp_crd}
\widehat{P}_{\widehat{\mathcal{P}}_N}:=\sum_{\mathcal{C}\in \widehat{\mathcal{P}}_N}\widehat{P}_N(\mathcal{C})\delta_{\widehat{\sigma}^*_{\mathcal{C}}},
\end{equation}
where $\widehat{\mathcal{P}}_N$ is a minimizer of \eqref{eq:emp_crit} over $\Pi_K$ and, for all $\mathcal{C}\in \widehat{\mathcal{P}}_N$,
\begin{equation}\label{eq:emp_local_median}
\widehat{\sigma}^*_{\mathcal{C}}\in \argmin_{\sigma\in \mathfrak{S}_n}\sum_{i:\; \Sigma_i \in \mathcal{C}}d(\Sigma_i,\sigma). 
\end{equation}
Conditioned upon the $\Sigma_i$'s, we introduce as well the ranking distribution
\begin{equation}\label{eq:emp_crd2}
\widetilde{P}_{\widehat{\mathcal{P}}_N}:=\sum_{\mathcal{C}\in \widehat{\mathcal{P}}_N}P(\mathcal{C})\delta_{\widehat{\sigma}^*_{\mathcal{C}}},
\end{equation}

By triangle inequality, we have:
\begin{equation}\label{eq:triangle}
W_d(P, \widehat{P}_{\widehat{\mathcal{P}}_N})\leq W_d(\widetilde{P}_{\widehat{\mathcal{P}}_N}, \widehat{P}_{\widehat{\mathcal{P}}_N})+ W_d(P, \widetilde{P}_{\widehat{\mathcal{P}}_N}).
\end{equation}
Conditioned upon the $\Sigma_i$'s, consider the coupling $(\widetilde{\Sigma}, \Sigma)$ of the pair of ranking distributions $(\widetilde{P}_{\widehat{\mathcal{P}}_N}, P)$ defined by: 
\begin{equation}
\forall \mathcal{C}\in \widehat{\mathcal{P}}_N,\;\;  \mathbb{P}\{\widetilde{\Sigma}\in \mathcal{C} \mid \Sigma \in \mathcal{C},\; \Sigma_1,\; \ldots,\; \Sigma_N\}=+1 \text{ almost-surely}.
\end{equation}
In order to lighten the notation, we omit to write explicitely the conditioning upon the $\Sigma_i$'s from now on. We have:
\begin{multline}\label{eq:term2}
W_d(P, \widetilde{P}_{\widehat{\mathcal{P}}_N})\leq \mathbb{E}[d(\widetilde{\Sigma}, \Sigma)]=\sum_{\mathcal{C}\in \widehat{\mathcal{P}}_N}P(\mathcal{C})\mathbb{E}[d(\Sigma, \widehat{\sigma}^*_{\mathcal{C}})\mid \Sigma\in \mathcal{C}]\leq \\ \mathcal{E}(\widehat{\mathcal{P}}_N)+\sum_{\mathcal{C}\in \widehat{\mathcal{P}}_N}P(\mathcal{C})\left( \mathbb{E}[d(\Sigma, \widehat{\sigma}^*_{\mathcal{C}})\mid \Sigma\in \mathcal{C}]-\mathbb{E}[d(\Sigma, \sigma^*_{\mathcal{C}})\mid \Sigma\in \mathcal{C}]\right)\leq \\2\mathcal{E}'(\widehat{\mathcal{P}}_N)+
\max_{\mathcal{P}\in \Pi_K}\max_{\mathcal{C}\in \mathcal{P}}\left( \mathbb{E}[d(\Sigma, \widehat{\sigma}^*_{\mathcal{C}})\mid \Sigma\in \mathcal{C}] -\mathbb{E}[d(\Sigma, \sigma^*_{\mathcal{C}})\mid \Sigma\in \mathcal{C}] \right).
\end{multline}
Each expectation $\mathbb{E}[d(\Sigma, \widehat{\sigma}^*_{\mathcal{C}})\mid \Sigma\in \mathcal{C}]$ involved in the summation on the right hand side of \eqref{eq:term2} can be bounded by means of the lemma below, which corresponds to a version of Proposition 9 in \cite{CKS17} where the number of observations available to compute an empirical median of $P_{\mathcal{C}}$ is random, equal to $N_{\mathcal{C}}$, which is equivalent to $NP(C)$ with probability one.

\begin{lemma}\label{lem:local_emp_consensus}
Let $\mathcal{C}\subset \mathfrak{S}_n$ s.t. $P(\mathcal{C})\geq \chi>0$. Let $\widehat{\sigma}^*_{\mathcal{C}}$ be defined by \eqref{eq:emp_local_median}. Then, for all $\delta\in (0,1)$, we have with probability larger than $1-\delta$: 
\begin{equation}
\mathbb{E}[d(\Sigma, \widehat{\sigma}^*_{\mathcal{C}})\mid \Sigma\in \mathcal{C}]-\mathbb{E}[d(\Sigma, \sigma^*_{\mathcal{C}})\mid \Sigma\in \mathcal{C}]\leq \frac{n(n-1)}{2}\sqrt{\frac{2\log(2n(n-1)/\delta)}{(1+\chi)N}},
\end{equation}
as soon as $N\geq \log(2/\delta)/2$.
\end{lemma}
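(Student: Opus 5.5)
The plan is to reduce the excess local risk to deviations of the empirical local pairwise probabilities, as in the proof of Proposition 9 in \cite{CKS17}. The one genuinely new ingredient is that the effective sample size $N_{\mathcal{C}}$ is random. I also expect that the stated denominator $(1+\chi)N$ cannot be reached as written; this is explained in the last paragraph.

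\textbf{Step 1 (deterministic reduction).} For $d=d_\tau$, write $p_{i,j}=p_{i,j}(\mathcal{C})$ and let $\widehat{p}_{i,j}$ be the empirical frequency of $\{\Sigma_k(i)<\Sigma_k(j)\}$ among the $N_{\mathcal{C}}$ observations falling in $\mathcal{C}$. Then
$$\mathbb{E}[d(\Sigma,\sigma)\mid \Sigma\in\mathcal{C}]-\mathbb{E}[d(\Sigma,\sigma^*_{\mathcal{C}})\mid \Sigma\in\mathcal{C}]=\sum_{i<j}\vert 2p_{i,j}-1\vert\,\mathbb{I}\{\sigma,\sigma^*_{\mathcal{C}} \text{ disagree on } (i,j)\}.$$
Since $\widehat{\sigma}^*_{\mathcal{C}}$ minimizes the empirical local risk, the usual ERM argument (comparing the true and empirical risks at $\widehat{\sigma}^*_{\mathcal{C}}$ and at $\sigma^*_{\mathcal{C}}$) gives the excess risk
$$\leq 2\sum_{i<j}\vert \widehat{p}_{i,j}-p_{i,j}\vert\leq n(n-1)\max_{i<j}\vert \widehat{p}_{i,j}-p_{i,j}\vert .$$

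\textbf{Step 2 (conditional Hoeffding).} Conditionally on $N_{\mathcal{C}}=m\geq 1$, the observations lying in $\mathcal{C}$ are i.i.d. with law $P_{\mathcal{C}}$. Applying Hoeffding's inequality to each of the $n(n-1)/2$ pairs, with a union bound at confidence level $\delta/2$, gives with probability at least $1-\delta/2$:
$$\max_{i<j}\vert \widehat{p}_{i,j}-p_{i,j}\vert\leq \sqrt{\log(2n(n-1)/\delta)/(2m)}.$$
This accounts for the $\log(2n(n-1)/\delta)$ factor. Combined with Step 1, it yields the excess-risk bound
$$\frac{n(n-1)}{2}\sqrt{\frac{2\log(2n(n-1)/\delta)}{N_{\mathcal{C}}}},$$
which has exactly the form of the stated bound with $N_{\mathcal{C}}$ in place of $(1+\chi)N$. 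Because the conditional bound holds for every value $m$, integrating over the law of $N_{\mathcal{C}}$ makes it hold unconditionally.

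\textbf{Step 3 (lower bound on $N_{\mathcal{C}}$; the main obstacle).} The remaining $\delta/2$ is spent on a lower deviation of $N_{\mathcal{C}}\sim\mathrm{Bin}(N,P(\mathcal{C}))$. Hoeffding's inequality gives $N_{\mathcal{C}}\geq N\chi-\sqrt{N\log(2/\delta)/2}$ with probability at least $1-\delta/2$, and the condition $N\geq\log(2/\delta)/2$ is what makes $\sqrt{N\log(2/\delta)/2}\leq N$. A final union bound over the two events then gives probability at least $1-\delta$.

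The difficulty is converting this into the stated denominator $(1+\chi)N$, and I do not see how to do it. Since $N_{\mathcal{C}}\leq N<(1+\chi)N$ always, no lower bound on $N_{\mathcal{C}}$ can deliver that denominator. The denominator is therefore not attainable as written, and I do not expect it to be a faithful reading of the intended bound.

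What I would try first is to track the constants above carefully, so that the source of the factor $1+\chi$ (presumably $1+\chi$ arising from the binomial lower deviation in place of $\chi$) becomes visible. If that fails, I would record that the argument provably delivers a denominator of order $\chi N$, i.e. $N\chi-\sqrt{N\log(2/\delta)/2}$, which is positive for $N$ large; this sample-size requirement also differs from the stated $N\geq\log(2/\delta)/2$. This is the only point where I cannot match the statement as the paper words it.
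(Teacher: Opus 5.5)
Your argument follows the route the paper's proof takes. Proposition 9(ii) of \cite{CKS17} is applied conditionally on $N_{\mathcal{C}}$ at level $\delta/2$; your Steps 1--2 do this and correctly give the bound with $N_{\mathcal{C}}$ in the denominator. Then comes a Hoeffding lower deviation for $N_{\mathcal{C}}$ at level $\delta/2$, and a union bound. Your diagnosis of Step 3 is also correct, and it is exactly where the paper's own proof breaks. The paper writes Hoeffding's inequality with the wrong sign, $\widehat{P}_N(\mathcal{C})\geq P(\mathcal{C})+\sqrt{\log(1/\delta)/(2N)}$, and then asserts $N_{\mathcal{C}}\geq N(\chi+1)$ as soon as $N\geq \log(1/\delta)/2$. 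This is where both the $(1+\chi)N$ denominator and the sample-size condition come from, and it is impossible since $N_{\mathcal{C}}\leq N$. With the correct sign, $N\geq\log(2/\delta)/2$ only yields the vacuous $N_{\mathcal{C}}\geq N(\chi-1)$.

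Your remark that $N_{\mathcal{C}}\leq N<(1+\chi)N$ only rules out this method, but you can go further: the statement is false, so no proof exists. Take $n=2$, $d=d_{\tau}$, $\mathcal{C}=\mathfrak{S}_2$ and $\chi=1$, so the right-hand side equals $\sqrt{\log(4/\delta)/N}$. If $p_{1,2}=1/2+h$ with $h>0$, the excess local risk equals $2h$ whenever fewer than $N/2$ observations rank item $1$ before item $2$. For $\delta=10^{-3}$ one has $\sqrt{\log(4/\delta)}\approx 2.88$. Choosing $h=1.45/\sqrt{N}$ makes $2h$ exceed the right-hand side. By the central limit theorem, the probability of such a reversed majority tends to $\Phi(-2.9)\approx 1.9\times 10^{-3}>\delta$, where $\Phi$ is the standard normal distribution function. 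Hence the inequality fails with probability larger than $\delta$ for large $N$, even though here $N_{\mathcal{C}}=N$ deterministically.

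For small $\chi$ the failure is severe. For $n=3$, take a cell of mass $\chi$ made of two permutations at Kendall distance $1$, with conditional masses $1/2\pm h$. It receives only about $\chi N$ observations, and the failure probability can be made arbitrarily close to $1/2$.

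What your three steps actually prove is the corrected lemma: with probability at least $1-\delta$,
\[
\mathbb{E}[d(\Sigma,\widehat{\sigma}^*_{\mathcal{C}})\mid \Sigma\in\mathcal{C}]-\mathbb{E}[d(\Sigma,\sigma^*_{\mathcal{C}})\mid \Sigma\in\mathcal{C}]\leq \frac{n(n-1)}{2}\sqrt{\frac{2\log(2n(n-1)/\delta)}{N\chi-\sqrt{N\log(2/\delta)/2}}},
\]
provided $N>\log(2/\delta)/(2\chi^{2})$. For instance, if $N\geq 2\log(2/\delta)/\chi^{2}$, then $N_{\mathcal{C}}\geq \chi N/2$ and the bound becomes $\frac{n(n-1)}{2}\sqrt{4\log(2n(n-1)/\delta)/(\chi N)}$. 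This version, with rate $1/\sqrt{\chi N}$ and a sample-size requirement of order $\log(1/\delta)/\chi^{2}$, is what should be carried into Proposition~\ref{prop:gen_bound2}.

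One minor point in Step 1: the identity with $\vert 2p_{i,j}-1\vert$ requires $\sigma^*_{\mathcal{C}}$ to agree with every pairwise majority, for instance under stochastic transitivity of $P_{\mathcal{C}}$. You never use it, however. Writing $\widehat{L}_{\mathcal{C}}$ for the empirical local risk, the ERM bound $L_{P_{\mathcal{C}}}(\widehat{\sigma}^*_{\mathcal{C}})-L_{P_{\mathcal{C}}}(\sigma^*_{\mathcal{C}})\leq 2\max_{\sigma}\vert \widehat{L}_{\mathcal{C}}(\sigma)-L_{P_{\mathcal{C}}}(\sigma)\vert\leq 2\sum_{i<j}\vert \widehat{p}_{i,j}-p_{i,j}\vert$ holds in general.
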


\begin{proof}
It follows from Hoeffding's inequality that, with probability at least $1-\delta$:
\begin{equation*}\label{eq:event1}
\widehat{P}_N(\mathcal{C})\geq P(\mathcal{C})+\sqrt{\frac{\log(1/\delta)}{2N}}.
\end{equation*}
Hence, with probability larger than $1-\delta$, we have:
\begin{equation*}\label{eq:event1}
N_{\mathcal{C}}\geq N(\chi+1),
\end{equation*}
as soon as $N\geq \log(1/\delta)/2$.
Combining this with the bound stated in assertion (ii) of Proposition 9 in \cite{CKS17} (applied conditionally to $N_{\mathcal{C}}$) and the union bound yields the desired result.
\end{proof}

Applying Lemma \ref{lem:local_emp_consensus} combined with the union bound, we obtain that, with probability at least $1-\delta$:
\begin{equation}\label{eq:term_max}
\max_{\mathcal{P}\in \Pi_K}\max_{\mathcal{C}\in \mathcal{P}}\left( \mathbb{E}[d(\Sigma, \widehat{\sigma}^*_{\mathcal{C}})\mid \Sigma\in \mathcal{C}] -\mathbb{E}[d(\Sigma, \sigma^*_{\mathcal{C}})\mid \Sigma\in \mathcal{C}] \right)\leq \frac{n(n-1)}{2}\sqrt{\frac{2\log(2n(n-1)K\#\Pi_K/\delta)}{(1+\chi(\Pi_K))N}},
\end{equation}
as soon as $N\geq \log(2K\# \Pi_K/\delta)/2$.

Now turn to the first term on the right hand side of \eqref{eq:triangle}. Conditioned upon the $\Sigma_i$'s, consider any coupling 
$(\widetilde{\Sigma}, \widehat{\Sigma})$ of the pair of ranking distributions $(\widetilde{P}_{\widehat{\mathcal{P}}_N}, \widetilde{P}_{\widehat{\mathcal{P}}_N})$ such that: $\forall \mathcal{C}\in \widehat{\mathcal{P}}_N$,
\begin{equation*}
\begin{array}{cccc}
\mathbb{P}\{\widehat{\Sigma}\in \mathcal{C} \mid \widetilde{\Sigma} \in \mathcal{C},\; \Sigma_1,\; \ldots,\; \Sigma_N\}&=+1& \text{ almost-surely}& \text{if } P(\mathcal{C})\leq \widehat{P}_N(\mathcal{C}),\\
\mathbb{P}\{\widetilde{\Sigma}\in \mathcal{C} \mid \widehat{\Sigma} \in \mathcal{C},\; \Sigma_1,\; \ldots,\; \Sigma_N\}&=+1& \text{ almost-surely}& \text{if } P(\mathcal{C})\geq \widehat{P}_N(\mathcal{C}).
\end{array}
\end{equation*}
We clearly have:
\begin{equation}
 W_d(\widetilde{P}_{\widehat{\mathcal{P}}_N}, \widehat{P}_{\widehat{\mathcal{P}}_N})\leq \mathbb{E}[d(\widetilde{\Sigma}, \widehat{\Sigma})]\leq \vert\vert d \vert\vert_{\infty}\sum_{\mathcal{C}\in \widehat{\mathcal{P}}_N}\left\vert \widehat{P}_N(\mathcal{C})-P(\mathcal{C}) \right\vert\leq \vert\vert d \vert\vert_{\infty}K \max_{\mathcal{P}\in \Pi_K}\max_{\mathcal{C}\in \mathcal{P}}\left\vert \widehat{P}_N(\mathcal{C})-P(\mathcal{C}) \right\vert.
\end{equation}
Combined with \eqref{eq:Hoeffding} and the union bound, the bound above permits to show that, with probability larger than $1-\delta$:
\begin{equation}\label{eq:term_last}
 W_d(\widetilde{P}_{\widehat{\mathcal{P}}_N}, \widehat{P}_{\widehat{\mathcal{P}}_N})\leq \vert\vert d \vert\vert_{\infty}K \sqrt{\frac{\log(2K\#\Pi_K/\delta)}{2N}}.
\end{equation}

From Proposition \ref{prop:gen_bound} combined with \eqref{eq:triangle}, \eqref{eq:term2}, \eqref{eq:term_max} and \eqref{eq:term_last} we deduce the following result.

\begin{proposition}\label{prop:gen_bound2}
Suppose the assumptions of Proposition \ref{prop:gen_bound} are satisfied. Consider any minimizer $\widehat{\mathcal{P}}_N$ of the empirical criterion \eqref{eq:emp_crit} over $\Pi_K$. For all $\delta\in (0,1)$, we have with probability at least $1-\delta$:
\begin{multline*}
W_d(P, \widehat{P}_{\widehat{\mathcal{P}}_N})/2\leq \mathcal{E}'(\widehat{\mathcal{P}}_N)\leq \min_{\mathcal{P}\in \Pi_K}\mathcal{E}'(\mathcal{P})+
 2\sqrt{\frac{\gamma(\Pi_K)}{\chi(\Pi_K)}\frac{\log(12\# \Pi_K/\delta)}{N}}+ 2C\frac{K\vert\vert d\vert\vert_{\infty}}{\chi^2(\Pi_K)}\sqrt{\frac{\log(24K/\delta)}{2N}}\\
 \frac{1}{2}\binom{n}{2}\sqrt{\frac{6\log(2n(n-1)K\#\Pi_K/\delta)}{(1+\chi(\Pi_K))N}}+
\frac{\vert\vert d \vert\vert_{\infty}K}{2} \sqrt{\frac{\log(6K\#\Pi_K/\delta)}{2N}},
\end{multline*}
when $N\geq \max\{\log(24K/\delta),\; (1+\sqrt{\log(12K/\delta)/2})/\chi(\Pi_K),\; \log(6K\# \Pi_K/\delta)/2\}$.
\end{proposition}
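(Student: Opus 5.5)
The bound on $W_d(P,\widehat{P}_{\widehat{\mathcal{P}}_N})$ comes from the triangle inequality \eqref{eq:triangle}, which introduces the intermediate distribution $\widetilde{P}_{\widehat{\mathcal{P}}_N}$ of \eqref{eq:emp_crd2}. It has the true cell weights and the empirical local medians. Conditionally on the sample, $\widehat{\mathcal{P}}_N$ and the $\widehat{\sigma}^*_{\mathcal{C}}$ are fixed, so each of the two Wasserstein terms can be bounded deterministically through an explicit coupling. Probability enters only through a handful of high-probability events, which are combined at the end by a union bound with $\delta$ split into several pieces. This splitting is the source of the constants $12$, $24$, $6$ in the statement.

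\emph{Second term.} I use the cell-preserving coupling, as in Proposition \ref{prop:bound}. It gives
\begin{equation*}
W_d(P,\widetilde{P}_{\widehat{\mathcal{P}}_N})\leq \sum_{\mathcal{C}}P(\mathcal{C})\,\mathbb{E}[d(\Sigma,\widehat{\sigma}^*_{\mathcal{C}})\mid \Sigma\in\mathcal{C}].
\end{equation*}
I add and subtract the oracle local risk $V(\mathcal{C})$ in each cell. The oracle part is $\mathcal{E}(\widehat{\mathcal{P}}_N)\leq 2\mathcal{E}'(\widehat{\mathcal{P}}_N)$ by $V\leq 2V'$. The remainder is at most the maximal excess local risk over all cells of all partitions in $\Pi_K$, because the weights $P(\mathcal{C})$ sum to one. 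That excess risk is controlled by Lemma \ref{lem:local_emp_consensus}, with a union bound over at most $K\#\Pi_K$ cells, which gives \eqref{eq:term_max}. Then $\mathcal{E}'(\widehat{\mathcal{P}}_N)$ is replaced by its oracle bound from Proposition \ref{prop:gen_bound}, applied at confidence $\delta/3$.

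\emph{First term.} $\widetilde{P}_{\widehat{\mathcal{P}}_N}$ and $\widehat{P}_{\widehat{\mathcal{P}}_N}$ put their mass on the same atoms $\widehat{\sigma}^*_{\mathcal{C}}$ and differ only in the weights. A coupling that keeps the common mass $\min\{P(\mathcal{C}),\widehat{P}_N(\mathcal{C})\}$ on the same atom moves only total-variation mass, so the cost is at most $\|d\|_\infty\sum_{\mathcal{C}}|\widehat{P}_N(\mathcal{C})-P(\mathcal{C})|\leq \|d\|_\infty K\max_{\mathcal{C}}|\widehat{P}_N(\mathcal{C})-P(\mathcal{C})|$. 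Hoeffding's inequality with a union bound over all cells of $\Pi_K$ then gives \eqref{eq:term_last}.

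Finally, I intersect the three events, each at level $\delta/3$, and collect the terms to get the displayed inequality, after dividing by $2$ to match the normalization of the left-hand side. The sample-size condition is the maximum of the conditions required by Proposition \ref{prop:gen_bound} at level $\delta/3$ and by Lemma \ref{lem:local_emp_consensus} and Hoeffding at their respective levels. I expect the main obstacle to be the excess local risk step. There $\widehat{\sigma}^*_{\mathcal{C}}$ is built from a random number $N_{\mathcal{C}}$ of observations, and the cell $\mathcal{C}$ is chosen from the same data. I would handle this by bounding uniformly over the finite family of all cells in $\Pi_K$. For each fixed cell I would condition on $N_{\mathcal{C}}$, which makes the in-cell sample i.i.d. from $P_{\mathcal{C}}$, apply the bound of \cite{CKS17}, and lower-bound $N_{\mathcal{C}}$ by a constant multiple of $N\chi(\Pi_K)$ on a Hoeffding event.
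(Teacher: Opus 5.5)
Your proposal follows the paper's proof essentially step for step. It uses the triangle inequality through $\widetilde{P}_{\widehat{\mathcal{P}}_N}$, then the cell-preserving coupling with $V\leq 2V'$ and Lemma~\ref{lem:local_emp_consensus} under a union bound over all cells of all partitions in $\Pi_K$ (which is what handles the data-dependent $\widehat{\mathcal{P}}_N$). The weights are handled by the common-mass coupling plus Hoeffding, and Proposition~\ref{prop:gen_bound} is applied at level $\delta/3$, which accounts for the constants and sample-size conditions in the statement.

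The point to correct is the step you single out as the main obstacle. The paper obtains the denominator $(1+\chi(\Pi_K))N$ in Lemma~\ref{lem:local_emp_consensus} from the claim that $N_{\mathcal{C}}\geq N(1+\chi)$ with high probability. That is impossible, since $N_{\mathcal{C}}\leq N$.

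Your plan is the sound version: condition on $N_{\mathcal{C}}$ so the in-cell sample is i.i.d.\ from $P_{\mathcal{C}}$, apply \cite{CKS17}, and lower-bound $N_{\mathcal{C}}$ by $cN\chi(\Pi_K)$ on a Hoeffding or multiplicative Chernoff event. But it yields a term of order $\binom{n}{2}\sqrt{\log(2n(n-1)K\#\Pi_K/\delta)/(\chi(\Pi_K)N)}$. This order cannot be improved in general, because a cell of mass $\chi$ holds only about $\chi N$ observations.

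So your argument proves the displayed bound with $(1+\chi(\Pi_K))N$ replaced by $c\,\chi(\Pi_K)N$, plus a $\chi$-dependent sample-size condition for that event. It does not give the $(1+\chi(\Pi_K))$ term literally, and you should not cite Lemma~\ref{lem:local_emp_consensus} as stated.

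Note also that your argument, like the paper's, bounds $W_d(P,\widehat{P}_{\widehat{\mathcal{P}}_N})/2$ by the full right-hand side. The intermediate inequality $W_d(P,\widehat{P}_{\widehat{\mathcal{P}}_N})/2\leq\mathcal{E}'(\widehat{\mathcal{P}}_N)$ printed in the display is proved by neither argument. It is false in general: take $n=2$, $K=1$ and a sample whose majority disagrees with the Kemeny median.
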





\subsection{Ranking Distribution Characterization by Local Pairwise Marginals}

While it is well-known that the pairwise probabilities $p_{i,j}$ do not characterize a ranking distribution $P$ in general, the collection of local pairwise probabilities $p_{i,j}(\mathcal{C})$, where $\mathcal{C}$ describes the ensemble of ranking subsets of the form \eqref{eq:cell_form}. Consider the lexicographic order on $\{(i,j)\in \n^2:\; i<j \}$: $(i,j)\prec (k,l)$ iff either '$i<k$' or else '$i=k$ and $j<l$'. 

Equipped with this notation, for any $\sigma\in \mathfrak{S}_n$, one can write
\begin{multline}
P(\sigma)=\mathbb{P}\left\{\bigcap_{i<j}\{(\Sigma(i)-\Sigma(j))(\sigma(i)-\sigma(j))>0\}\right\}\\=\left(p_{1,2}\mathbb{I}\{\sigma(1)<\sigma(2)\}+(1-p_{1,2})\mathbb{I}\{\sigma(2)<\sigma(1)\}\right)\times \\ \mathbb{P}\left\{\bigcap_{k<l:\; (1,2)\prec (k,l)}\{(\Sigma(k)-\Sigma(l))(\sigma(l)-\sigma(l))>0\}\mid (\Sigma(1)-\Sigma(2))(\sigma(1)-\sigma(2))>0 \right\}
\end{multline}
and the result follows by induction.

The assumption underlying the \textsc{COAST} algorithm is that it is possible to approximate the distribution $P$ under study based on knowledge of the local pairwise probabilities $p_{i,j}(\mathcal{C})$, for a few well-chosen subsets $\mathcal{C}$ of the form \eqref{eq:cell_form} forming a partition of $\mathfrak{S}_n$.

\subsection{Alternative Dispersion Criterion}

The dispersion of a ranking probability distribution $P$ can be measured by
$$
V''_P=\sum_{k<l}\min\{p_{k,l},\; 1-p_{k,l}\},
$$
which quantity can be directly computed from the pairwise probabilities $p_{i,j}$, as $V'_P$. Notice that $V_P\leq V'_P\leq V''_P$ and the equalities hold true when $P$ is SST. Observe also that, for any partition $\mathcal{P}$ of $\mathfrak{S}_n$, we have:
\begin{equation}
\sum_{\mathcal{C}\in \mathcal{P}}P(\mathcal{C})V''(\mathcal{C})\leq V''_{P}.
\end{equation}
This simply results from the fact that: $\forall \mathcal{C}\in \mathcal{P}$,
$$
V''(\mathcal{C})=\sum_{k<l}\min\{p_{k,l}(\mathcal{C}),\; 1-p_{k,l}(\mathcal{C})\}
$$
and, for all $k<l$,
\begin{equation}
p_{k,l}=\sum_{\mathcal{C}\in \mathcal{P}}P(\mathcal{C})p_{k,l}(\mathcal{C}).
\end{equation}



\section{Additional Experiments and Applications}

\subsection{Visualization beyond ranking Depth: DD-plots on local vs. global depth}
Visualizing distributions of rankings is particularly challenging, since standard tools such as histograms are not applicable. 
Originally introduced for multivariate data in \cite{Liu1999} and \cite{pmlr-v151-goibert22a}, DD-plots were proposed as a visualization tool to illustrate variations in location and scale across different distributions. We extend this framework to the setting of CRD and general mixtures of rankings. Our goal is to visualize a CRD through their relative positions with respect to both the partition-wise (local) depth and the overall (global) depth, denoted by $D_{\hat P(\mathcal C)}$ and $D_{\hat P_N}$, respectively.

 Given a sample $S$ drawn from a CDR model, we consider the associated empirical measures $\hat P_{\mathcal C}$, corresponding to a partition $\mathcal C$, and $\hat P_N$, associated with the entire sample. The DD-plot is constructed by representing each ranking $\sigma \in S$ as a point in the plane with coordinates
\[
\bigl( D_{\hat P(\mathcal C)}(\sigma),\, D_{\hat P_N}(\sigma) \bigr).
\]
Points are color-coded according to the mixture component from which the ranking was generated. 
This representation enables a direct visual comparison between local and global depth behaviors, thereby revealing structural features of the CRD, such as separation, overlap, or heterogeneity among components, that may not be apparent when relying on a single notion of depth.

In  Figure~\ref{fig:sampling_10_4}, we illustrate the DD-plot obtained from samples  drawn from a mixture of permutations of size $n = 10$ with $K = 4$ components, ranging from a highly peaked distribution (left panel) to a more uniform distribution (right panel). After $K$ iterations of the COAST algorithm the partitioning structure recovered allows the evaluation of $( D_{\hat P(\mathcal C)}(\sigma),\, D_{\hat P_N}(\sigma) )$. The chosen local depth $D_{\hat P(\mathcal C)}(\sigma)$ corresponds to cluster $k=0$. 

This graphical representation reveals a clear clustering structure: points tend to aggregate in specific regions of the DD-plot, reflecting the underlying clustering of the components in the data. This highlights the diagnostic power of the DD-plot for identifying structural heterogeneity in ranking samples. In particular, while observations are not clearly distinguishable when considering the global depth $D_{\hat P_N}$, they become well separated when local depth values $D_{\hat P(\mathcal C)}$ are used.

The same conclusion are drawn for larger number of components, Figure~\ref{fig:sampling_10_8}, and Plackett-Luce model, Figure~\ref{fig:sampling_PL}.

\begin{figure}[h]
\centering
\includegraphics[width=.2\linewidth]{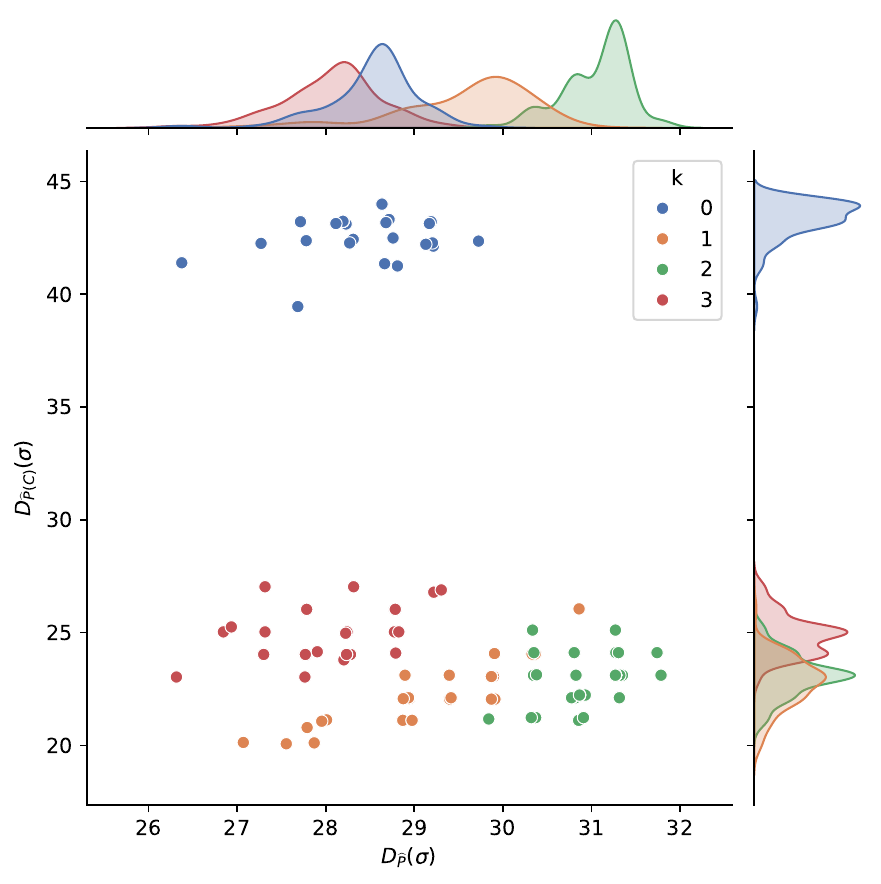}
\includegraphics[width=.2\linewidth]{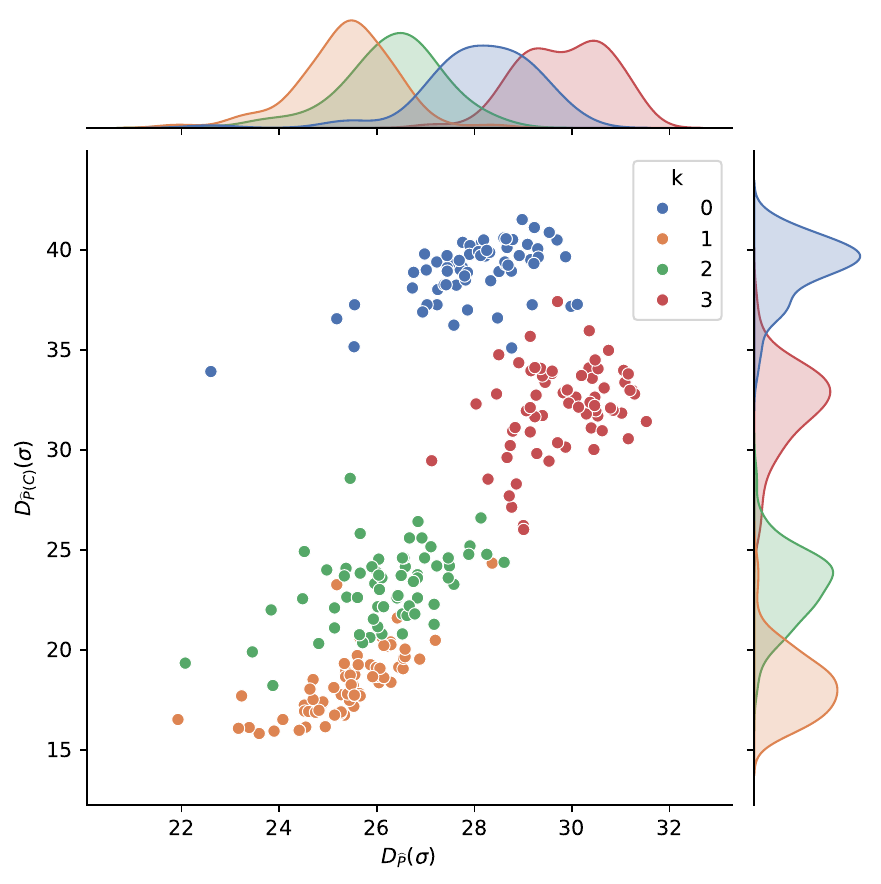}
\includegraphics[width=.2\linewidth]{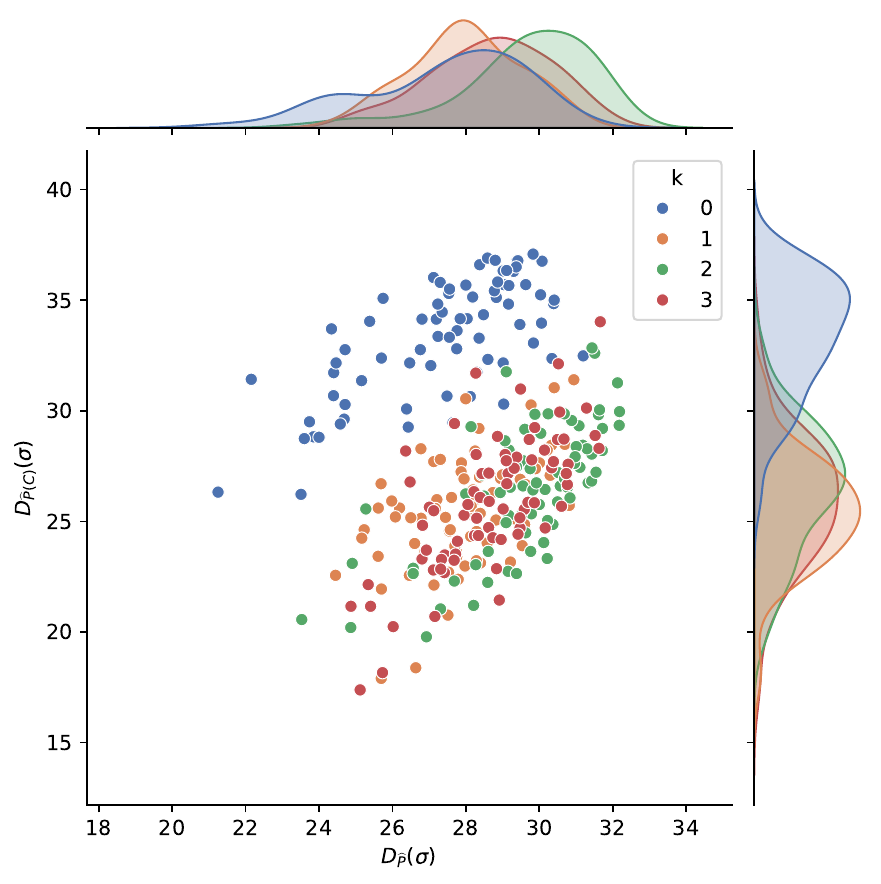}
\caption{DD-plots on local vs. global depth: Mallows model $n=10, k=4$}
\label{fig:sampling_10_4}
\end{figure}

\begin{figure}[h]
\centering
\includegraphics[width=.2\linewidth]{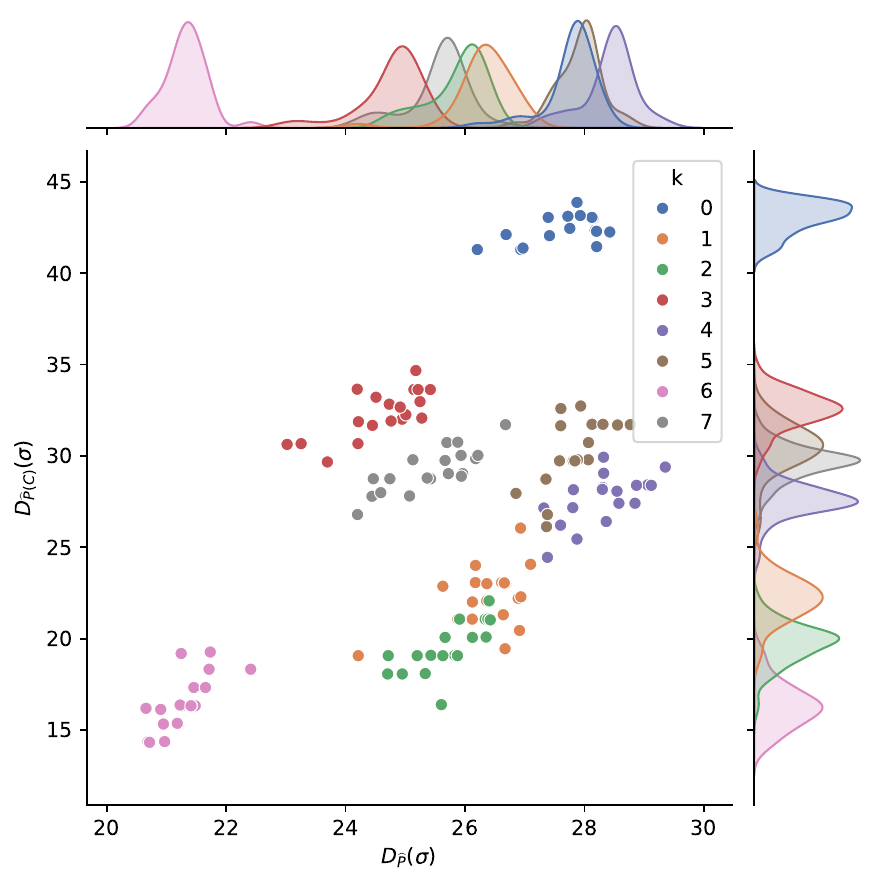}
\includegraphics[width=.2\linewidth]{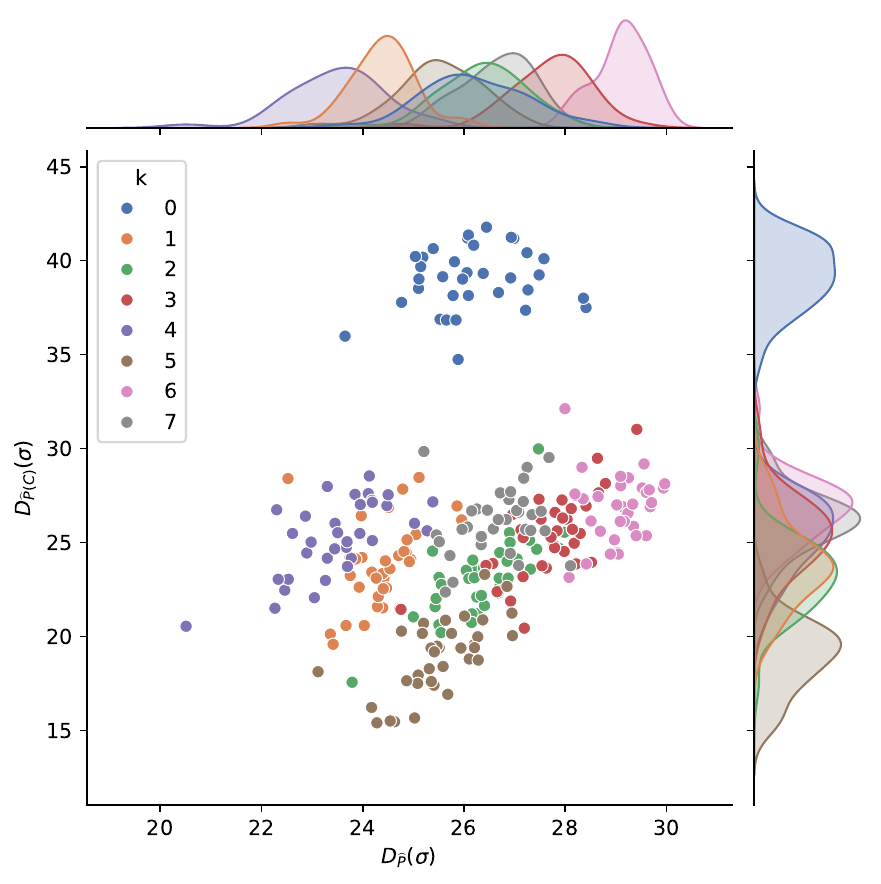}
\includegraphics[width=.2\linewidth]{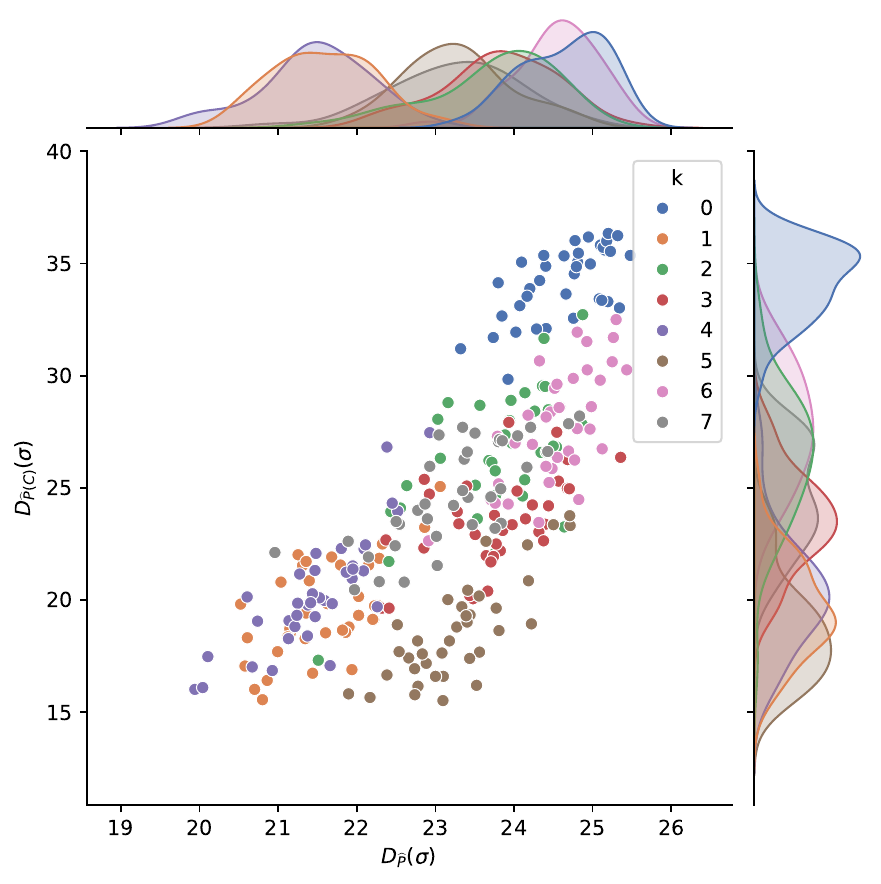}
\caption{DD-plots on local vs. global depth: Mallows model $n=10, k=8$}
\label{fig:sampling_10_8}
\end{figure}

\begin{figure}[h]
\centering
\includegraphics[width=.2\linewidth]{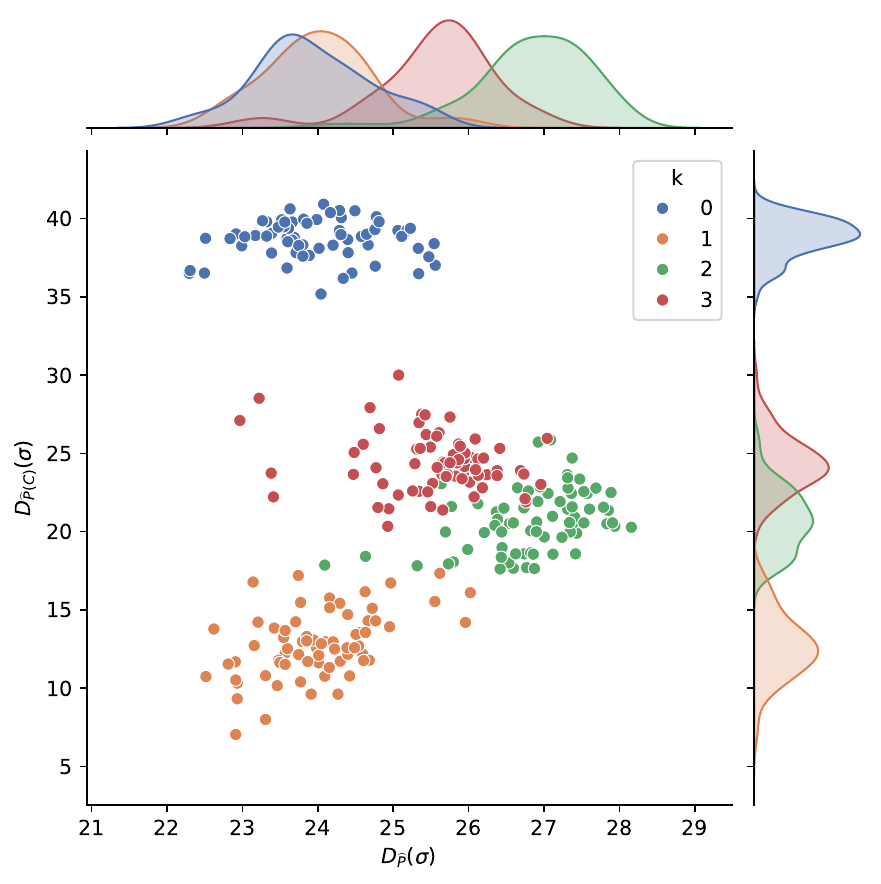}
\includegraphics[width=.2\linewidth]{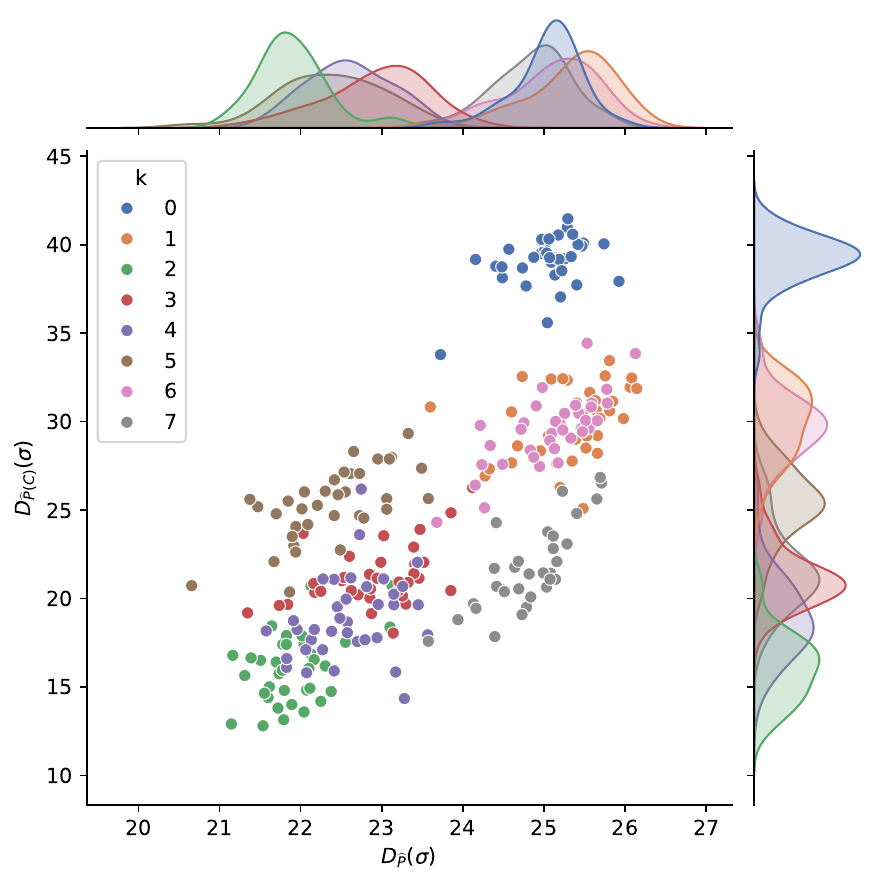}
\caption{DD-plots on local vs. global depth: Plackett-Luce for $n=10$, $k=4$ and $n=10$, $k=8$.}
\label{fig:sampling_PL}
\end{figure}

\subsection{Further experiments on the COAST}

The experimental framework of the paper is detailed in this Appendix, where we consider a broader range of configurations and models, including the Plackett–Luce and Mallows models. In all experiments, the mixture components share the same scale parameter. Two splitting criteria are evaluated, and both yield results of comparable quality. Although the alternative criterion is not optimal in the sense that it does not guarantee the largest decrease of the distortion at each iteration, both procedures ultimately converge to a similar level of distortion.

In addition, we report a table summarizing the computational times (in seconds) obtained on a standard personal computer. The theoretical differences in computational complexity are reflected in practice: time results are similar for small $n$ and $k$; however, for $n = 150$ the exact splitting method requires more than 15 minutes of computation, whereas the alternative criterion completes in approximately 15 seconds.

\begin{figure}[h]
\centering
\includegraphics[width=.2\linewidth]{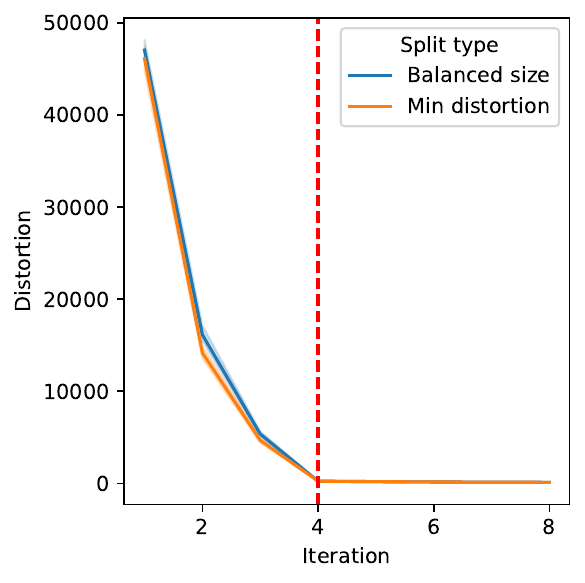}
\includegraphics[width=.2\linewidth]{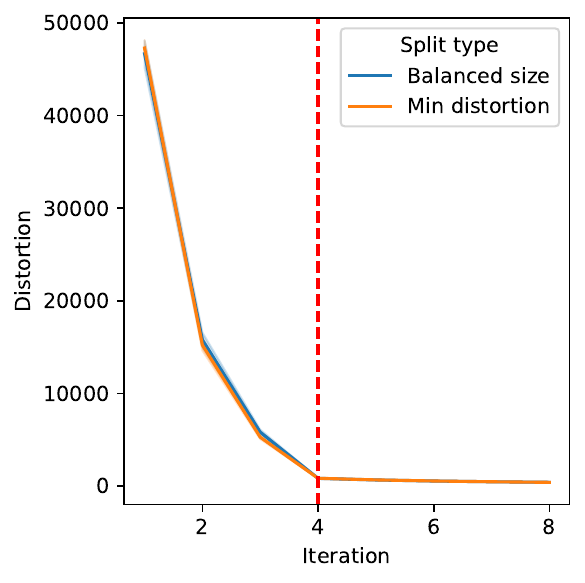}
\includegraphics[width=.2\linewidth]{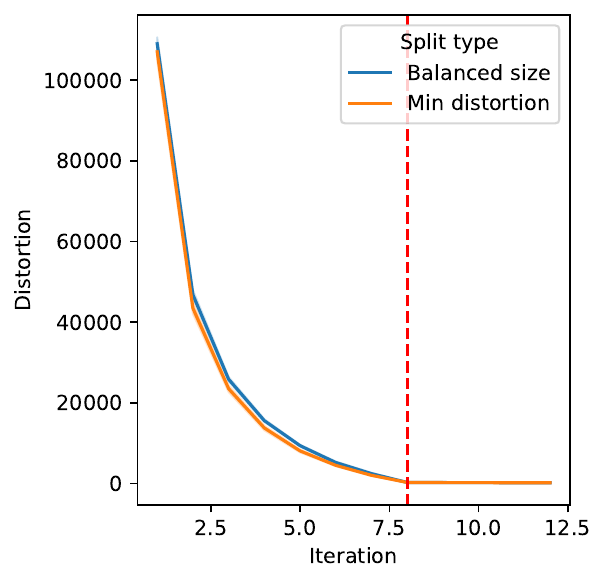}
\includegraphics[width=.2\linewidth]{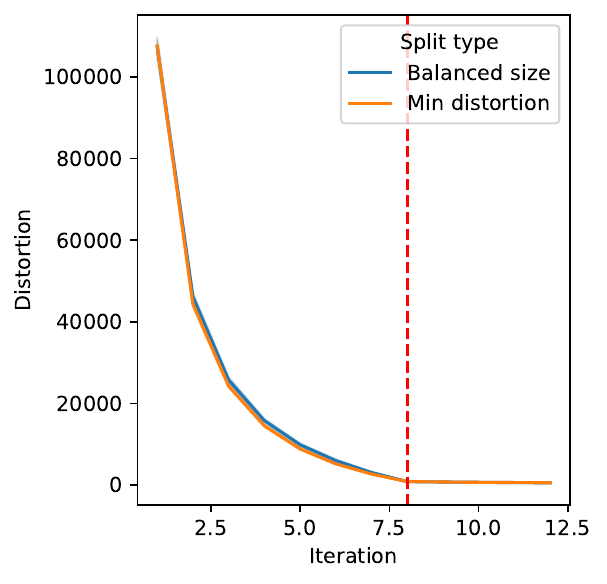}\\
\includegraphics[width=.2\linewidth]{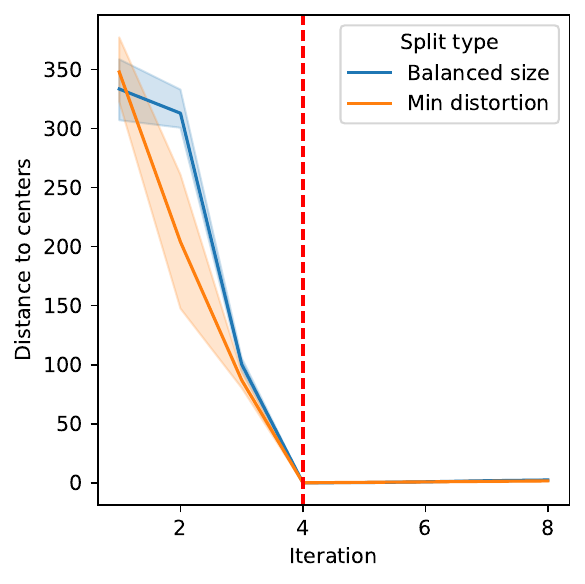}
\includegraphics[width=.2\linewidth]{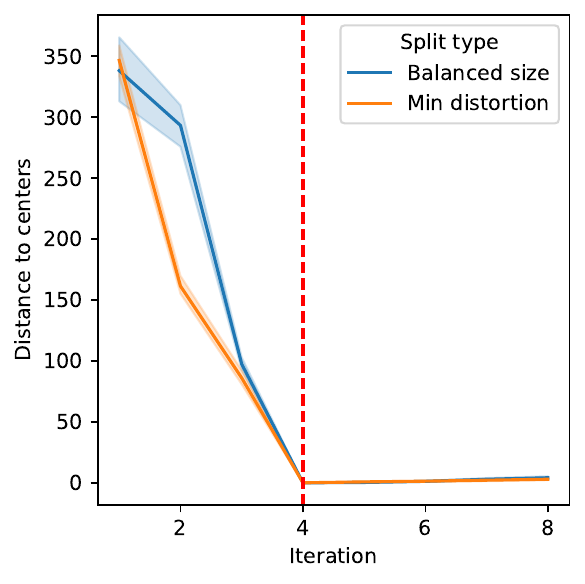}
\includegraphics[width=.2\linewidth]{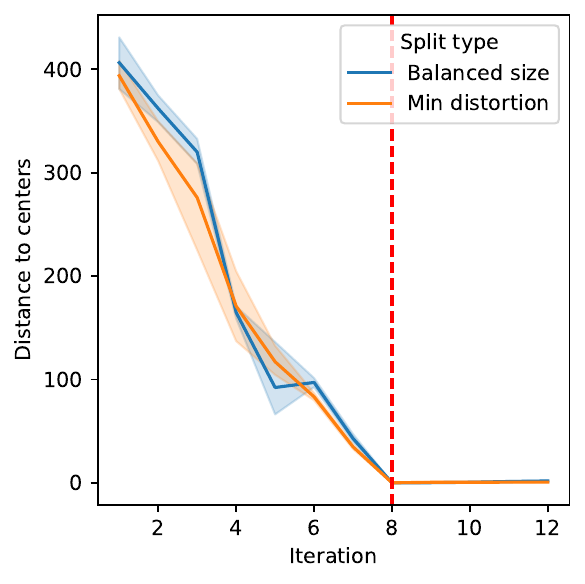}
\includegraphics[width=.2\linewidth]{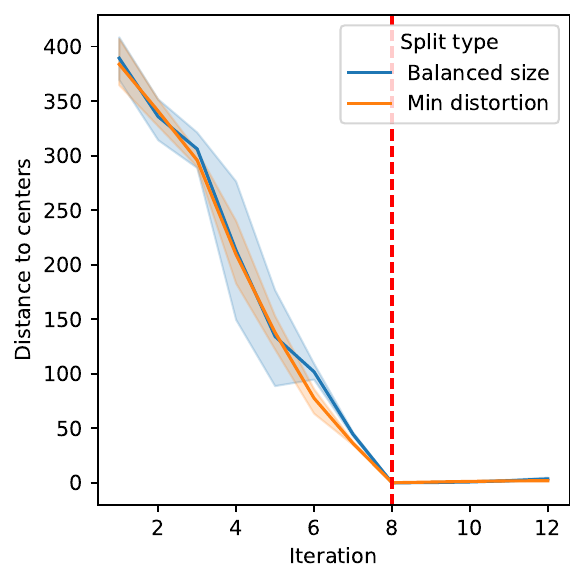}
\caption{Components are Mallows models of $n=50$. $k=4$ in (a) and (b), $k=8$ in (c) and (d). Also, (a) and (c) are more concentrtaed than (b) and (d) }
\end{figure}

\begin{figure}[h]
\centering
\includegraphics[width=.2\linewidth]{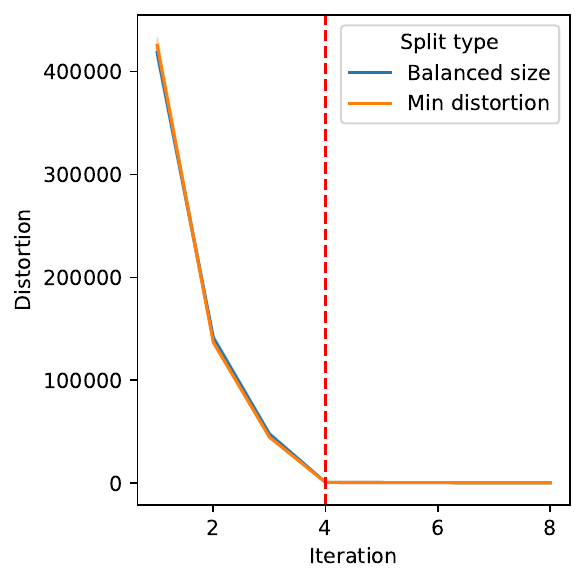}
\includegraphics[width=.2\linewidth]{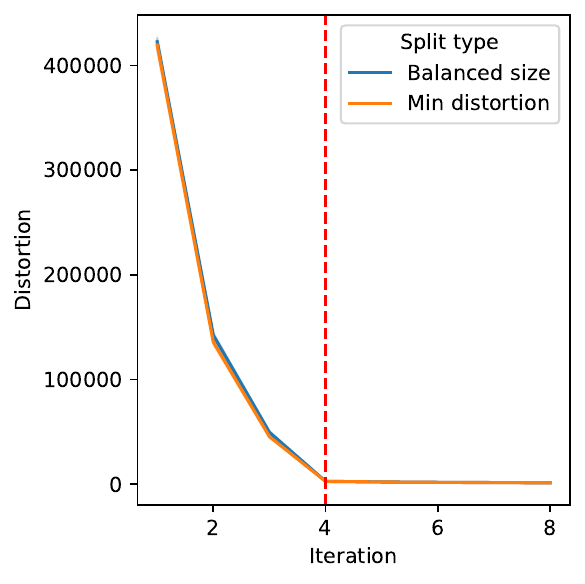}
\includegraphics[width=.2\linewidth]{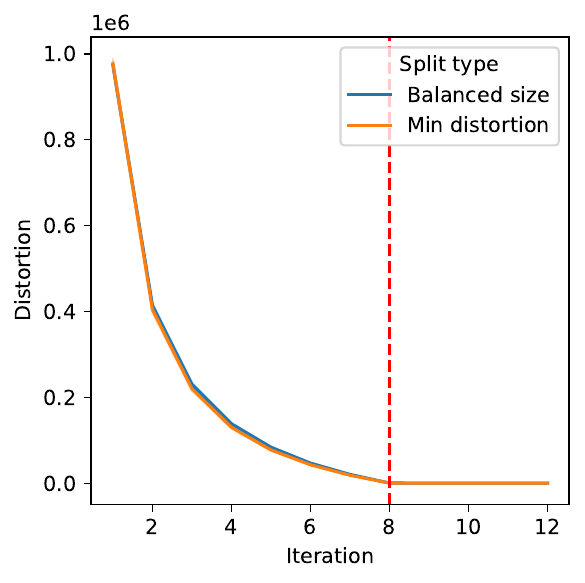}
\includegraphics[width=.2\linewidth]{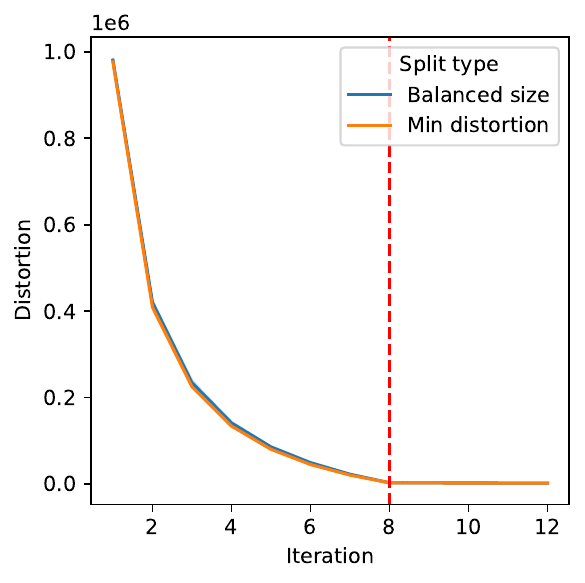}\\
\includegraphics[width=.2\linewidth]{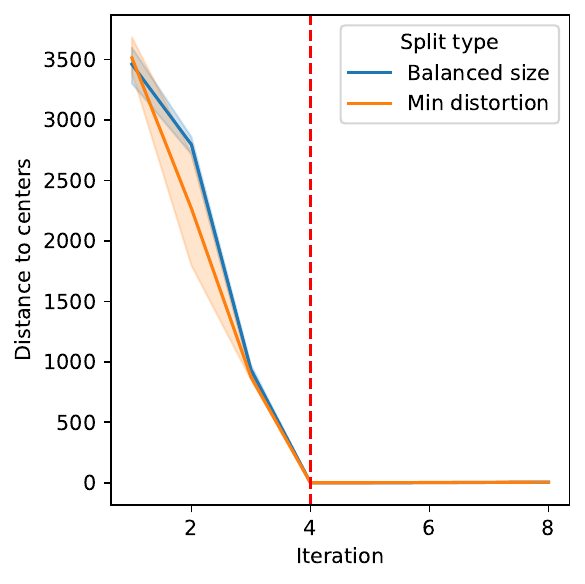}
\includegraphics[width=.2\linewidth]{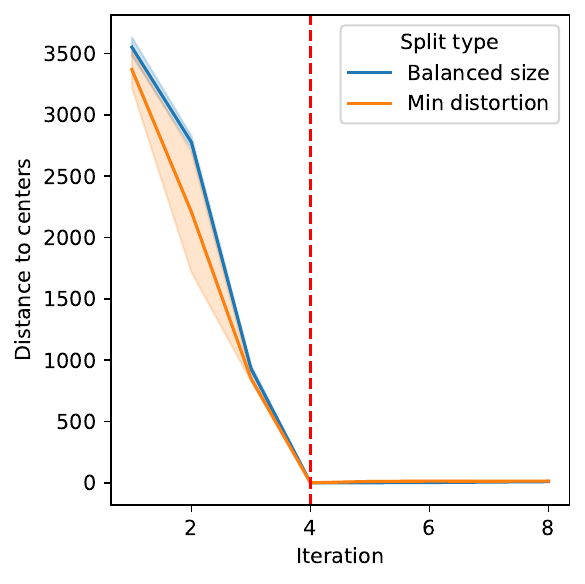}
\includegraphics[width=.2\linewidth]{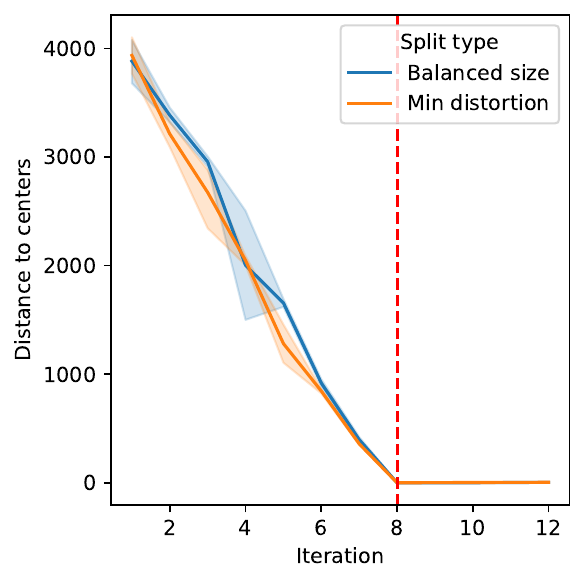}
\includegraphics[width=.2\linewidth]{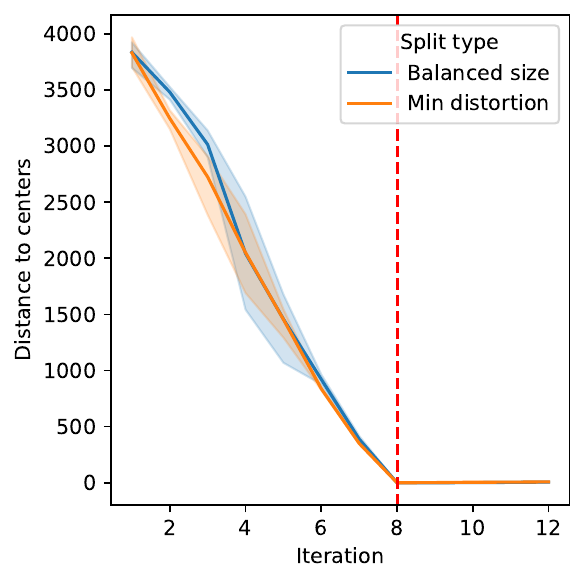}

\caption{Components are Mallows models of $n=50$. $k=4$ in (a) and (b), $k=8$ in (c) and (d). Also, (a) and (c) are more concentrated than (b) and (d) }
\end{figure}

\begin{figure}[h]
\centering
\includegraphics[width=.2\linewidth]{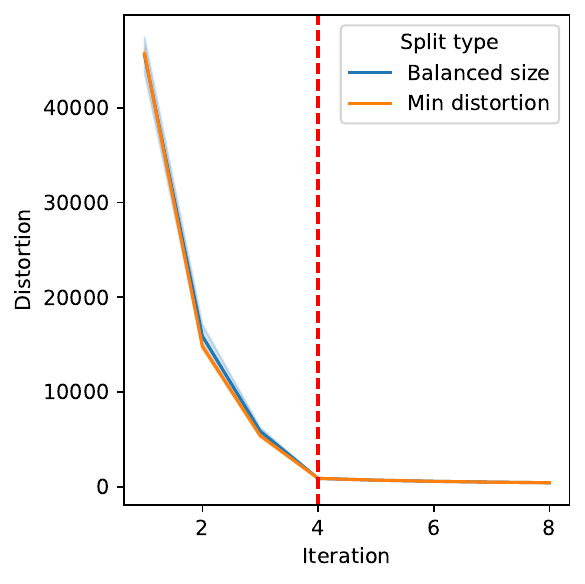}
\includegraphics[width=.2\linewidth]{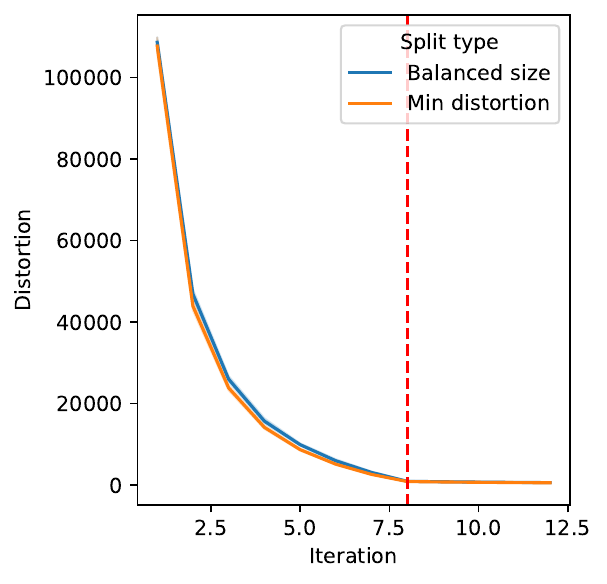}
\includegraphics[width=.2\linewidth]{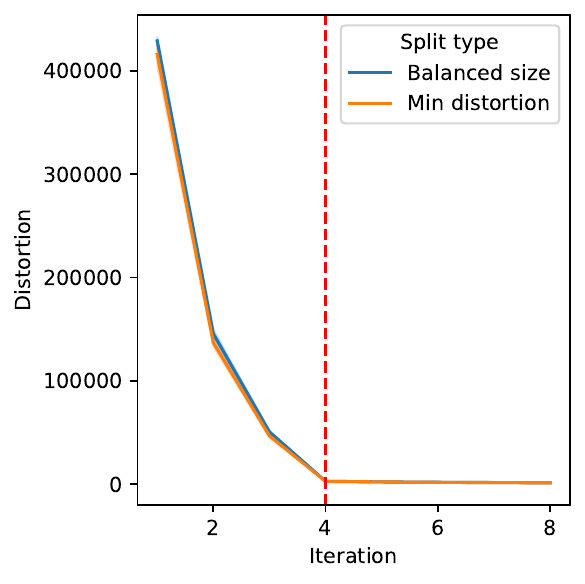}
\includegraphics[width=.2\linewidth]{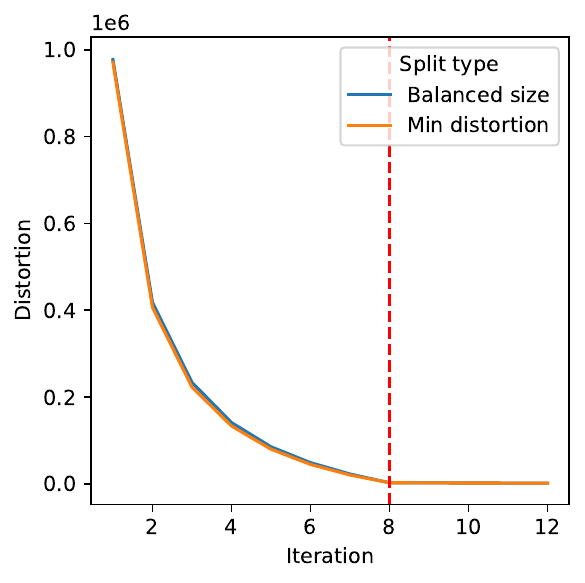}\\
\includegraphics[width=.2\linewidth]{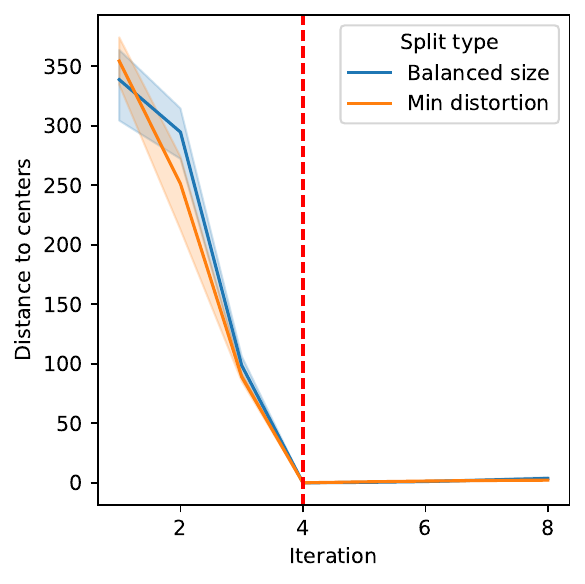}
\includegraphics[width=.2\linewidth]{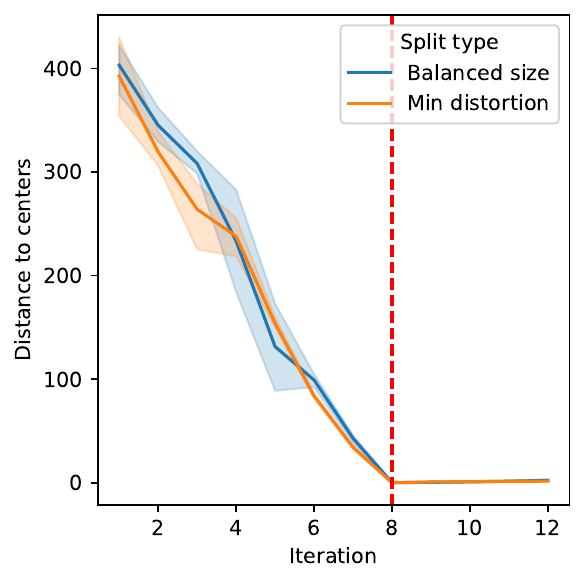}
\includegraphics[width=.2\linewidth]{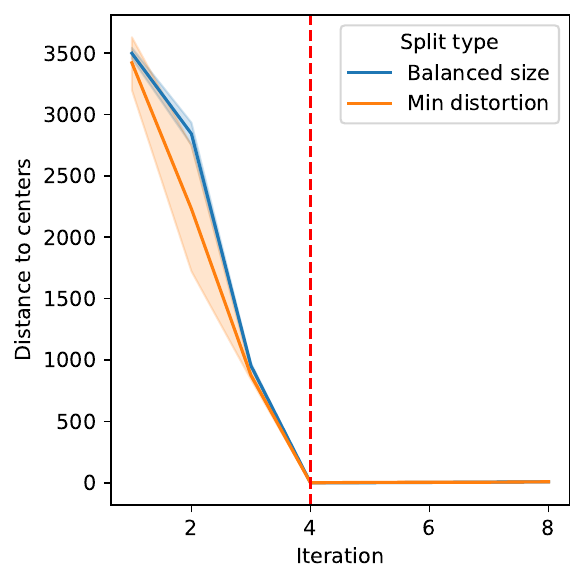}
\includegraphics[width=.2\linewidth]{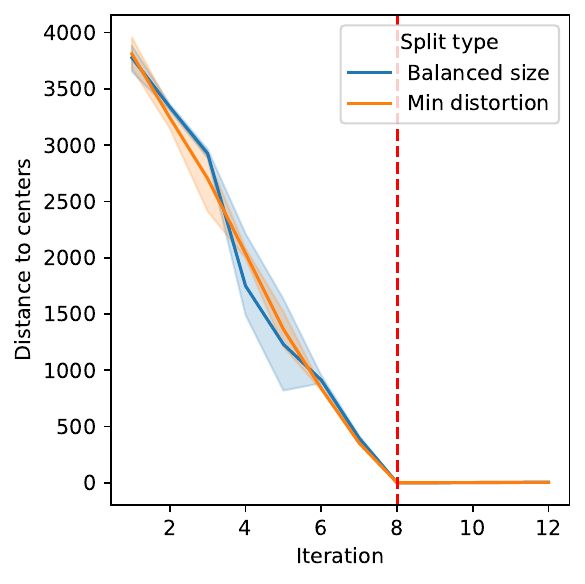}

\end{figure}

\begin{table}[h]
\centering
\begin{tabular}{c c c c}
\hline
$n$ & $k$ & Balanced size  & Min distortion \\
\hline
10  & 4 & 0.0826   & 0.0923 \\
50  & 4 & 0.5197   & 3.5002 \\
100 & 4 & 1.7686   & 48.4428 \\
150 & 4  & 4.372  & 168.955 \\
\hline
10  & 8 & 0.2424   & 0.3229 \\
50  & 8 & 1.4242   & 13.9436 \\
100 & 8 & 4.6943   & 161.861 \\
150 & 8 &    15.022  & 1094.133 \\
\hline
\end{tabular}
\end{table}

\subsubsection{Recovering the partitions in very noisy settings}

In the main paper, we present the primary application of the COAST algorithm and study its ability to identify the components of a mixture through the evolution of the distortion across iterations, as well as through the distance between the recovered centers and the true centers of the underlying distribution, in the case where the components are equally spread. 

Here, we extend this analysis to a noisier setting with heterogeneous scales across components. We consider a sample drawn from a Mallows mixture (MM) distributions with $n=20$ and $K=4$ components and display the co-membership structure obtained after $K$ iterations of the COAST algorithm. In Figure~\ref{fig:co-members}, the first row shows a heatmap of the pairwise distance matrix $D$, where $D_{ij}$ denotes the Kendall's-$\tau$ distance between rankings $\sigma_i$ and $\sigma_j$. The second row displays the cluster co-membership matrix $\tilde D$, where $\tilde D_{ij}$ is shown in green if and only if $\sigma_i$ and $\sigma_j$ belong to the same partition $\cal C$ after $K$ iterations.

As can be observed, even under highly noisy conditions, the partitions corresponding to the original mixture components are almost perfectly recovered.

\begin{figure}[h]
\centering
\includegraphics[width=.2\linewidth]{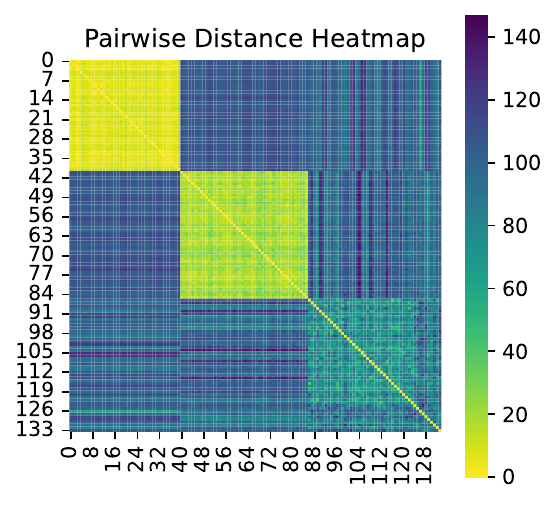}
\includegraphics[width=.2\linewidth]{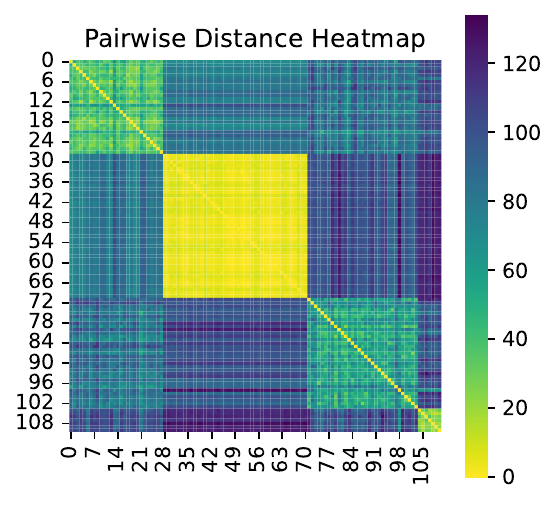}
\includegraphics[width=.2\linewidth]{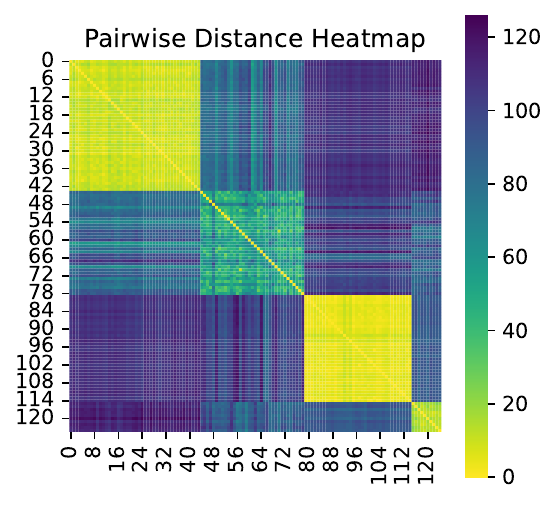}\\
\includegraphics[width=.2\linewidth]{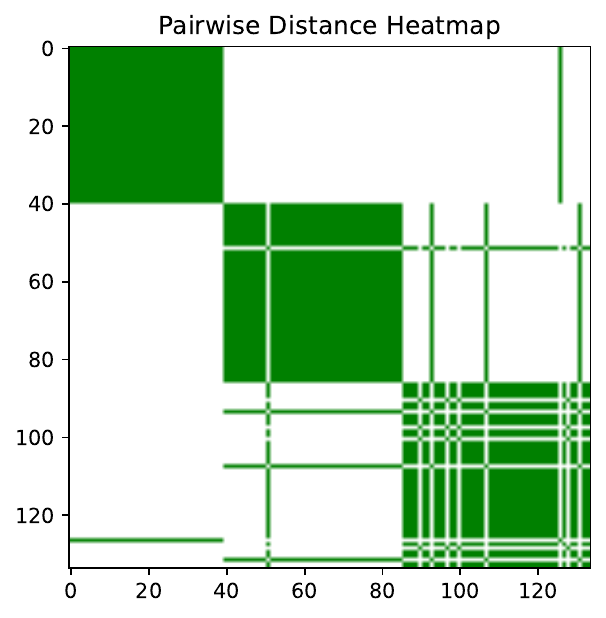}
\includegraphics[width=.2\linewidth]{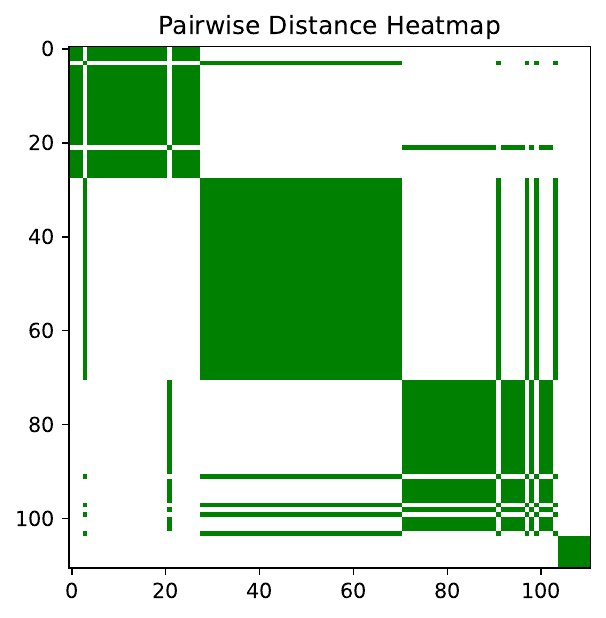}
\includegraphics[width=.2\linewidth]{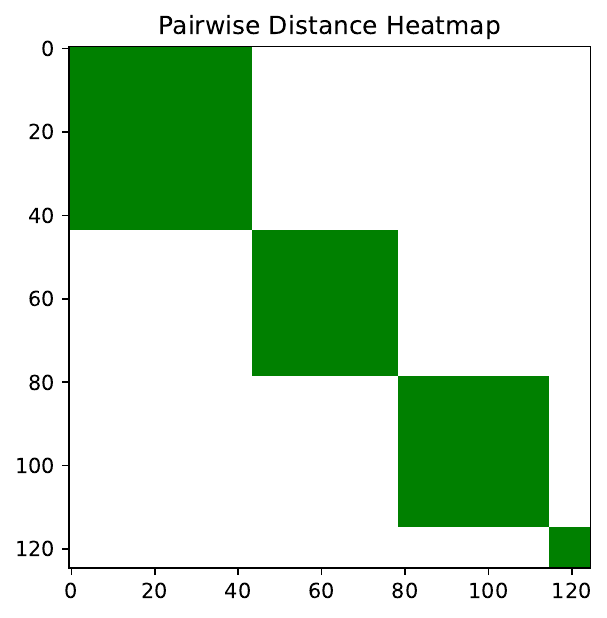}
\caption{Heatmap of the distances between pairs of rankings of noisy ranking models (Mallows model of random dispersion and $K=4$ components, top row) and co-membership matrix after 4 iterations of the COST algorithm (bottom row).}
\label{fig:co-members}
\end{figure}

\subsection{Smoothing: depths are a good approximation for ranking distributions and CRD}
In Equation~\eqref{eq:smooth}, we propose a smoothing technique for the ranking distribution within each component $ \widetilde{P}_{\mathcal{C} }(\sigma)$ based on the local depth of the rankings associated with each partition $D_{P ( {\mathcal{C} } ) } (\sigma)$. Since the expression of this local depth is trivial given that the local distribution can be approximated,  $D_{P ( {\mathcal{C} } ) } (\sigma)=  \sum_{i<j} p_{\sigma^{-1}(i),\,\sigma^{-1}(j)}(\mathcal{C})$, the main challenge in this setting lies in the computation of the normalizing constant $Z_{\mathcal C}$. 

In order to give an analytical expression for $Z_{\mathcal C}$, we mimic the derivation of the expression for the variability of a distribution $P$  written in the main paper as $V'_P = \sum_{i<j} p_{i,j}\bigl(1 - p_{i,j}\bigr)$: We recast this constant in terms of the variability induced by the uniform distribution on the partition $\mathcal C$. In particular, we write
\[
Z_{\mathcal C} = \frac{1}{2}\,\mathbb{E}\!\left[d(\Sigma,\Sigma') \big|\, \Sigma' \sim U_{\mathcal C} , \Sigma \sim P\right].
\]
This can be equivalently expressed as
\[
Z_{\mathcal C} = \sum_{i<j} p_{i,j}\bigl(1 - p'_{i,j}\bigr),
\]
where
\[
p'_{i,j}
=
\mathbb{P}\bigl\{ \Sigma(i) < \Sigma(j) \,\big|\, \Sigma \sim U_{\mathcal C} \bigr\}.
\]

The main difficulty in evaluating this expression lies in the computation of the probabilities $p'_{i,j}$. Closed-form expressions are trivial in several important cases, including $\mathcal C = S_n$, partially observed rankings \cite{himmi-etal-2024-towards}, and the degenerate case $\mathcal C = \sigma$.

In the COAST algorithm with the alternative splitting criterion, the leaves of the tree consist on the partitions $\cal C$ that consist on a sequence of splits on pairs $(i,j) \in {\cal J}$ where each item $i$ is only used once in the each $\cal C$. Let $U_{\mathcal C}$ denote the uniform distribution on the partition $\mathcal C$. Using basic enumerative combinatorics observations, it can be factorized as
\[
U_{\mathcal C} = \prod_{(i,j) \in \mathcal I(\mathcal C)} U_{i,j},
\]
where $\mathcal I(\mathcal C)$ denotes the set of unconstrained pairs induced by the partition $\mathcal C$. For each $(i,j) \in \mathcal I(\mathcal C)$, the matrix $U_{i,j}$ is defined entrywise by
\[
(U_{i,j})_{a,b}
=
\begin{cases}
1, & \text{if } a=i \text{ and } b=j, \\[0.2em]
\frac{1}{2}, & \text{if } a,b \neq i,j, \\[0.2em]
\frac{1}{3}, & \text{if } a=i \text{ and } b \neq j, \\[0.2em]
\frac{2}{3}, & \text{if } a \neq i \text{ and } b=j .
\end{cases}
\]

assuming $a<b$ and setting $(U_{i,j})_{b,a} = 1-(U_{i,j})_{a,b}$.

\subsection{Local depth and anomaly detection}
This section presents additional experiments in the same setting for anomaly detection from the main paper. In particular, we consider 20 test samples from Mallows mixtures (MM) with $n=30$ and $K=4$ (a) and $K=8$ (b), as well as Plackett-Luce mixtures (PL) with exponentially decreasing weights, $n=30$ and $K=4$ (c) and $K=8$ (d).

\begin{figure}[h]
\centering
\includegraphics[width=.2\linewidth]{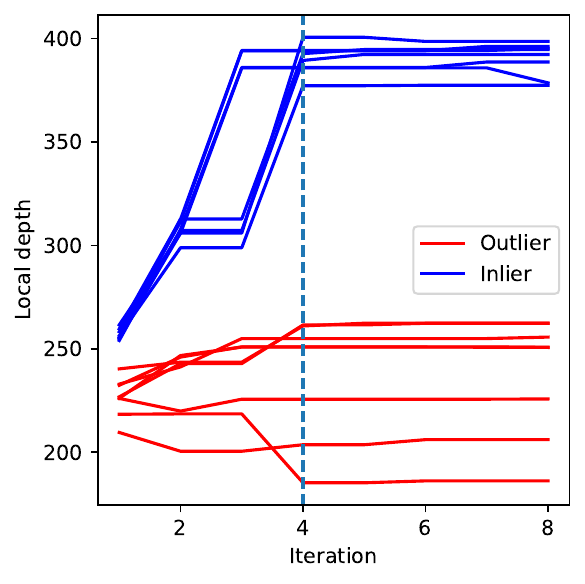}
\includegraphics[width=.2\linewidth]{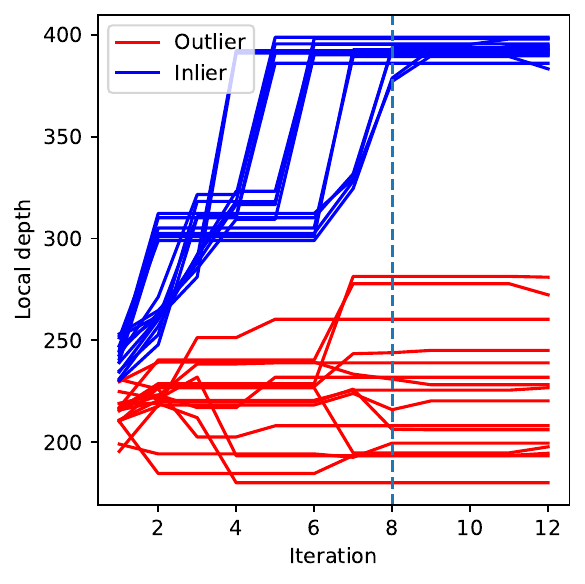}
\includegraphics[width=.2\linewidth]{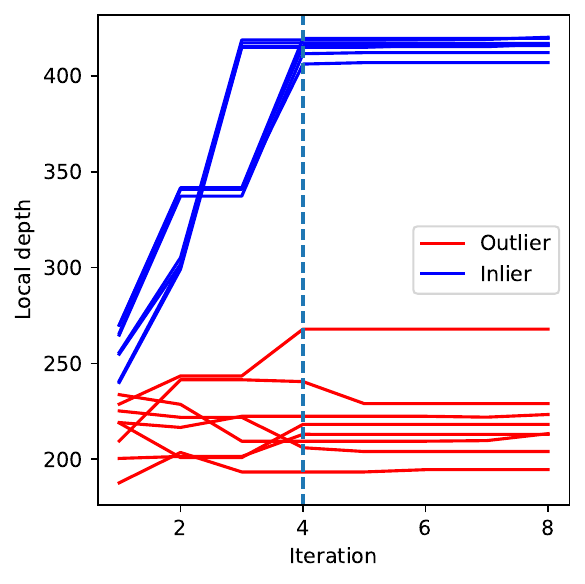}
\includegraphics[width=.2\linewidth]{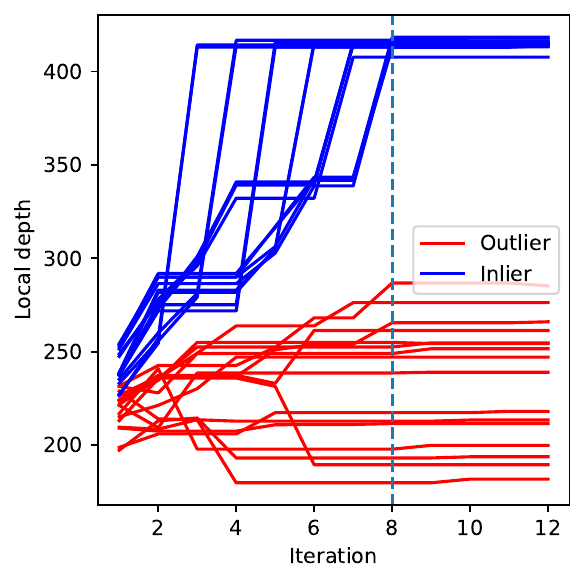}
\end{figure}

\subsection{Local medians}

In this section, we analyze the behavior of local medians when conditioning on the partition structure defined as a tree in the COAST algorithm.

We start by considering a single component that follows an SST distribution and condition it on pairs $(i,j) \in \mathcal I(\mathcal C)$ for some partition $\mathcal C$. Let $P^{(l)}$ be the distribution after conditioning $l$ times.  The next Lemma shows that there exists a sequence of partitions that make $P^{(l)}$ be SST for all $l$ provided that $P^{(0)}$ is SST. In particular, we prove it for one conditioning step, the rest follows by recursion. It follows that this sequence defines a sequence of depth-trimmed regions whose median coincides with the true median.

\begin{lemma}
    Let $P$ be SST, let $(a,b) \in \arg\max P_{i,j}$ and $\mathcal C = \{ \sigma : \sigma(a) < \sigma(b) \}$. Then $P({\cal C})$ is SST. 
\end{lemma}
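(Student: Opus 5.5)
The plan is to reduce strict stochastic transitivity of the conditional law $P_{\mathcal C}$, with $\mathcal C=\{\sigma:\sigma(a)<\sigma(b)\}$, to \emph{sign preservation} of the pairwise marginals. Suppose that for every pair $i\neq j$ we have
\[
\operatorname{sign}\bigl(p_{i,j}(\mathcal C)-1/2\bigr)=\operatorname{sign}\bigl(p_{i,j}-1/2\bigr),
\]
with both sides nonzero. Then the relation ``$p_{i,j}(\mathcal C)>1/2$'' coincides with ``$p_{i,j}>1/2$''. Since $P$ is SST, the latter is the strict total order induced by the Copeland ranking $\sigma^*_P$ in \eqref{eq:sol_SST}. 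Hence the former is transitive and never ties, which is exactly Definition~\ref{def:stoch_trans} in its strict form. As a by-product, the local median coincides with $\sigma^*_P$, consistent with the remark preceding the lemma that the median is preserved along the sequence of conditionings.

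\textbf{Step 1 (exact formula).} Write $A_{i,j}=\{\Sigma(i)<\Sigma(j)\}$. Then
\[
p_{i,j}(\mathcal C)=\frac{p_{i,j}-\mathbb P(A_{i,j}\cap A_{b,a})}{p_{a,b}}.
\]
Since $0\le \mathbb P(A_{i,j}\cap A_{b,a})\le \min\{p_{i,j},1-p_{a,b}\}$, this gives
\[
\frac{p_{i,j}-(1-p_{a,b})}{p_{a,b}}\le p_{i,j}(\mathcal C)\le \frac{p_{i,j}}{p_{a,b}}.
\]
Here $p_{a,b}\ge 1/2$ is the largest pairwise marginal, because $(a,b)$ is an argmax.

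\textbf{Step 2 (the pair $(a,b)$ and pairs sharing an index).} Trivially $p_{a,b}(\mathcal C)=1$. For pairs involving $a$ or $b$, I will use the ordering of $\sigma^*_P$ together with the maximality of $p_{a,b}$. Take $k$ with $p_{b,k}>1/2$. On $\mathcal C$, the event $\{\Sigma(b)<\Sigma(k)\}$ forces $\{\Sigma(a)<\Sigma(k)\}$, so conditioning on $\mathcal C$ can only increase $p_{a,k}$ whenever $a\prec b\prec k$ in the median order. The analogous monotonicity holds for $k\prec a$ and for $b$. For these pairs I therefore aim to show that conditioning pushes $p_{\cdot,\cdot}(\mathcal C)$ away from $1/2$ in the direction of the median. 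I would formalise this through the decomposition $p_{i,j}=p_{a,b}\,p_{i,j}(\mathcal C)+(1-p_{a,b})\,p_{i,j}(\mathcal C^c)$, showing that on $\mathcal C^c$ the bias toward the median order is weaker than on $\mathcal C$.

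\textbf{Step 3 (remaining pairs; the main obstacle).} For disjoint pairs $\{i,j\}\cap\{a,b\}=\emptyset$, the crude bound of Step 1 only preserves the sign when $p_{i,j}>1-p_{a,b}/2$. This condition can fail for near-tied pairs, and this is where the real difficulty lies. My first attempt will be to prove the covariance inequality
\[
\mathbb P(A_{i,j}\cap A_{a,b})\ \ge\ p_{i,j}\,p_{a,b}\quad\text{whenever } p_{i,j}>1/2,
\]
that is, that agreeing with the median on one pair is positively correlated with agreeing on another. This inequality gives $p_{i,j}(\mathcal C)\ge p_{i,j}>1/2$ directly. To obtain it, I would argue through the maximality of $p_{a,b}$: if the correlation were negative, I would seek a pair whose marginal exceeds $p_{a,b}$, contradicting the choice of $(a,b)$, by combining the transitivity relations among $i,j,a,b$ in $\sigma^*_P$.

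If this route does not close in full generality, the fallback is to establish the positive-correlation property under the model classes used in the experiments (Mallows and Plackett--Luce, both SST and positively associated with respect to the median). Strictness, $p_{i,j}(\mathcal C)\neq 1/2$, then follows from the strict form of the inequality, which inherits strictness from $p_{i,j}\neq 1/2$.
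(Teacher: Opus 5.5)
The reduction in your first paragraph is sound. The problem is that the property you reduce to, sign preservation $p_{i,j}(\mathcal C)>1/2\Leftrightarrow p_{i,j}>1/2$, is false for SST laws. Its two intended sources are the monotonicity claimed in Step~2 and the covariance inequality of Step~3. Both are positive-dependence properties, and neither follows from stochastic transitivity (a constraint on pairwise marginals only) or from the maximality of $p_{a,b}$.

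Here is a concrete failure. Take $n=3$, list items from first to last place, and give masses $0.19,0.10,0.31,0.25,0.10,0.05$ to $(1,2,3),(1,3,2),(3,1,2),(2,1,3),(2,3,1),(3,2,1)$. Then $p_{1,2}=0.60$ and $p_{1,3}=p_{2,3}=0.54$, so $P$ is SST with median order $1,2,3$, and $(a,b)=(1,2)$ is the unique maximiser. Yet $p_{1,3}(\mathcal C)=0.29/0.60<1/2$ and $p_{2,3}(\mathcal C)=0.19/0.60<1/2$. This is exactly the case $a\prec b\prec k$ of Step~2, where you claim $p_{a,k}$ can only increase. Also $\mathbb P(A_{1,3}\cap A_{1,2})=0.29<0.54\times 0.60$ violates your Step~3 inequality. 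The fact that $\{\Sigma(b)<\Sigma(k)\}$ forces $\{\Sigma(a)<\Sigma(k)\}$ on $\mathcal C$ says nothing about how the mass of $\{\Sigma(a)<\Sigma(k)\}$ splits between $\mathcal C$ and $\mathcal C^c$.

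Reading SST as strong stochastic transitivity, as the paper's proof does, does not rescue this. Masses $0.13,0.13,0.30,0.26,0.10,0.08$ on $(1,2,3),(1,3,2),(2,1,3),(3,1,2),(2,3,1),(3,2,1)$ give $p_{1,3}=0.56\ge \max(p_{1,2},p_{2,3})=\max(0.52,0.53)$. But $p_{1,2}(\mathcal C)=0.26/0.56<1/2$ for $\mathcal C=\{\sigma(1)<\sigma(3)\}$. Moreover $P_{\mathcal C}$ is not strongly transitive: in its order $2,1,3$ it would need $p_{2,3}(\mathcal C)\ge p_{1,3}(\mathcal C)=1$.

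In fact no argument from SST plus maximality of $p_{a,b}$ can close Step~3, because the statement itself fails under Definition~\ref{def:stoch_trans}. Take items $a,b,c,d,e$ and build $P$ as a two-component mixture:
with probability $0.6$, draw the order of $c,d,e$ from $0.3\,\mathrm{Unif}\{(c,d,e),(d,e,c),(e,c,d)\}+0.7\,\mathrm{Unif}$ and insert $a$ immediately followed by $b$;
with probability $0.4$, draw it from $0.2\,\delta_{(c,d,e)}+0.8\,\mathrm{Unif}$ and insert $b$ immediately followed by $a$.
In both components the inserted pair goes before the first, second or third item, or at the end, with probabilities $0.3,0.25,0.25,0.2$. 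Then $p_{a,b}=0.6$, $p_{c,d}=p_{d,e}=0.57$, $p_{c,e}=0.51$, and $(p_{a,c},p_{a,d},p_{a,e})=(p_{b,c},p_{b,d},p_{b,e})=(0.53,0.55,0.57)$. So $P$ is SST with order $a,b,c,d,e$ and $(a,b)$ is the unique maximiser. But $P_{\mathcal C}$ is exactly the first component, where $p_{c,d}(\mathcal C)=p_{d,e}(\mathcal C)=p_{e,c}(\mathcal C)=0.55$: a Condorcet cycle.

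The paper's proof takes a different route: row and column monotonicity of the conditional pairwise matrix via the swap $\sigma\mapsto\sigma\tau_{j,j-1}$. It hides the same hole. It invokes $\mathbb P(\sigma)\ge\mathbb P(\sigma\tau_{j,j-1})$ ``for all SST distributions'', a pointwise Mallows-type monotonicity that stochastic transitivity does not imply; both three-item laws above have $P(3,1,2)>P(1,3,2)$.

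Your fallback is therefore the right instinct: some dependence hypothesis beyond SST is unavoidable, and the lemma must be restated under it. Choose that hypothesis carefully and verify it for the specific argmax pair. Your slogan that agreeing with the median on one pair is positively correlated with agreeing on another fails even for Mallows. For $P(\sigma)\propto\phi^{d_{\tau}(\sigma,\mathrm{id})}$ with $n=3$ and $0<\phi<1$, one gets $\mathbb P(A_{1,2}\cap A_{2,3})=1/[(1+\phi)(1+\phi+\phi^2)]<p_{1,2}\,p_{2,3}=1/(1+\phi)^2$.
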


\begin{proof}
Without loss of generality, assume that $P$ is an SST distribution centered at the identity permutation.  
The SST condition means that $P_{i,k} \geq  \max (P_{i,j}, P_{j,k})$ for all $i<j, j<k$, i.e., $P$ is increasing along each row from left to right and along each column from bottom to top.

Again without loss of generality, we consider a single conditioning step.  
Let $P$ be an SST distribution and let $P(\mathcal C)$ denote the distribution obtained after conditioning on the event
$\mathcal C = \{ \sigma : \sigma(a) < \sigma(b) \}.$

We first observe that if $(a,b)$ is not an index achieving $\arg\max_{i,j} P_{i,j}$, then in general, for all $i>a$ and $j<b$, one has
\(
P(\mathcal C)_{i,j} \neq 1,
\)
and therefore the SST condition does not hold (unless the implication $\sigma(a) < \sigma(b) \Rightarrow \sigma(i) < \sigma(j)$ holds).

Next, let $(a,b)$ be such that
\[
P_{a,b} > P_{i,j} > P_{i,j-1},
\]
where the second inequality follows from the SST property. We use the notation $P(\{ a \prec b \} ) = \mathbb{P} (\Sigma | \Sigma(a)<\Sigma(b))$. 
Then, for all $i,j$, we have
\begin{align}
\begin{split}
    P(\{ a \prec b \} \cap \{ i \prec j \})
&\geq P(\{ a \prec b \} \cap \{ i \prec j-1 \}), \\
P(\{ i \prec j \mid a \prec b \}) \, P_{a,b}
&\geq P(\{ i \prec j-1 \mid a \prec b \}) \, P_{a,b}, \\
P(\{ i \prec j \mid a \prec b \})
&\geq P(\{ i \prec j-1 \mid a \prec b \}), \\
P(\mathcal C)_{i,j}
&\geq P(\mathcal C)_{i,j-1}.
\end{split}
\label{eq:sst_c}
\end{align}

The first inequality follows from this observation: for all permutation $\sigma\in \{ a \prec b \} \cap \{ i \prec j \}$ we can invert positions $j,j-1$ and the result $\sigma\tau_{j,j-1}\in \{ a \prec b \} \cap \{ i \prec j-1 \}$. We finish noting that $\mathbb{P}(\sigma) \geq \mathbb{P}(\sigma\tau_{j,j-1})$ for all SST distributions. 

Equation~\eqref{eq:sst_c} shows that $P(\mathcal C)$ is increasing along rows. A similar argument applies to columns, and therefore $P(\mathcal C)$ satisfies the SST property.
\end{proof}

\begin{remark}
     The conditional distribution $P_{\mathcal C}^{(l)}$ is not SST in general. However, even when the pairs $(i,j)$ are chosen randomly, in practice, provided that the pairs are consistent with the median, i.e., $P_{i,j}>.5$ the recovered median is a good approximation of the median, specially when the distribution is concentrated.
\end{remark}

To illustrate these points, we include a randomized rank aggregation algorithm as an example. This algorithm, presented in \cite{irurozki:hal-03972357}, is essentially a Markov chain that starts from a random ranking and moves at each step to the neighboring ranking (at distance 1) with the largest depth. 

Let us consider a sample $\{ \sigma_1, \ldots, \sigma_m\}$ drawn from a SST distribution with center $\sigma_0$ and empirical measure $\widehat P$. In the plots in Figure~\ref{fig:local_med_1_compo}, each point represents a ranking $\sigma$ in the sample, placed at coordinates $(d(\sigma, \sigma_0), D_{\hat P}(\sigma))$. The goal of the algorithm is therefore to reach the top-left-most point, corresponding to $\sigma_0$. 

The lines indicate the paths of the MCMC chain over 5 runs, which, by construction, always move "up" in depth. While the $x$-axis is not known to the algorithm in practice, as shown in \cite{irurozki:hal-03972357}, increasing depth $D_{\hat P}(\sigma)$ corresponds to decreasing in $d(\sigma, \sigma_0)$ (moving left) for all SST distributions. 

Figure~\ref{fig:local_med_1_compo}(a) illustrates the behavior of this Markovian algorithm on a distribution with a single SST component. 
Let $P^{(l)}$ be the distribution after conditioning $l$ times.  
In panel (b) we consider the partition $\cal C$ that consists on recursively conditioning 3 times, always on the pairs $(i,j) = \arg\max P_{i,j}^{(l)}$ for $l \leq 3$.

Panels (c) and (d) repeat the process 6 and 9 times respectively. Note that, or $n=6$, in at most 14 conditioning steps, the median becomes the only element in the partition $\mathcal C$. 
By construction, $D_{ P_{\cal C}}$ is SST for every $\cal C$. Therefore, and for the same proof as in \cite{irurozki:hal-03972357}, the Markovian algorithm is guaranteed to find the median $\sigma_0$. The plots confirm this fact.

\begin{figure}[h]
\centering
\includegraphics[width=.2\linewidth]{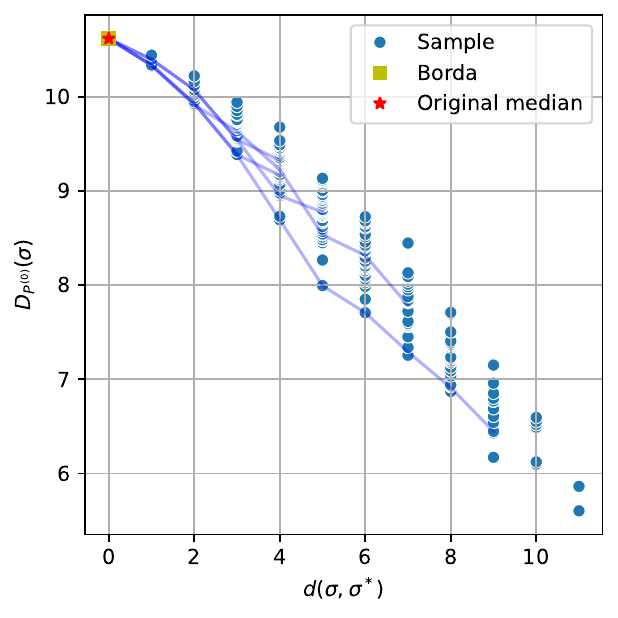}
\includegraphics[width=.2\linewidth]{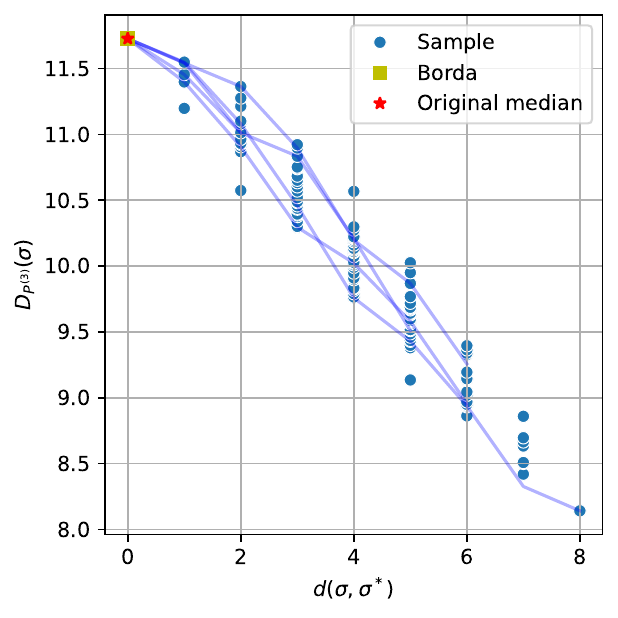}
\includegraphics[width=.2\linewidth]{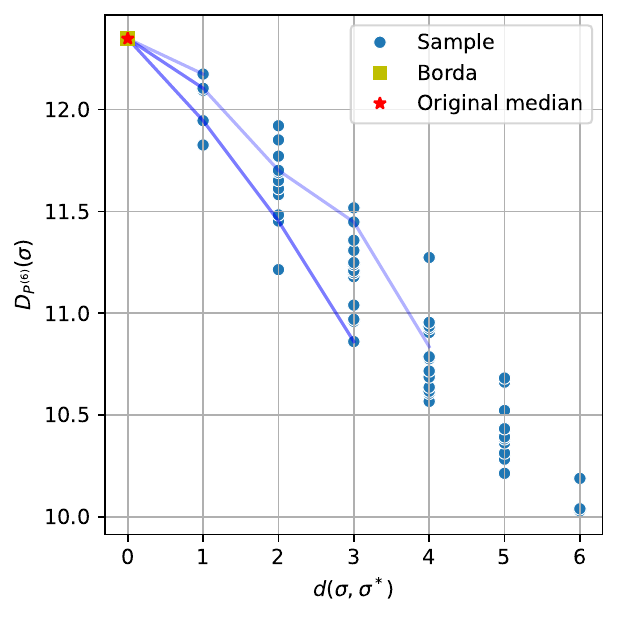}
\includegraphics[width=.2\linewidth]{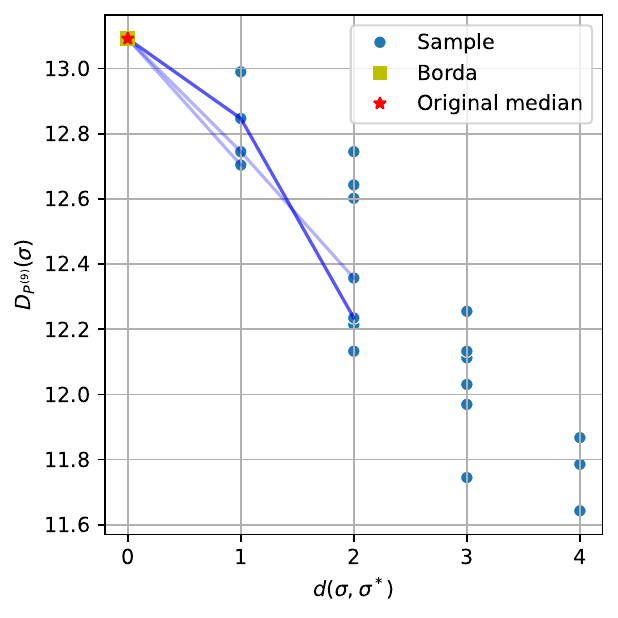}
\caption{Markovian rank-aggregation algorithm for the case of a simple SST component. The sequence of pairs $(i,j) \in \mathcal I(\mathcal C)$ is chosen such that $(i,j) = \arg\max P_{i,j}$, so that $P({\mathcal C})$ is SST at every conditioning step of the algorithm. We include the results of conditioning 0, 3, 6 and 9 times for $n=6$.}
\label{fig:local_med_1_compo}
\end{figure}

As pointed out above, when conditioning $l$ times on arbitrary pairs, the resulting sequence $P^{(l)}$ is not SST in general. Nevertheless, the algorithm still performs well in practice. As shown in \cite{irurozki:hal-03972357}, the procedure is robust: even though not all depth-increasing paths converge to the local median, the majority of them do, allowing the median to be reliably recovered. See the illustration of the CRD of $K=4$ on Figure~\ref{fig:local_med_k_comp}.

\begin{figure}[h]
\centering
\includegraphics[width=.2\linewidth]{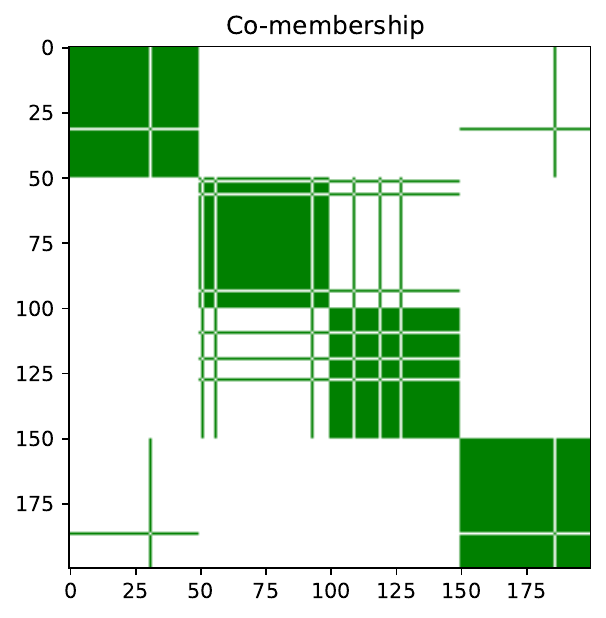}
\includegraphics[width=.2\linewidth]{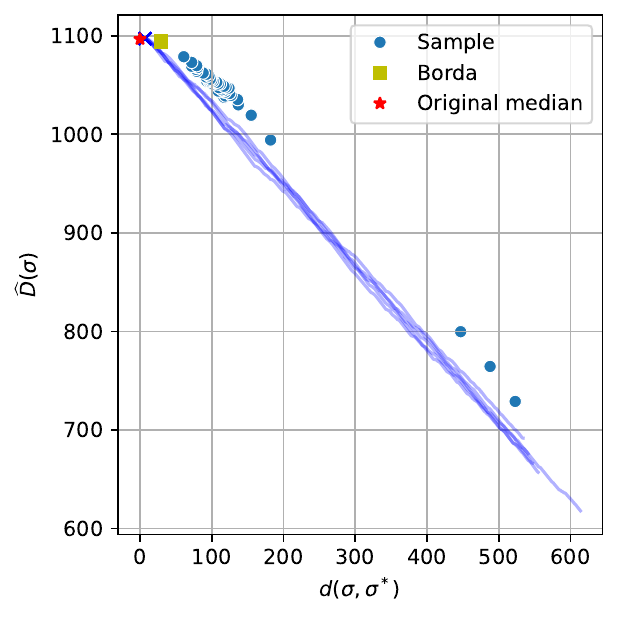}
\includegraphics[width=.2\linewidth]{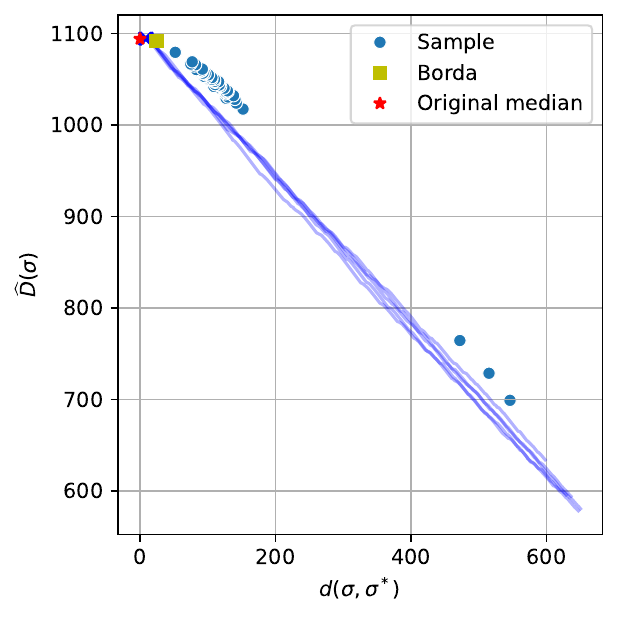}
\includegraphics[width=.2\linewidth]{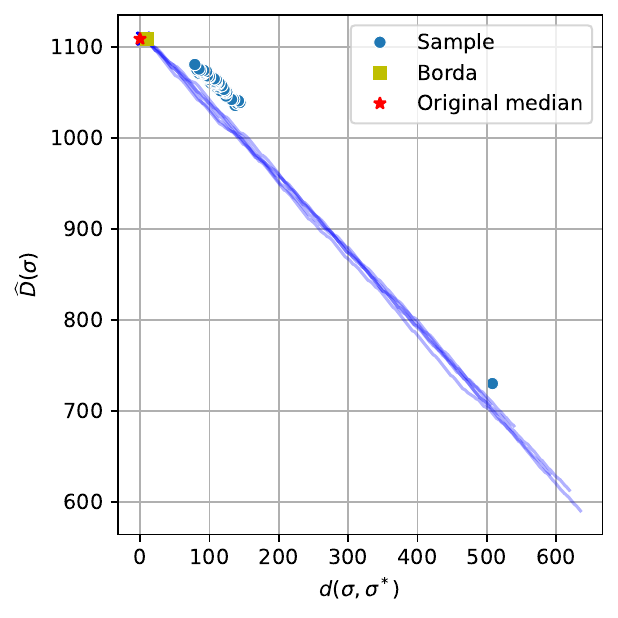}
\caption{Markovian rank-aggregation algorithm for the case of a simple SST component. The sequence of pairs $(i,j) \in \mathcal I(\mathcal C)$ is chosen such that $(i,j) = \arg\max P_{i,j}$, so that $P({\mathcal C})$ is SST at every conditioning step of the algorithm. We include the results of conditioning 0, 3, 6 and 9 times for $n=6$.}
\label{fig:local_med_k_comp}
\end{figure}

\subsection{Real-world data: Sushi dataset}
We now evaluate the interpretability and performance of COAST on real-world data, specifically the Sushi dataset \cite{sushi}. Figure~\ref{fig:sushi_dist} shows the evolution of the distortion and likelihood of the COAST algorithm over 30 iterations. After approximately 20 iterations, the distortion plateaus, indicating that most of the variability in the data has already been captured. 

This experiment compares the two splitting criteria: the exact criterion (referred to as "Min distortion") and the approximate criterion ("Balanced size"). The approximate criterion was run 10 times. Both criteria yield very similar distortion values, demonstrating that the approximation is effective. 

Furthermore, the approximate criterion enables additional analysis through the normalization constant defined in Equation~\eqref{eq:smooth}. In particular, the smoothing function can be interpreted probabilistically, allowing the use of a likelihood-based criterion, as shown in Figure~\ref{fig:sushi_dist}(b) from which the same conclusion can be confirmed.

\begin{figure}[h]
\centering
\includegraphics[width=.2\linewidth]{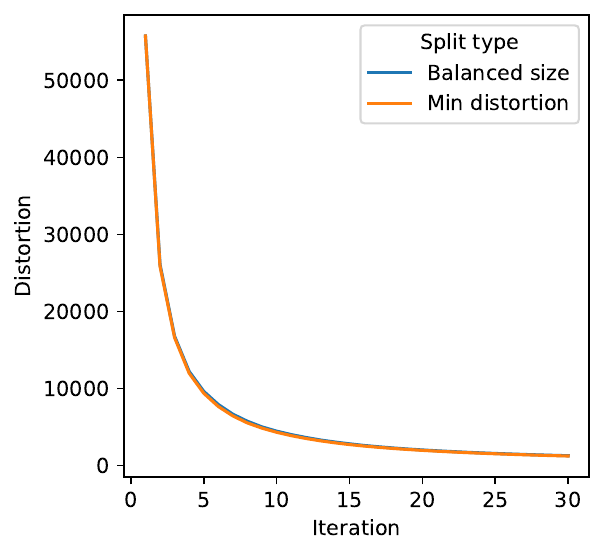}
\includegraphics[width=.2\linewidth]{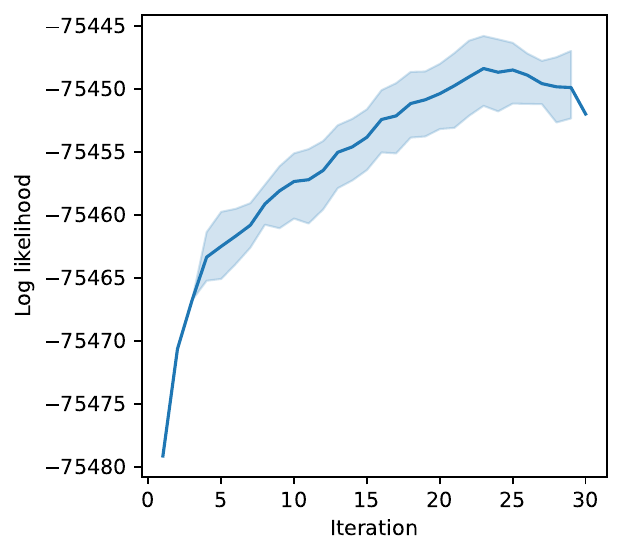}
\caption{Evolution of the distortion and likelihood of the COAST algorithm in 30 iterations}
\label{fig:sushi_dist}
\end{figure}

Moreover, the partition structure can be visualized to aid interpretation, Figure~\ref{fig:tree_sushi}. The results of the first five iterations of the algorithm, using the deterministic criterion "Min distortion," are depicted as a tree. The first partition separates the population based on sushi types 4 and 3. This indicates that individuals who rank sushi 3 before sushi 4 form a distinct subgroup, and consequently, their rankings of the remaining sushi types differ from those who prefer sushi 4 to sushi 3. For each leave ${\cal C}$ in the tree from left to right, $P({\cal C})$ is 0.13, 0.14, 0.12, 0.14, 0.12, 0.12, 0.1, 0.13, and their in-component variability $V'$ is  8.88, 
8.81, 
8.92, 
8.72, 
9.0, 
8.87, 
8.81, 
8.69.

The key takeaway is that this tool not only performs clustering but also serves as a valuable instrument for gaining insight into the underlying structure of the data and understanding the relationships between different elements in the ranking problem.

\begin{figure}[h]
\centering
\includegraphics[width=.9\linewidth]{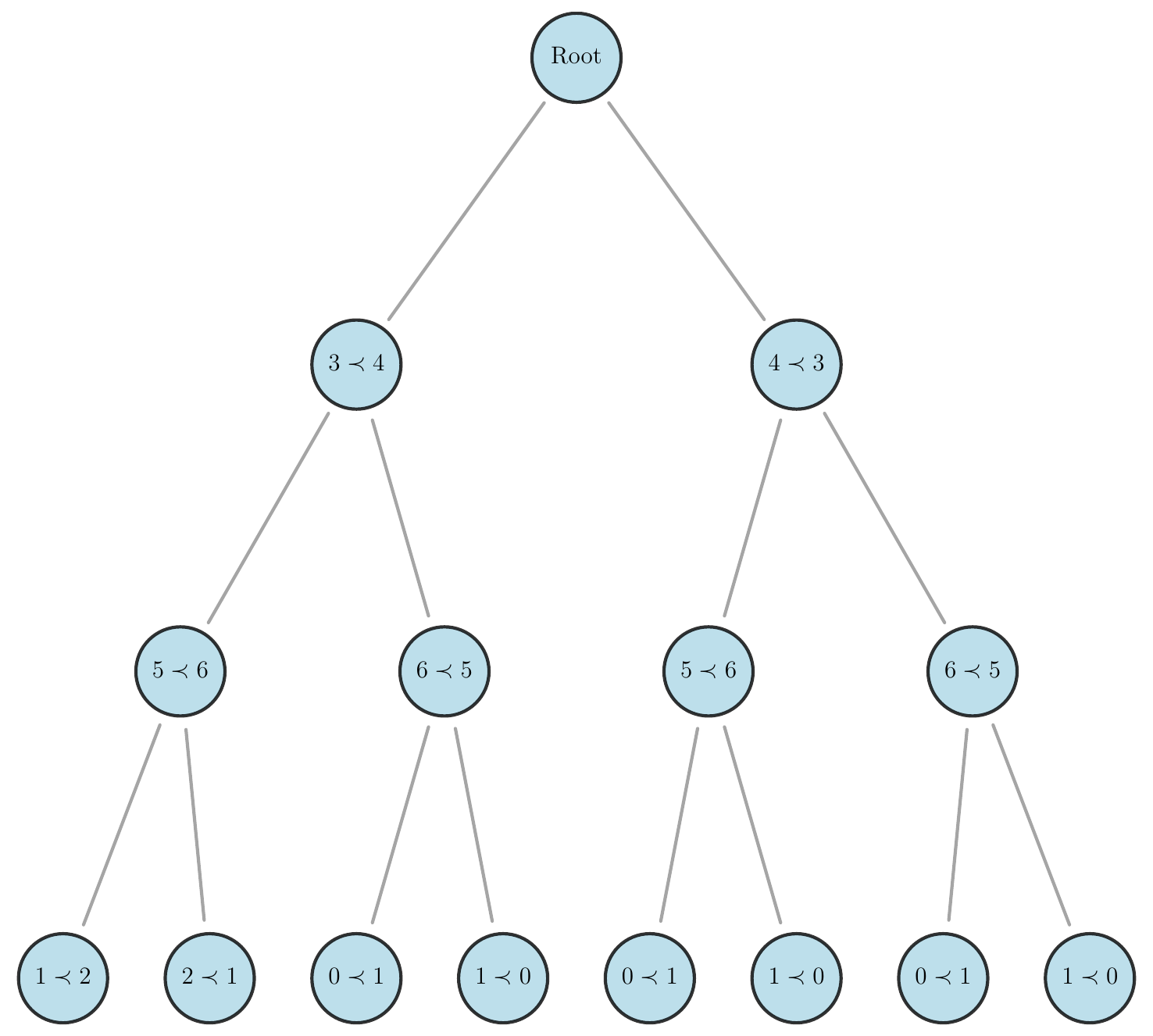}
\caption{Tree structure recovered after 8 iterations of the COAST algorithm}
\label{fig:tree_sushi}
\end{figure}

\subsection{Homogeneity testing}
Depths, both local and global, can also be employed to provide formal inference. We illustrate this through a nonparametric test of homogeneity between two Mallows model mixtures with $n=6$. Four mixtures are considered, each with varying numbers of components $K$ and scale parameters $\phi$ (see Table~\ref{tab:dist_params}). We perform the Wilcoxon rank-sum test comparing distribution 1 with each of the remaining distributions (i.e., 1 vs.\ 2, 1 vs.\ 3, and 1 vs.\ 4).

\begin{table}[h!]
\centering
\begin{tabular}{c c c }
\hline
Distribution & $\phi$ & $K$ \\
\hline
1 & 0.2 & 8 \\
2 & 0.2 & 8 \\
3 & 0.05 & 8 \\
4 & 0.2 & 4 \\
\hline
\end{tabular}
\caption{Parameters of the four distributions used in the experiment.}
\label{tab:dist_params}
\end{table}

The Wilcoxon rank-sum test is performed using a reference sample of size 100 drawn from all distributions. The results, averaged over 200 repetitions, are shown in Figure~\ref{fig:homo_test}. The p-values effectively detect differences between distributions when they exist, providing a formal inference that complements the visualization offered by ranking DD-plots. In particular, during the first iteration of the algorithm, the mixture $P$ is effectively very uniform, and the resulting samples are consistently evaluated as different. However, when the distributions differ in $K$ or the scale parameter $\phi$, the test quickly identifies these differences, producing p-values close to zero by the second or third iteration.  
Remarkably, this procedure does not rely on any parametric assumptions about the underlying ranking models.

The key takeaway is that ranking depths in a CRD can be used to construct goodness-of-fit statistics, allowing one to assess how well a particular ranking model explains a given dataset.

\begin{figure}[h]
\centering
\includegraphics[width=.5\linewidth]{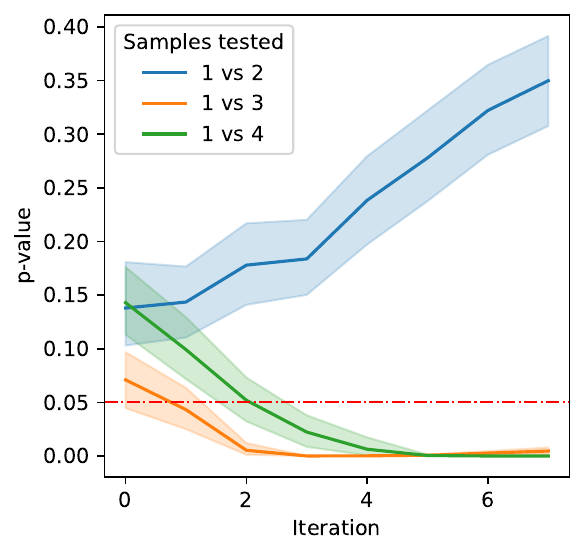}
\caption{Homogeneity testing of various configurations of mixtures}
\label{fig:homo_test}
\end{figure}


\end{document}